\documentclass[preprint,12pt]{elsarticle}

\usepackage{amsfonts,amsmath,amssymb,amsthm}
\usepackage{physics,mathtools,bm,mathrsfs}
\usepackage{xcolor,enumerate,autobreak}
\allowdisplaybreaks[4]
\usepackage{wrapfig,subfigure,multirow}
\usepackage{hyperref,prettyref}
\usepackage{natbib}
\usepackage{tikz}
\usetikzlibrary{decorations.markings}
\usetikzlibrary{arrows}

\theoremstyle{plain}
\newtheorem{theorem}{Theorem}[section]
\newtheorem{lemma}[theorem]{Lemma}
\newtheorem{corollary}[theorem]{Corollary}

\theoremstyle{definition}
\newtheorem{proposition}[theorem]{Proposition}
\newtheorem{definition}[theorem]{Definition}
\newtheorem{remark}[theorem]{Remark}
\newtheorem{example}{Example}[section]
\newtheorem{assumption}{Assumption}

\newrefformat{eq}{\textup{(\ref{#1})}}
\newrefformat{lem}{Lemma~\textup{\ref{#1}}}
\newrefformat{thm}{Theorem~\textup{\ref{#1}}}
\newrefformat{cor}{Corollary~\textup{\ref{#1}}}
\newrefformat{prop}{Proposition~\textup{\ref{#1}}}
\newrefformat{assu}{Assumption~\textup{\ref{#1}}}
\newrefformat{remark}{Remark~\textup{\ref{#1}}}
\newrefformat{example}{Example~\textup{\ref{#1}}}
\newrefformat{def}{Definition~\textup{\ref{#1}}}
\newrefformat{sec}{Section~\textup{\ref{#1}}}
\newrefformat{subsec}{Subsection~\textup{\ref{#1}}}
\newrefformat{subsubsec}{Subsection~\textup{\ref{#1}}}

\newcommand{\ep}{\varepsilon}
\newcommand{\R}{\mathbb{R}}
\newcommand{\E}{\mathbb{E}}
\newcommand{\K}{\mathbb{K}}

\newcommand{\caH}{\mathcal{H}}

\newcommand{\caN}{\mathcal{N}}

\newcommand{\caR}{\mathcal{R}}

\newcommand{\caX}{\mathcal{X}}
\newcommand{\caY}{\mathcal{Y}}

\newcommand{\bbS}{\mathbb{S}}
\newcommand{\bbZ}{\mathbb{Z}}
\newcommand{\bbN}{\mathbb{N}}
\newcommand{\bbP}{\mathbb{P}}
\newcommand{\bbC}{\mathbb{C}}
\newcommand{\bbT}{\mathbb{T}}
\newcommand{\bbB}{\mathbb{B}}

\DeclareMathOperator{\Ran}{Ran}
\DeclareMathOperator{\Ker}{Ker}

\newcommand{\mr}{\mathrm}
\providecommand{\ang}[1]{\left\langle{#1}\right\rangle}
\providecommand{\cref}{\prettyref}

\newcommand{\reg}{\varphi_\lambda}
\newcommand{\rem}{\psi_\lambda}

\long\def\rev#1{#1}

\newcommand{\hf}{\frac{1}{2}}
\newcommand{\fstar}{f^*}
\newcommand{\flam}{f^*_{\lambda}}
\newcommand{\gtl}{\tilde{g}_X}

\journal{Applied and Computational Harmonic Analysis}

\begin{document}

\begin{frontmatter}

\title{Generalization Error Curves for Analytic Spectral Algorithms under Power-law Decay}

\author[CSS]{Yicheng Li}
\ead{liyc22@mails.tsinghua.edu.cn}

\author[DMS]{Weiye Gan}
\ead{gwy22@mails.tsinghua.edu.cn}

\author[DMS,Yau]{Zuoqiang Shi}
\ead{zqshi@tsinghua.edu.cn}

\author[CSS]{Qian Lin\corref{cor1}}
\ead{qianlin@tsinghua.edu.cn}

\affiliation[CSS]{organization={
  Department of Statistics and Data Science, Tsinghua University},%
            addressline={Haidian District},
            city={Beijing},
            postcode={100084},
            state={Beijing},
            country={China}}

\affiliation[DMS]{organization={Department of Mathematical Sciences, Tsinghua University},%
            addressline={Haidian District},
            city={Beijing},
            postcode={100084},
            state={Beijing},
            country={China}}

\affiliation[Yau]{organization={Yau Mathematical Sciences Center, Tsinghua University},%
            addressline={Haidian District},
            city={Beijing},
            postcode={100084},
            state={Beijing},
            country={China}}

\cortext[cor1]{Corresponding author.}

\begin{abstract}

The generalization error curve of a kernel regression method concerns the exact order of the generalization error under various source conditions, noise levels, and choices of the regularization parameter, rather than only the minimax rate.
In this work, under mild assumptions, we rigorously characterize the generalization error curves of kernel gradient descent and, more generally, of a large class of analytic spectral algorithms in kernel regression.
Consequently, we sharpen the near-inconsistency result for kernel interpolation and clarify the saturation effects of kernel regression algorithms with higher qualification.
Motivated in part by neural tangent kernel theory, these results greatly improve our understanding of the generalization behavior of wide neural networks.
A novel technical contribution, the analytic functional argument, may also be of independent interest.

 \end{abstract}

\begin{keyword}
  reproducing kernel Hilbert space \sep
  spectral algorithm \sep
  gradient descent \sep
  generalization error curve\sep
  interpolation \sep
  saturation effect \sep
  analytic functional calculus

\end{keyword}

\end{frontmatter}

\section{Introduction}

The neural tangent kernel (NTK)
theory~\citep{jacot2018_NeuralTangent}, which shows
that kernel gradient descent well approximates over-parametrized neural networks trained by gradient descent
~\citep{jacot2018_NeuralTangent,allen-zhu2019_ConvergenceTheory,lee2019_WideNeurala},
provides a natural surrogate for understanding the generalization behavior of neural networks in certain circumstances.
This surrogate has led to a recent renaissance in the study of kernel methods.
For example, one may ask whether overfitting could harm generalization~\citep{bartlett2020_BenignOverfitting},
how the smoothness of the underlying regression function would affect the generalization error~\citep{li2023_SaturationEffect},
or how one can determine a lower bound on the generalization error for a specific function.
All these problems can be answered by the \textit{generalization error curve}
which aims to determine the exact generalization error of a certain kernel regression method with respect to
the kernel, the regression function, the noise level and the choice of the regularization parameter.
It is clear that such a generalization error curve would provide a comprehensive picture of the generalization ability of the corresponding kernel regression method
~\citep{bordelon2020_SpectrumDependent,cui2021_GeneralizationError,li2023_AsymptoticLearning}.

Although there has been extensive work on the generalization errors of kernel regression,
most of them focused on the optimal rate of convergence under the minimax framework.
For example, \citet{caponnetto2007_OptimalRates} showed that, with a proper choice of the regularization parameter,
kernel ridge regression (KRR) can achieve the minimax optimal rate of convergence.
Being a special case, KRR falls into a large class of kernel methods often referred to as \textit{spectral algorithms}~\citep{rosasco2005_SpectralMethods,gerfo2008_SpectralAlgorithms}.
For general spectral algorithms, subsequent works (e.g., \citet{blanchard2018_OptimalRates,lin2018_OptimalRatesa}) proved similar optimality results.
We refer the reader to \cref{subsec:related_works} for more details.
However, these works are not sufficient to answer the aforementioned problems motivated by recent studies of neural networks, since they only considered the method-dependent upper bound and the method-independent minimax lower bound of the generalization error.
In addition, most of them focused mainly on the rate of convergence and ignored the constant factors.%
Therefore, these traditional results are not enough to provide a comprehensive picture of the generalization error of kernel methods.
 
Going beyond the traditional results, several recent works have attempted to describe the generalization error curve of kernel ridge regression (KRR).
With some heuristic arguments,
\citet{bordelon2020_SpectrumDependent,cui2021_GeneralizationError} derived the generalization error curve of KRR under certain restrictive assumptions.
Under mild assumptions, \citet{li2023_AsymptoticLearning} first rigorously characterized the generalization error curve of KRR in terms of asymptotic convergence rate.
They showed an exact U-shaped bias-variance trade-off for the generalization error of KRR with respect to the choice of the regularization parameter.
Since neural networks are often trained by gradient descent,
it is of great interest to further study the generalization error curves of kernel gradient descent.
To the best of our knowledge, little attention has been paid to this aspect.

In this paper, we study the generalization error curves of a large class of analytic spectral algorithms, including the kernel gradient method.
To be precise, let $\rho$ be a probability distribution on $\caX \times \R$ and $\fstar$ be the unknown regression function~\citep{andreaschristmann2008_SupportVector},
namely the conditional expectation $\fstar(x) = \E_{\rho}(y|x)$.
Given $n$ i.i.d.\ samples $\{(x_i,y_i)\}_{i=1}^n$, let $\hat{f}_\lambda$ be the estimator given by a spectral algorithm with regularization parameter $\lambda = \lambda(n) > 0$.

Then, our result shows that for $\lambda$ in a reasonable range,
\begin{align*}
    \E \left[ \norm{\hat{f}_\lambda - \fstar}_{L^2}^2 ~\Big|~ X \right]
      = (1 + o_{\bbP}(1))\left( \caR_{\varphi}^2(\lambda;\fstar) + \frac{\sigma^2}{n} \caN_{2,\varphi}(\lambda) \right),
\end{align*}
where the conditional expectation is taken with respect to the training sample $X = (x_1,\dots,x_n)$,
$\sigma^2$ is the variance of the noise,
$\caR_{\varphi}^2(\lambda;\fstar)$ and $\caN_{2,\varphi}(\lambda)$ are two deterministic quantities (see \cref{eq:BiasMainTerm} and \cref{eq:PhiEffDim}) corresponding to
the bias and the variance respectively, and $\varphi$ is the filter function defining the spectral algorithm (see \cref{eq:SA}).
Moreover, if $\lambda$ does not lie in the reasonable range, we also show that the generalization error is bounded below by a nearly constant quantity.
The assumptions made in this paper are also mild and are satisfied for many RKHSs and spectral algorithms.
We refer to \cref{sec:main-results} for a complete statement for our main result.

With the exact $1+o_{\bbP}(1)$ form, our result characterizes \textit{exactly and completely} the generalization error for a large class of analytic spectral algorithms.
In particular, it shows a clear U-shaped bias-variance trade-off curve, in which the bias decreases while the variance increases as the regularization strength $\lambda$ decreases,
where the optimal point corresponds to the minimax optimal rate of convergence.
The result also shows that when the regularization is too weak, an overfitted estimator cannot generalize, emphasizing the necessity of regularization.
Moreover, our result also reveals a high-order saturation effect for some specific spectral algorithms.
Our result greatly improves the understanding of the generalization behavior of spectral algorithms and wide neural networks.%

Finally, the novel application of the ``analytic functional argument'' in deriving sharp estimates for spectral algorithms might be of independent interest and worthy of further investigation.

\subsection{Related works}
\label{subsec:related_works}
\paragraph{Optimality of kernel methods}
There is a large body of work studying the optimal rates of kernel ridge regression and spectral algorithms.

The classical work~\citep{caponnetto2007_OptimalRates} proved the minimax optimality of KRR when the regression lies in the RKHS;
subsequent works~\citep{steinwart2009_OptimalRates,fischer2020_SobolevNorm,zhang2023_OptimalityMisspecified}
further extend the result to the misspecified cases when the regression function does not lie in the RKHS\@.
\citet{zhang2005_BoostingEarly} and \citet{yao2007_EarlyStopping} considered the kernel gradient method and proved consistency and fast rates of convergence respectively.
General spectral algorithms were first introduced by \citet{bauer2007_RegularizationAlgorithms}
and then studied extensively in follow-up works~\citep{gerfo2008_SpectralAlgorithms,rosasco2005_SpectralMethods},
but the eigenvalue decay (see \cref{assu:EDR}) of the kernel was not considered so the rates are not optimal.
Under certain restrictive conditions, \citet{caponnetto2006_OptimalRates} proved the minimax optimality of spectral algorithms.
More recently, a sequence of works further extended the result to more general cases (e.g., \citet{blanchard2018_OptimalRates,lin2018_OptimalRatesa}).
In addition, the very recent work~\citep{zhang2023_OptimalityMisspecifieda} showed the optimality for the misspecified cases even when the regression function is unbounded.
We also refer to Table 1 in \citet{zhang2023_OptimalityMisspecifieda} for a summary of the results.
However, as discussed above, these results only focused on the upper bounds and are not enough to provide the exact generalization error curve.

\paragraph{Recent advances in kernel ridge regression}
Focusing particularly on KRR, a recent line of work provides further results on its generalization.
Some works~\citep{rakhlin2018_ConsistencyInterpolation,buchholz2022_KernelInterpolation,beaglehole2022_KernelRidgeless,li2023_KernelInterpolation}
studied kernel ridgeless regression, which is the limiting case of KRR as the regularization goes to zero, and proved that it cannot generalize.
Using a restrictive Gaussian design assumption together with some non-rigorous arguments,
\citet{bordelon2020_SpectrumDependent,cui2021_GeneralizationError} derived the generalization error curve of KRR
and \citet{mallinar2022_BenignTempered} studied further the interpolation regime.
For rigorous results, \citet{li2023_AsymptoticLearning} proved the generalization error curve with asymptotic rates in the form of $\Theta_{\bbP}(n^{-r})$,
but the hidden constant factors are not tracked.

\paragraph{Kernel regression in the high-dimensional limit}
There is also a line of works studying the generalization of kernel regression in the high-dimensional limit when the dimension of the input space $d$ diverges with $n$.
For example, in the high-dimensional setting, \citet{liang2020_JustInterpolate} showed that kernel interpolation can generalize;
\citet{ghorbani2020_LinearizedTwolayers,ghosh2021_ThreeStages,liu2021_KernelRegression,lu2023_OptimalRate} studied the generalization of kernel ridge regression and kernel gradient method.
However, we emphasize that their results can be substantially different from ours because the setting is different.
Moreover, the high dimensionality in their setting actually makes the problem easier since the kernel can effectively be linearized and the well-established random matrix theory can be applied, which is not the case in our setting.

\section{Preliminaries}

\subsection{Reproducing kernel Hilbert space}

Let $\caX$ be a compact metric input space and $\caY \subseteq \R$ be the output space.
Let $\rho$ be the unknown probability measure supported on $\caX \times \caY$ and $\mu$ be the marginal probability measure of $\rho$ on $\caX$.
Denote by $L^2 = L^2(\caX,\dd \mu)$ the space of (complex-valued) square-integrable functions on $\caX$.
Assume that $\E_{(x,y)\sim \rho}(y^2) < \infty$, and let the conditional expectation
\begin{align}
  \fstar(x) = \E_{\rho}(y \mid x) = \int_{\caY} y \dd \rho(y|x) \in L^2
\end{align}
be the \textit{regression function}.
We fix a continuous positive definite\footnote{
  We consider complex-valued spaces here since the analytic functional argument later will be based on complex analysis.
  Then, $k$ is conjugate symmetric, $k(x,y) = \overline{k(y,x)}$, and positive definite in the sense that
  $\sum_{i,j = 1}^n \overline{c_i} c_j k(x_i,x_j) \geq 0$ for any $n \geq 1$, $x_1,\dots,x_n \in \caX$ and $c_1,\dots,c_n \in \bbC$.
} kernel $k : \caX \times \caX \to \bbC$ over $\caX$
and let $\caH$ be the (complex) separable reproducing kernel Hilbert space (RKHS) associated with $k$.
Note that we adopt the convention that inner product is linear in the second component and conjugate linear in the first component,
that is, $\ang{f,g}_{L^2} = \int_{\caX} \overline{f(x)} g(x) \dd \mu(x)$ and $\ang{k(x,\cdot),f}_{\caH} = f(x)$ for $f \in \caH$.
Since $\caX$ is compact and $k$ is continuous, we have $\sup_{x \in \caX} k(x,x) \leq \kappa^2 < \infty$.
Consequently, we have the natural inclusion $S_k : \caH \to L^2$ which is Hilbert-Schmidt~\citep{andreaschristmann2008_SupportVector,steinwart2012_MercerTheorem}.
Denote by $S_k^* : L^2 \to \caH$ the adjoint operator of $S_k$.
Then, $T = S_k S_k^* : L^2 \to L^2$ defines an integral operator
\begin{align}
  \label{eq:IntegralOperator}
  (Tf)(x)  = \int_{\mathcal{X}} \overline{k(x,x')} f(x') \dd \mu(x').
\end{align}
Moreover, it is well known~\citep{caponnetto2007_OptimalRates,steinwart2012_MercerTheorem} that $T$ is a self-adjoint, positive, trace-class operator with trace norm $\norm{T}_1 \leq \kappa^2$.
Focusing on the infinite-dimensional case where $T$ is not of finite rank, we have the spectral decomposition
\begin{align}
  \label{eq:T_decomp}
  T = \sum_{m =1}^\infty \mu_m P_{V_m},
\end{align}
where $(\mu_m)_{m \geq 1} $ is the decreasing sequence of the \textit{distinct} positive eigenvalues of $T$ and $P_{V_m}$ is the projection onto the eigenspace $V_m$ associated with $\mu_m$.
Denote by $d_m = \dim V_m$ the multiplicity of $\mu_m$.
Let us further choose an orthonormal basis $\{e_{m,l}\}_{l = 1}^{d_m}$ for each $V_m$, where each $e_{m,l}$ is the continuous representative of the corresponding $\mu$-equivalence class.
Then, $\{ e_{m,l}\}$ forms an orthonormal basis of $\overline{\Ran S_k} = (\Ker S_k^*)^{\perp} \subseteq L^2$ and
$\{\mu_m^{1/2} e_{m,l}\}$ forms an orthonormal basis of $\overline{\Ran S_k^*} = (\Ker S_k)^{\perp} = \caH$,
where we note that $S_k$ is injective since the support of $\mu$ is $\caX$.

Finally, Mercer's theorem~\citep{andreaschristmann2008_SupportVector,steinwart2012_MercerTheorem} yields that
\begin{align}
  \label{eq:Mercer}

  k(x,x')  = \sum_{m = 1}^{\infty} \mu_m \sum_{l = 1}^{d_m} \overline{e_{m,l}(x)} e_{m,l}(x') ,
\end{align}
where the summation converges absolutely and uniformly.
To align with the previous literature, we denote by $(\lambda_j)_{j \geq 1}$ the eigenvalues of $T$ counting multiplicities
and, with a little abuse of notation, denote by $e_j$ the corresponding eigenfunction.
Then, we introduce the following assumption on the eigenvalues of $T$, which is commonly considered in the previous literature
~\citep{caponnetto2007_OptimalRates,fischer2020_SobolevNorm,li2023_AsymptoticLearning}.

\begin{assumption}[Eigenvalue decay]
  \label{assu:EDR}
  There are some $\beta > 1$ and constants $c_{\mr{eig}}, C_{\mr{eig}} > 0$ such that
  \begin{align}
    \label{eq:EDR}
    c_{\mr{eig}} j^{- \beta} \leq \lambda_j \leq C_{\mr{eig}} j^{-\beta}, \quad j\geq 1,
  \end{align}
  or equivalently,

  \begin{align}
    \label{eq:EDR_}
    \#\{ i : \lambda_i \geq \lambda \} = \sum_{m : \mu_m \geq \lambda} d_m = \Theta(\lambda^{-1/\beta}), \qq{as} \lambda \to 0.
  \end{align}
  where $(\lambda_j)_{j\geq 1}$ are the eigenvalues (counting multiplicities) of the integral operator $T$
  and $\{\mu_m\}_{m\geq 1}$ are the distinct ones defined in \cref{eq:T_decomp}.
\end{assumption}

\begin{remark}

  The equivalence of \cref{eq:EDR} and \cref{eq:EDR_} is elementary and a proof can be found in \cref{prop:DescendSequenceEquiv}.
  \cref{eq:EDR_} allows us to deal with some less explicit cases.
  For example, if we have $\sum_{k=1}^m d_k \asymp m^{\gamma}$ for some $\gamma \geq 1$,
  then $\lambda_j \asymp j^{-\beta}$ is just equivalent to $\mu_m \asymp m^{-\gamma \beta}$.

\end{remark}

The eigenvalue decay rate in \cref{assu:EDR} is a common assumption in the previous literature~\citep{caponnetto2007_OptimalRates,fischer2020_SobolevNorm,li2023_AsymptoticLearning}.
As we aim to derive the exact generalization error, the lower bound part in \cref{eq:EDR} is also necessary.
\cref{assu:EDR} is satisfied by many commonly used kernels, such as the Laplacian kernel, Matérn kernels, and neural tangent kernels.
This assumption on the eigenvalues characterizes the smoothness of functions in the RKHS, and a larger $\beta$ implies greater smoothness.
We also remark that since $T$ is trace-class, $(\lambda_j)_{j \geq 1}$ is summable so the requirement $\beta > 1$ is necessary.
\cref{assu:EDR} is also closely connected to the effective dimension or capacity condition of the RKHS in the previous literature~\citep{caponnetto2007_OptimalRates}.
Later, with a spectral algorithm $\varphi$, we will introduce generalized $\varphi$-effective dimension that characterizes the variance of the method,
see \cref{eq:PhiEffDim}.

\paragraph{Interpolation spaces}
We need to further introduce the interpolation spaces~\citep{steinwart2012_MercerTheorem} to state our results.
For $p \geq 0$, we define the fractional power $T^p : L^2 \to L^2$ by
\begin{align}
  T^p =  \sum_{m =1}^\infty \mu_m^p P_{V_m} = \sum_{m = 1}^{\infty} \mu_m^p \sum_{l = 1}^{d_m} \ang{e_{m,l},\cdot}_{L^2} e_{m,l}.
\end{align}
Then, we can introduce the interpolation space $[\caH]^s$ by
\begin{align}
  [\caH]
  ^s =
  \Ran T^{s/2} = \left\{ f = \sum_{m = 1}^{\infty} \mu_m^{s/2} \sum_{l = 1}^{d_m} a_{m,l} e_{m,l}  ~\Big|~ \sum_{m = 1}^{\infty}\sum_{l = 1}^{d_m} \abs{a_{m,l}}^2 < \infty \right\}
  \subseteq L^2.
\end{align}
For $f,g \in [\caH]^s$ with coefficients $(a_{m,l})$ and $(b_{m,l})$ respectively, we define the inner product in $[\caH]^s$ by
\begin{align}
  \label{eq:GammaNorm}
  \ang{f,g}_{[\caH]^s} = \sum_{m = 1}^{\infty}\sum_{l = 1}^{d_m} \overline{a_{m,l}} b_{m,l} = \ang{T^{-\frac{s}{2}}f,T^{-\frac{s}{2}} g}_{L^2}.
\end{align}
Then, it is easy to see that $[\caH]^s$ is a separable Hilbert space with an orthogonal basis $\{\mu_m^{s/2} e_{m,l} :~ m\geq 1,~1\leq l \leq d_m\}$.
In particular, we have $[\caH]^0 \subseteq L^2$ and also $[\caH]^1 = \caH$.
We also have natural inclusions $[\caH]^s \subseteq [\caH]^t$ for $s \geq t$ and the inclusion is compact if $s > t$.
Moreover, the restriction of $T$ (and also $T^p$) on $[\caH]^s$ is also a bounded operator with the same spectra,
so we will still denote it by $T$ (and also $T^p$) for simplicity.

\paragraph{Regular RKHS}

To derive the sharpest possible learning rate, we need to characterize the regularity of functions in the RKHS as fully as possible.
Since $V_m$ is a finite-dimensional space of $L^2$ and also $\caH$,
it is a reproducing kernel Hilbert space with respect to $\ang{\cdot,\cdot}_{L^2}$ and its reproducing kernel $k_m$ is determined uniquely by
\begin{align}
  \ang{k_m(x,\cdot),f}_{L^2} = \int_{\caX} \overline{k_m(x,x')} f(x') \dd \mu(x') = f(x).
\end{align}
Choosing an orthonormal basis $\{e_{m,l}\}_{l=1}^{d_m}$, we have explicitly
\begin{align}
  k_m(x,x')  = \sum_{l = 1}^{d_m} \overline{e_{m,l}(x)} e_{m,l}(x'),
\end{align}
which is invariant under the choice of basis.
It is also easy to see that
\begin{align*}
  k(x,x') = \sum_{m=1}^{\infty} \mu_m k_m(x,x').
\end{align*}
In this paper, we introduce the following condition for regular RKHSs:

\begin{assumption}[Regular RKHS]
  \label{assu:RegularRKHS}
  There is some constant $M > 0$ such that
  \begin{align}
    \label{eq:RegularRKHS}
    \sup_{x \in \caX} \sum_{m=1}^N k_m(x,x) = \sup_{x \in \caX} \sum_{m=1}^N \sum_{l=1}^{d_m} \abs{e_{m,l}(x)}^2 \leq M \sum_{m=1}^N d_m,\quad
    \forall N \geq 1,
  \end{align}
  In this case, we call such an RKHS (together with the underlying distribution $\mu$) regular.
\end{assumption}

It is immediate that if the eigenfunctions are uniformly bounded, that is, $\sup_{x \in \caX}\abs{e_{m,l}(x)}^2 \leq M$, then the RKHS is regular.
Moreover, there are broader settings in which the RKHS is regular,
so we believe that it is a rather general assumption.

\begin{example}[Shift-invariant periodic kernels]
  \label{example:Torus}
  Let $\caX = \bbT^d = [-\pi,\pi)^d$ be the $d$-dimensional torus and $\mu$ be the uniform measure on $\bbT^d$.
  Consider a shift-invariant kernel satisfying $k(x,y) = h(x-y)$,
  where $h$ is a function defined on $\bbT^d$.
  Then, it is easy to show that the Fourier basis $\{ \phi_{\bm{m}} = e^{i\ang{\bm{m},x}},~ \bm{m} \in \bbZ^d \}$
  gives an orthonormal set of eigenfunctions of $T$.
  Consequently, it is regular since the basis is uniformly bounded.
  Moreover, if the corresponding eigenvalues satisfy $\lambda_{\bm{m}} \asymp (1+\norm{\bm{m}}_2^2)^{-\alpha}$,
  then the corresponding RKHS is $\caH \cong H^{\alpha}(\bbT^d)$, the Sobolev space on $\bbT^d$,
  and also $[\caH]^s \cong H^{s \alpha}(\bbT^d)$.
\end{example}

\begin{example}[Dot-product kernel on the sphere]
  \label{example:DotSphere}
  Let $\caX = \bbS^d$ be the $d$-dimensional sphere and $\mu$ be the uniform measure on $\bbS^d$.
  Consider a dot-product kernel satisfying $k(x,y) = h(\ang{x,y})$, where $h$ is a function on $[-1,1]$.
  Then, the Funk-Hecke formula~\citep{dai2013_ApproximationTheory}
  shows that the spherical harmonics $\{ Y_{m,l}~ m\geq 1,~1\leq l \leq d_m \}$ form an orthonormal set of eigenfunctions of $T$,
  where $\{ Y_{m,l}~1\leq l \leq d_m \}$ are order-$m$ homogeneous harmonic polynomials and $d_m = \binom{m+d}{m} - \binom{m-2+d}{m-2} \asymp m^{d-1}$.
  Using the theory of spherical harmonics, we can show that this RKHS is regular, see \cref{subsubsec:DotProductSphere}.
\end{example}

\begin{example}[Dot-product kernel on the ball]
  \label{example:DotBall}
  Now, let us consider the $d$-dimensional unit ball $\caX = \bbB^d = \{x \in \R^{d+1} : \norm{x} \leq 1\}$
  and let $\mu$ be proportional to the classical weight $p(x) = (1-\norm{x}^2)^{-1/2}$.
  Consider still a dot-product kernel $k$.
  Then, an analog of the Funk-Hecke formula on the ball~\citet[Section 11]{dai2013_ApproximationTheory} shows that the space $V_m^d$ of orthogonal polynomials of degree exactly $m$
  is an eigenspace associated with the same eigenvalue of $T$
  and  $\dim V_m^d = \binom{m+d-1}{m} \asymp m^{d-1}$.
  Similar to the spherical case, we can show that this RKHS is regular, see \cref{subsubsec:DotProductBall}.
\end{example}

In previous literature~\citep{steinwart2009_OptimalRates,fischer2020_SobolevNorm,zhang2023_OptimalityMisspecified},
the following $L^\infty$-embedding property has been introduced to characterize the regularity of the RKHS\@.
We say that $\caH$ has an embedding property of order $\alpha \in (0,1]$
if $[\mathcal{H}]^\alpha$ can be continuously embedded into $L^\infty(\caX,\dd \mu)$, that is,
the operator norm
\begin{equation}
  \label{eq:EMB}
  M_\alpha \coloneqq \norm{[\mathcal{H}]^\alpha \hookrightarrow L^{\infty}(\mathcal{X},\mu)} < \infty.
\end{equation}
Then, we define the embedding index by $ \alpha_0 = \inf \left\{ \alpha : M_\alpha < \infty \right\}$.
It is obvious that $\alpha \leq 1$ because
\begin{align*}
  \sup_{x \in \caX} \abs{f(x)} = \sup_{x \in \caX}\ang{k(x,\cdot),f}_{\caH}
  \leq \norm{k(x,\cdot)}_{\caH} \norm{f}_{\caH} \leq \kappa \norm{f}_{\caH},
\end{align*}
and it is also shown in \citet[Lemma 10]{fischer2020_SobolevNorm} that $\alpha_0 \geq 1/\beta$.
To obtain sharp concentrations, previous works~\citep{zhang2023_OptimalityMisspecified,li2023_KernelInterpolation} assume that $\alpha_0 = 1/\beta$.
Here we show that this embedding index condition is satisfied by the regular RKHS\@.

\begin{proposition}
  \label{prop:EmbeddingIdx}
  Under \cref{assu:EDR} and \cref{assu:RegularRKHS}, the embedding index is $\alpha_0 = 1/\beta$.
\end{proposition}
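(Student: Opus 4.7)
The plan is to identify $M_\alpha^2$ with the supremum of an explicit Mercer-type sum, then upper-bound it via the regular RKHS assumption combined with Abel summation, and lower-bound it via integration against $\mu$.

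\textbf{Step 1 (representation of $M_\alpha$).} Writing any $f\in[\caH]^\alpha$ as $f=\sum_{m,l}a_{m,l}\mu_m^{\alpha/2}e_{m,l}$ with $\norm{f}_{[\caH]^\alpha}^2=\sum_{m,l}|a_{m,l}|^2$, I evaluate pointwise using the continuous representatives $e_{m,l}$ and apply Cauchy--Schwarz to obtain
\begin{align*}
    |f(x)|^2 \leq \norm{f}_{[\caH]^\alpha}^2 \cdot g_\alpha(x), \qquad g_\alpha(x):=\sum_{m=1}^\infty \mu_m^\alpha k_m(x,x).
\end{align*}
Choosing $a_{m,l}=\overline{\mu_m^{\alpha/2}e_{m,l}(x_0)}$ shows the bound is sharp at any $x_0$ where $g_\alpha(x_0)$ is finite, giving the identity $M_\alpha^2=\sup_{x\in\caX}g_\alpha(x)$. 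Equivalently, $g_\alpha(x)$ is the squared norm of the point-evaluation functional on $[\caH]^\alpha$.

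\textbf{Step 2 (upper bound: $\alpha_0\leq 1/\beta$).} For $\alpha>1/\beta$, I use that $k_m(x,x)\geq 0$ and $(\mu_m^\alpha)_m$ is nonincreasing to apply Abel summation to the partial sums. Setting $K_N(x)=\sum_{m=1}^N k_m(x,x)$ and $D_N=\sum_{m=1}^N d_m$, the regular RKHS hypothesis gives $K_N(x)\leq M D_N$, so
\begin{align*}
    \sum_{m=1}^N \mu_m^\alpha k_m(x,x)
    = \mu_N^\alpha K_N(x) + \sum_{m=1}^{N-1}(\mu_m^\alpha-\mu_{m+1}^\alpha)K_m(x)
    \leq M\sum_{m=1}^N \mu_m^\alpha d_m
    = M\sum_{j=1}^{D_N}\lambda_j^\alpha.
\end{align*}
Letting $N\to\infty$ and invoking \cref{assu:EDR}, which makes $\sum_j \lambda_j^\alpha\asymp\sum_j j^{-\alpha\beta}$ convergent exactly when $\alpha\beta>1$, yields $M_\alpha^2\leq M\sum_j\lambda_j^\alpha<\infty$.

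\textbf{Step 3 (lower bound: $\alpha_0\geq 1/\beta$).} For $\alpha\leq 1/\beta$, I integrate $g_\alpha$ against $\mu$. Since $\int_\caX k_m(x,x)\dd\mu(x)=\sum_{l=1}^{d_m}\norm{e_{m,l}}_{L^2}^2=d_m$ by orthonormality, monotone convergence gives $\int_\caX g_\alpha \dd\mu=\sum_m \mu_m^\alpha d_m=\sum_j\lambda_j^\alpha$, which diverges by EDR. As $\mu$ is a probability measure and $g_\alpha\geq 0$, the supremum must be infinite, hence $M_\alpha=\infty$. Combined with Step~2, this gives $\alpha_0=1/\beta$.

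\textbf{Main obstacle.} The only nontrivial move is Step~2: the regular RKHS hypothesis is stated for \emph{partial sums} $\sum_{m=1}^N k_m(x,x)$ rather than individual $k_m(x,x)$, so a direct termwise estimate fails; Abel summation is exactly what converts the partial-sum control into a weighted-sum control. Everything else is a careful but routine combination of Mercer expansion, Cauchy--Schwarz, and the eigenvalue decay.
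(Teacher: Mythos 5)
Your proof is correct, and its core step is the paper's own: converting the partial-sum control of \cref{assu:RegularRKHS} into a weighted-sum bound by Abel summation (the paper packages this as \cref{prop:SummationOrder}) and then invoking $\lambda_j\asymp j^{-\beta}$ so that $\sum_j\lambda_j^\alpha<\infty$ precisely when $\alpha>1/\beta$. The difference is that you make the two ingredients the paper imports by citation self-contained. The paper quotes \citet[Theorem 9]{fischer2020_SobolevNorm} for the identity $M_\alpha^2=\operatorname*{ess\,sup}_x\sum_m\mu_m^\alpha k_m(x,x)$, which you re-derive via Cauchy--Schwarz and the extremizing choice of coefficients; and the paper's displayed proof only establishes $M_\alpha<\infty$ for $\alpha>1/\beta$, leaving the reverse inequality $\alpha_0\ge 1/\beta$ to the citation of \citet[Lemma 10]{fischer2020_SobolevNorm} made in the preliminaries, whereas you prove it directly by integrating $g_\alpha$ against $\mu$ and noting $\int_\caX g_\alpha\,\dd\mu=\sum_j\lambda_j^\alpha=\infty$ for $\alpha\le 1/\beta$. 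What your route buys is a proof independent of external embedding theorems that handles both directions in one place; what it costs is a little measure-theoretic care in Step 1: for $\alpha<1$ an element of $[\caH]^\alpha$ need not have a continuous representative, so the pointwise identity should be read as follows: wherever $g_\alpha(x)<\infty$ the series converges absolutely and defines a representative, the embedding norm is an essential supremum with respect to $\mu$, and the sharpness direction should use finite truncations (which are continuous) together with $\supp\mu=\caX$, so that the essential supremum dominates the value at any chosen point and diverges along with $g_\alpha$. With these standard adjustments your argument is complete and slightly more informative than the paper's displayed proof.
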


Moreover, we remark that the embedding index condition only makes sense for the eigenvalues with power-law decay,
while the regular RKHS condition can also be considered for more general decays.
In fact,  the regular RKHS condition essentially considers the eigenfunctions rather than the eigenvalues.

\subsection{Spectral algorithm}

Let $Z = \{(x_i,y_i)\}_{i=1}^n \subseteq \caX \times \caY$ be a set of training samples drawn i.i.d.\ from $\rho$.
We also denote by $X = (x_1,\dots,x_n)$ the collection of sample inputs.
To introduce the spectral algorithm, we first introduce some auxiliary notation.
Denote by $k_x = k(x,\cdot) \in \caH$.
Let $K_x : \R \to \caH$ be given by $K_x y = y k_x$, whose adjoint $K_x^* : \caH \to \R$ is given by $K_x^* f = \ang{k_x,f}_{\caH} = f(x)$.
Moreover, we denote by $T_x = K_x K_x^* : \caH \to \caH$ and
\begin{align}
  T_X = \frac{1}{n}\sum_{i=1}^n T_{x_i}.
\end{align}
Here we note that since $k(x,x) \leq \kappa^2$, we have $\norm{T_X} \leq \kappa^2$.
We also define the sample basis function
\begin{align}
  \label{eq:GZ}
  \hat{g}_Z = \frac{1}{n} \sum_{i=1}^n K_{x_i} y_i = \frac{1}{n} \sum_{i=1}^n y_i k(x_i,\cdot) \in \caH.
\end{align}
Then, a spectral algorithm is obtained by applying to $\hat{g}_Z$ a ``regularized inverse'' of $T_X$ via a filter function~\citep{bauer2007_RegularizationAlgorithms}.
We first introduce the following definition of filter functions.

\begin{definition}[Filter functions]
  \label{def:filter}

  Let $\left\{ \reg : [0,\kappa^2] \to \R_{\geq 0} \mid \lambda \in (0,1) \right\}$ be a family of functions
  indexed by the regularization parameter $\lambda$ and define the remainder function
  \begin{align}
    \label{eq:def_Psi}
    \rem(z) \coloneqq 1 - z \reg(z).
  \end{align}
  We say that $\left\{ \reg \mid \lambda \in (0,1)  \right\}$ (or simply $\reg(z)$) is a filter function if:
  \begin{enumerate}[(i)]
    \item For any fixed $\lambda$, $\rem(z) \geq 0$ is decreasing with respect to $z \in [0,\kappa^2]$.
    For any fixed $z$, $\rem(z)$ decreases as $\lambda$ decreases.
    \item There is some constant $E$ such that
    \begin{align}
      \label{eq:Filter_Reg}
      \sup_{z \in [0,\kappa^2]} (z+\lambda) \reg(z) \leq E, \quad \forall \lambda \in (0,1).

    \end{align}
    \item
    The \textit{qualification} of this filter function is $\tau_{\max} \in [1,\infty]$ such that $\forall 0 \leq \tau \leq \tau_{\max}$ (and also $\tau <\infty$),
    \begin{align}
      \label{eq:Filter_Rem}
      \sup_{z \in [0,\kappa^2]} z^\tau \rem(z)  \leq F_{\tau} \lambda^{\tau},\quad \forall \lambda \in (0,1),
    \end{align}
    where $F_\tau$ is a constant depending only on $\tau$.

    \item

    In addition, if $\tau_{\max}$ is finite, then there is some constant $\underline{F}$ that
    \begin{align*}
      \rem(z) \geq \underline{F} \lambda^{\tau_{\max}},\quad \forall z \in [0,\kappa^2],~\lambda \in (0,1).
    \end{align*}
  \end{enumerate}
\end{definition}

Now, given a filter function $\reg(z)$, a spectral algorithm is defined by
\begin{align}
  \label{eq:SA}
  \hat{f}_{\lambda} = \reg(T_X) \hat{g}_Z \quad \in \caH.
\end{align}

We postpone concrete examples of spectral algorithms to the end of this subsection.

\begin{remark}
  We remark that properties (i) and (iv) are not essential in the definition of filter functions in the literature~\citep{bauer2007_RegularizationAlgorithms,gerfo2008_SpectralAlgorithms,rosasco2005_SpectralMethods},
  but we introduce them to avoid some unnecessary technicalities in the proof.
  In particular, property (i) describes the general behavior of a spectral filter

  and property (iv) determines exactly the qualification of the filter function.

  We also remark that the regularization parameter $\lambda$ will depend on the sample size $n$, with $\lambda(n)$ going to zero as $n$ tends to infinity,
  so the upper bound of $\lambda \in (0,1)$ is not essential and can be smaller if necessary.
\end{remark}

To evaluate the performance of the spectral algorithm $\hat{f}_\lambda$, we consider the generalization error (or excess risk)~\citep{andreaschristmann2008_SupportVector} defined by
\begin{align}
  \norm{\hat{f}_\lambda - \fstar}_{L^2}^2 = \E_{x\sim \mu} \left(\hat{f}_\lambda(x) - \fstar(x)\right)^2.
\end{align}
Moreover, it is more convenient to consider its conditional expectation with respect to $X$, namely $\E \left[ \norm{\hat{f}_\lambda - \fstar}_{L^2}^2 \mid X \right] $, which is still a random variable depending on $X$.
This conditional quantity allows us to derive the bias-variance decomposition in the most natural and precise way.
If the noise $\epsilon = y - \fstar(x)$ is independent of $x$, this conditional expectation is just the expectation over the noise.
\rev{
  Passing the conditional expectation to the unconditional expectation $\E \norm{\hat{f}_\lambda - \fstar}_{L^2}^2$ only requires taking the expectation over $X$ afterward,
  while high-probability bounds on \( \norm{\hat{f}_\lambda - \fstar}_{L^2}^2 \) can also be established by applying standard concentration inequalities to the noise terms.
  Therefore, we will focus on the conditional quantity throughout the paper to streamline the presentation.
}

However, to derive the precise generalization error curve, the above definition of filter functions is not sufficient for our purposes.
The key novelty of our techniques is that we develop a special argument based on analytic functional calculus.
To this end, we introduce the following assumption on the analytic filter function.
As far as we know, we are the first to consider such properties of filter functions.

\begin{assumption}[Analytic filter function]
  \label{assu:Filter}
  Let
  \begin{align*}
    D_{\lambda} &= \left\{ z \in \bbC : \Re z \in [-\lambda/2,\kappa^2], ~ \abs{\Im z} \leq \Re z + \lambda/2 \right\} \\
    & \quad \cup \left\{ z \in \bbC : \abs{z - \kappa^2} \leq \kappa^2 + \lambda/2,~ \Re z \geq \kappa^2  \right\}.
  \end{align*}

  The filter function $\reg(z)$ can be extended to an analytic function on some domain containing $D_\lambda$
  and the following conditions hold for all $\lambda \in (0,1)$:
  \begin{enumerate}
    \item[(C1)] $\abs{(z+\lambda)\reg(z)} \leq \tilde{E},~ \forall z \in D_\lambda$;
    \item[(C2)] $\abs{(z+\lambda)\rem(z)} \leq \tilde{F} \lambda,~ \forall z \in D_\lambda$;

  \end{enumerate}
  where  $\tilde{E}, \tilde{F}$ are positive constants.

\end{assumption}

\begin{remark}
  \label{remark:Filter}
  The domain $D_\lambda$ is essential and is related to the analytic functional argument in the proof,
  see also \cref{fig:Contour} for an illustration.
  The two conditions (C1) and (C2) can be seen as the complex extension of
  \cref{eq:Filter_Reg} and \cref{eq:Filter_Rem} respectively,
  so one can expect that they also hold for the filter functions.
  Indeed, we will show below that many commonly used filter functions satisfy \cref{assu:Filter}.
  The proof is postponed to \cref{subsec:FilterFunction}.
\end{remark}

\begin{figure}[htb]
  \centering
  \subfigure{
    \begin{minipage}[t]{0.5\linewidth}
      \centering
      \includegraphics[width=1\linewidth]{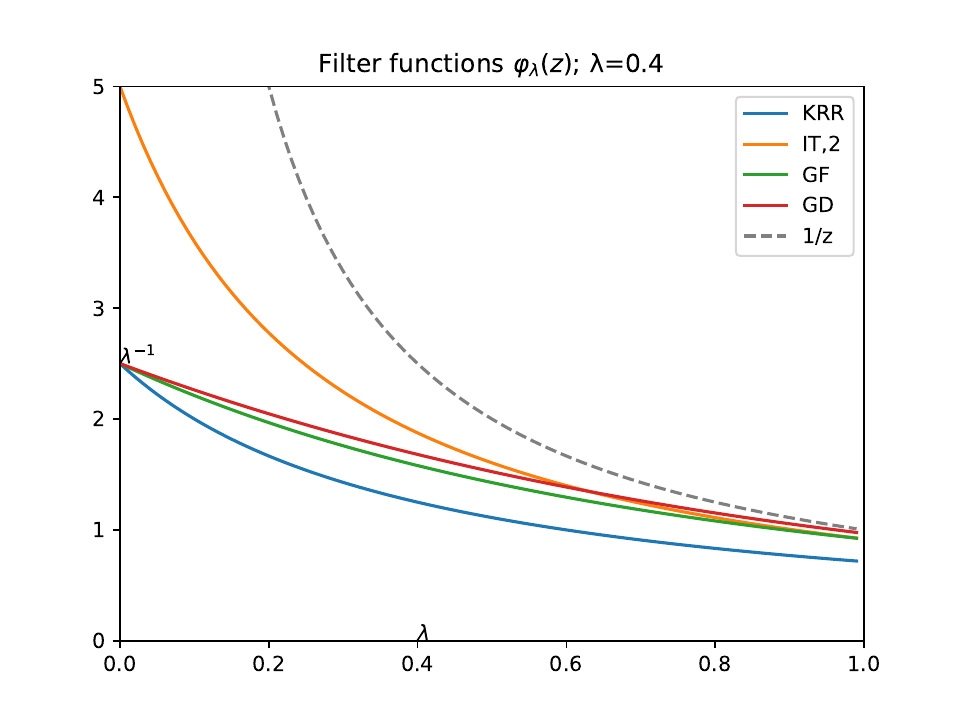}
    \end{minipage}%
  }%
  \subfigure{
    \begin{minipage}[t]{0.5\linewidth}
      \centering
      \includegraphics[width=1\linewidth]{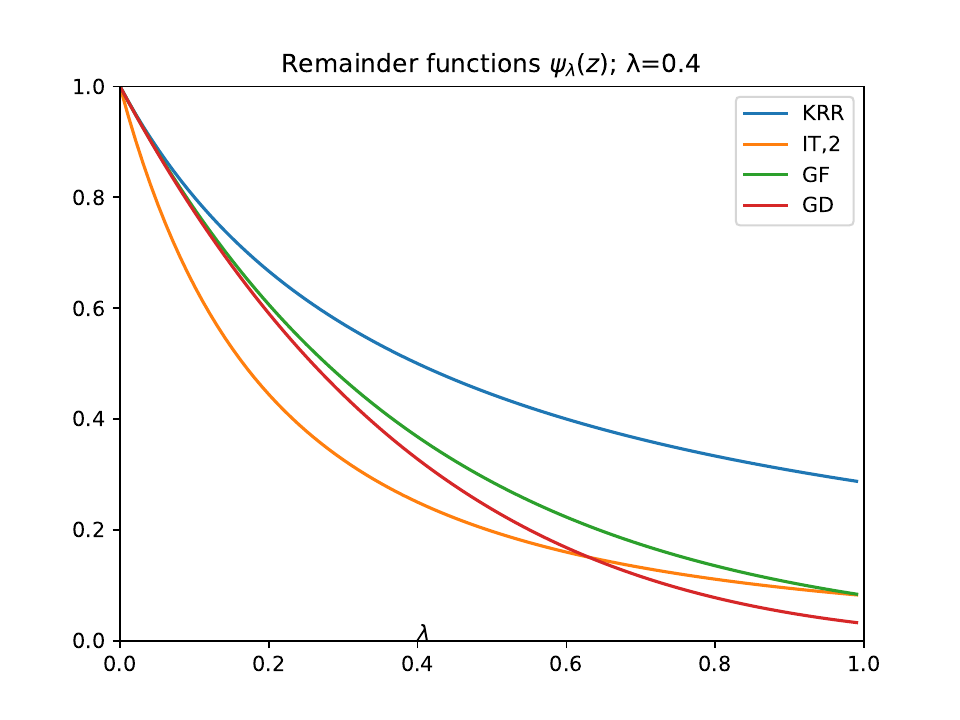}
    \end{minipage}%
  }%

  \centering
  \caption{An illustration of the filter functions $\reg$ and $\rem$.
  }

  \label{fig:Filters}
\end{figure}

\begin{example}[Kernel ridge regression]
  \label{example:KRR}
  The filter function of kernel ridge regression (KRR) is well known to be
  \begin{align}
    \reg^{\mathrm{KR}}(z) = \frac{1}{z+\lambda},\quad \rem^{\mathrm{KR}}(z) = \frac{\lambda}{z+\lambda}.
  \end{align}
  Both $\reg$ and $\rem$ are analytic on $\left\{ z \in \bbC : \Re z > -\lambda \right\} \supset D_\lambda$.
  This filter function is of qualification only $\tau_{\max} = 1$.
\end{example}

\begin{example}[Iterated ridge regression]
  \label{example:IteratedRidge}
  To overcome the limited qualification of KRR, \citet{rosasco2005_SpectralMethods} introduced the following iterated ridge (or iterated Tikhonov) method.
  Let $p \geq 1$ be fixed.
  We define
  \begin{align}
    \reg^{\mathrm{IT,p}}(z) = \frac{1}{z}\left[ 1 - \frac{\lambda^p}{(z+\lambda)^p} \right],
    \quad
    \rem^{\mathrm{IT,p}}(z) = \frac{\lambda^p}{(z+\lambda)^p}.
  \end{align}
  It is easy to show that $z = 0$ is a removable singular point of $\reg^{\mathrm{IT,p}}(z)$ and both
  $\reg^{\mathrm{IT,p}}(z)$ and $\rem^{\mathrm{IT,p}}(z)$ are analytic on $\left\{ z \in \bbC : \Re z > -\lambda \right\} \supset D_\lambda$.
  The merit of this filter function is that it has qualification $\tau_{\max} = p$.

  To understand the name ``iterated'', let us consider the particular case $p \in \bbN^*$.
  Then,
  \begin{align*}
    \reg^{\mathrm{IT,p}}(z) = \sum_{r=1}^p \lambda^{r-1} (z + \lambda)^{-r},
  \end{align*}
  and the method is obtained by iterating the ridge method $p$ times:
  \begin{align*}
    (T_X + \lambda)
    h_i = \hat{g}_Z + \lambda h_{i-1},\quad i = 1,\dots,p.

  \end{align*}
\end{example}

\begin{example}[Gradient flow]
  \label{example:GradientFlow}
  The gradient flow method~\citep{yao2007_EarlyStopping} is another popular regularization method.
  Let us consider the empirical loss
  \begin{align*}
    L(f) = \frac{1}{2n}\sum_{i=1}^n (f(x_i) - y_i)^2 = \frac{1}{2n}\sum_{i=1}^n (K_{x_i}^* f - y_i)^2,\quad f \in \caH.
  \end{align*}
  Then, with the initial value $f_0 = 0$, $\dot{f}_t = -\nabla_{f} L(f_t)$ defines a gradient flow in $\caH$.
  The gradient flow equation, which can be solved in closed form, gives the filter function
  \begin{align}
    \reg^{\mr{GF}}(z) = \frac{1-e^{-t z}}{z},\quad \rem^{\mathrm{GF}}(z) = e^{-t z}, \quad t = \lambda^{-1}.
  \end{align}
  It is also easy to show that $z = 0$ is a removable singular point of $\reg^{\mr{GF}}(z)$,
  so both $\reg^{\mr{GF}}(z)$ and $\rem^{\mr{GF}}(z)$ are analytic on the whole complex plane.
  Moreover, elementary inequalities show that the gradient flow method has qualification $\tau_{\max} = \infty$ with diverging $F_\tau = (\tau/e)^\tau$.

\end{example}

\begin{example}[Gradient descent]
  \label{example:GradientDescent}
  The gradient descent method is the discrete version of gradient flow.
  Let $\eta > 0$ be a fixed step size.
  Then, iterating gradient descent with respect to the empirical loss for $t$ steps yields the filter function
  \begin{align}
    \begin{aligned}
      \reg^{\mr{GD}}(z) &= \eta \sum_{k=0}^{t-1} (1-\eta z)^{k} = \frac{1-(1-\eta z)^t}{z},\quad \lambda = (\eta t)^{-1}, \\
      \rem^{\mr{GD}}(z) &= (1-\eta z)^t.
    \end{aligned}
  \end{align}
  Moreover, when $\eta$ is small enough, say $\eta < 1/(2\kappa^2)$, we have $\Re (1 - \eta z) > 0$ for $z \in D_\lambda$,
  so we can take the single-valued branch of $(1-\eta z)^t$ even when $t$ is not an integer.
  Therefore, we can extend the definition of the filter function so that $\lambda$ can be arbitrary and $t = (\eta \lambda)^{-1}$.
  It is also easy to show that $z = 0$ is a removable singular point of $\reg^{\mr{GD}}(z)$.
  Consequently, $\reg^{\mr{GD}}(z)$ and $\rem^{\mr{GD}}(z)$ are analytic on $D_\lambda$.
  Similar to the gradient flow method, the gradient descent method is also of qualification $\tau_{\max} = \infty$ with $F_\tau = (\tau/e)^\tau$.
\end{example}

\subsection{Notations}
We denote by $\# A$ the cardinality of a set $A$.
We use the big-O notations $O(\cdot),\Omega(\cdot), \Theta(\cdot)$, $o(\cdot)$,
as well as their probability versions $O_{\bbP}(\cdot)$, $\Omega_{\bbP}(\cdot)$, $\Theta_{\bbP}(\cdot)$ and $o_{\bbP}(\cdot)$.
Let $(\xi_n)_{n \geq 1}$ be a sequence of non-negative random variables and $(a_n)_{n \geq 1}$ a sequence of positive numbers.
We say $\xi_n = O_{\bbP}(a_n) $ if for any $\delta > 0$, there exists $N_\delta$ and $M_\delta$ such that
when $n \geq N_\delta$, $\bbP \left\{ \abs{\xi_n} \leq M_\delta a_n \right\} \geq 1-\delta$.
The notation $\Omega_{\bbP}(a_n)$ is defined similarly
and $\xi_n = \Theta_{\bbP}(a_n)$ iff $\xi_n = O_{\bbP}(a_n)$ and $\xi_n = \Omega_{\bbP}(a_n)$ both hold.
Moreover, we say $\xi_n = o_{\bbP}(a_n) $ if $\xi_n/a_n$ converges in probability to 0.
Also, we sometimes write $a_n \asymp b_n$ if $a_n = \Theta(b_n)$.

\section{Main results}
\label{sec:main-results}

\subsection{More assumptions}
\label{subsec:assumptions}
Before stating our main theorem, we introduce two assumptions.
The first assumption concerns the noise.
This assumption is quite standard and is satisfied if the noise is independent of the input $x$ and has a bounded variance.

\begin{assumption}[Noise]
  \label{assu:noise}
  We assume
  \begin{align}
    \E_{(x,y)\sim \rho} \left[ \left( y-f^{*}(x) \right)^2 \;\Big|\; x \right] = \sigma^2 > 0,
    \quad \mu\text{-a.e. } x \in \mathcal{X},
  \end{align}
\end{assumption}

The second assumption is about the regression function $\fstar$.
Recall the definition of interpolation spaces and that $\{e_{m,l} :~ m\geq 1,~1\leq l \leq d_m\}$ forms an orthogonal set in $L^2$.
Then, we first assume that the regression function admits the following expansion in the sense of $L^2$-norm:
\begin{align}
  \fstar = \sum_{m=1}^\infty \sum_{l = 1}^{d_m} f_{m,l} e_{m,l}.
\end{align}
Here we note that $\bar{f}_{m}^2 \coloneqq \sum_{l = 1}^{d_m}\abs{f_{m,l}}^2 = \norm{P_{V_m} \fstar}_{L^2}^2$ is invariant under the choice of $\{e_{m,l}\}_{l=1}^{d_m}$
which is an orthogonal basis of $V_m$.
Then, we assume that the regression function satisfies the following source condition.

\begin{assumption}[Source]
  \label{assu:Source}
  There exists some $s > 0$ such that $\fstar \in [\caH]^t$ for any $t < s$,
  $\fstar \neq 0$, and if $s < 2\tau_{\max}$,
  \begin{align}
    \sum_{m : \mu_m < \lambda} \sum_{l = 1}^{d_m} \abs{f_{m,l}}^2 =
    \sum_{m : \mu_m < \lambda} \bar{f}_{m}^2 = \Omega(\lambda^s).
  \end{align}

\end{assumption}

This assumption assumes that the regression function can be approximately described by a power-law decay with smoothness index $s$,
but it does not require that the coefficient of $\fstar$ decays exactly in a power-law manner,
which allows a wider range of regression functions to be considered.
We note that since we have to establish the exact generalization error curve, the lower bound is also necessary,
which is presented in a tail sum manner.
The following gives some examples of regression functions satisfying \cref{assu:Source},
whose proofs are deferred to \cref{subsec:SourceCondition}.

\begin{example}[Exact power-law $\fstar$]
  \label{example:source1}
  Let \cref{assu:EDR} hold and $\sum_{k=1}^m d_k \asymp m^\gamma$ for some $\gamma \geq 1$.
  Suppose $\fstar$ satisfies $\bar{f}_m \asymp m^{-(\gamma p+1)/2}$ for some $p > 1$.
  Then $\fstar$ satisfies \cref{assu:Source} for $s = \frac{p}{\beta}$.
  In particular, with a little abuse of notation, if we rearrange $(f_{m,l})$ as
  \begin{align*}
    (f_j)
    _{j \geq 1} = (f_{1,1},\dots,f_{1,d_1},f_{2,1},\dots,f_{m,1},\dots,f_{m,d_m},\dots).
  \end{align*}
  Then, this example includes the case $\abs{f_j } \asymp \lambda_j^\frac{s}{2}j^{-\hf} \asymp j^{-\frac{s\beta+1}{2}}$ considered in
  \citet{li2023_AsymptoticLearning,cui2021_GeneralizationError}.
\end{example}

We can also consider the following case that there are some gaps in the coefficients of $\fstar$.

\begin{example}
  \label{example:source2}
  In this example, we assume \cref{assu:EDR} holds

  and consider the $(f_j)_{j \geq 1}$ introduced in the previous example.
  Suppose that for some $q \geq 1, p > 1$,
  \begin{equation*}
    \begin{cases}
      \abs{f_{j(l)}} \asymp l^{-(p+1)/2}, &  l = 1,2,\dots, \\
      f_j = 0,                                     & \text{otherwise,}
    \end{cases}
  \end{equation*}
  where $j(l) \asymp l^q $.
  Then, $\fstar$ satisfies \cref{assu:Source} for $s = p/(q\beta)$.

\end{example}

\subsection{Main theorem}

Let us first introduce the two deterministic quantities that characterize the bias and the variance respectively.
We define the main bias term by
\begin{align}
  \label{eq:BiasMainTerm}
  \caR_{\varphi}^2(\lambda;\fstar) & \coloneqq \norm{\rem(T) \fstar}_{L^2}^2
  = \sum_{m=1}^\infty  \rem(\mu_m)^2 \sum_{l = 1}^{d_m} \abs{f_{m,l}}^2 = \sum_{m=1}^\infty  \rem(\mu_m)^2 \bar{f}_{m}^2.
\end{align}
For the variance term, we extend the definition of effective dimension~\citep{caponnetto2007_OptimalRates} to introduce the $\varphi$-effective dimension of order $p\geq 1$ by
\begin{align}
  \label{eq:PhiEffDim}
  \caN_{p,\varphi}(\lambda)\coloneqq \sum_{j =1}^\infty \left[ \lambda_j \reg(\lambda_j) \right]^p
  = \sum_{m=1}^\infty d_m \left[ \mu_m \reg(\mu_m) \right]^p.
\end{align}
In particular, $\varphi^{\mr{KR}}$-effective dimension of order $p=1$ is just the ordinary effective dimension considered in previous works (see \cref{eq:EffectiveDim} in the proof).

\begin{theorem}
  \label{thm:Main}
  Under Assumptions~\ref{assu:EDR},\ref{assu:RegularRKHS},\ref{assu:Filter},\ref{assu:noise} and~\ref{assu:Source},
  for any $\lambda = \lambda(n) \to 0$, we have
  \begin{itemize}
    \item If $\lambda = \Omega(n^{-\theta})$ for some $\theta < \beta$, then
    \begin{align}
      \label{eq:MainResult}
      \E \left[ \norm{\hat{f}_\lambda - \fstar}_{L^2}^2 ~\Big|~ X \right]
      = (1 + o_{\bbP}(1))\left( \caR_{\varphi}^2(\lambda;\fstar) + \frac{\sigma^2}{n} \caN_{2,\varphi}(\lambda) \right),
    \end{align}
    where $\caR_{\varphi}^2(\lambda;\fstar)$ and $\caN_{2,\varphi}(\lambda)$ are two deterministic quantities defined in
    \cref{eq:BiasMainTerm} and \cref{eq:PhiEffDim} respectively.
    \item If $\lambda = O(n^{-\beta})$, then
    \begin{align}
      \E \left[ \norm{\hat{f}_\lambda - \fstar}_{L^2}^2 ~\Big|~ X \right]
      = \Omega_{\bbP}\left( (\ln n)^{-4} \sigma^2 \right).
    \end{align}
  \end{itemize}
\end{theorem}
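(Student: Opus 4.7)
The plan is to start with a standard conditional bias--variance split and then use the analytic functional calculus made available by \cref{assu:Filter} to compare each empirical spectral quantity in $T_X$ with its population analogue in $T$. Writing $y_i = \fstar(x_i) + \epsilon_i$ and $\gtl = \tfrac{1}{n}\sum_i \fstar(x_i) k_{x_i}$, so that $\hat g_Z = \gtl + \tfrac{1}{n}\sum_i \epsilon_i k_{x_i}$, \cref{assu:noise} kills the cross term and yields
\begin{align*}
  \E\bigl[\norm{\hat f_\lambda - \fstar}_{L^2}^2 \,\big|\, X\bigr]
  = \norm{\reg(T_X)\gtl - \fstar}_{L^2}^2 + \frac{\sigma^2}{n^2}\sum_{i=1}^n \norm{\reg(T_X) k_{x_i}}_{L^2}^2.
\end{align*}
Combining \cref{assu:EDR}, \cref{assu:RegularRKHS}, and the embedding index $\alpha_0 = 1/\beta$ from \cref{prop:EmbeddingIdx}, an operator-valued Bernstein bound shows that for $\lambda \gg n^{-\beta}$ the spectrum of $T_X$ (contained in $[0,\kappa^2]$) lies strictly inside $D_\lambda$ on a high-probability event, so that
\begin{align*}
  \reg(T_X) = \frac{1}{2\pi i}\oint_{\Gamma_\lambda}\reg(z)(z-T_X)^{-1}\dd z,
  \qquad
  \rem(T_X) = \frac{1}{2\pi i}\oint_{\Gamma_\lambda}\rem(z)(z-T_X)^{-1}\dd z
\end{align*}
along $\Gamma_\lambda = \partial D_\lambda$, and similarly with $T_X$ replaced by $T$. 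The second resolvent identity $(z-T_X)^{-1} - (z-T)^{-1} = (z-T)^{-1}(T_X-T)(z-T_X)^{-1}$ then reduces comparing $f(T_X)$ to $f(T)$ (for $f \in \{\reg,\rem\}$) to contour integrals whose magnitudes are controlled by the envelopes (C1)--(C2) together with sharp concentration of $T_X-T$ in the appropriate $[\caH]^s$-norms.

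For the bias, the algebraic rearrangement
\begin{align*}
  \reg(T_X)\gtl - \fstar = -\rem(T)\fstar + \bigl(\rem(T)-\rem(T_X)\bigr)\fstar + \reg(T_X)(\gtl - T\fstar)
\end{align*}
isolates $\norm{\rem(T)\fstar}_{L^2}^2 = \caR_\varphi^2(\lambda;\fstar)$ as the leading term; the other two terms are handled by the contour comparison together with concentration of $\gtl - T\fstar$ in the regularity range provided by \cref{assu:Source}, and the matching lower bound on $\caR_\varphi^2$ uses the tail-sum lower bound in \cref{assu:Source} combined with \cref{def:filter}(i),(iv). For the variance, the contour argument replaces $\reg(T_X)$ by $\reg(T)$ to leading order, leaving
\begin{align*}
  \frac{\sigma^2}{n^2}\sum_i \norm{\reg(T) k_{x_i}}_{L^2}^2
  = \frac{\sigma^2}{n}\cdot\frac{1}{n}\sum_i \ang{k_{x_i},\, T\reg(T)^2 k_{x_i}}_{\caH},
\end{align*}
whose per-point expectation equals $\caN_{2,\varphi}(\lambda)$ by Mercer's expansion \cref{eq:Mercer}; the empirical mean concentrates to its expectation to leading order via the regular RKHS bound \cref{assu:RegularRKHS}.

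For the second regime $\lambda = O(n^{-\beta})$, the analytic functional argument breaks since the spectrum of $T_X$ may now leave $D_\lambda$; but the variance alone already gives the claim: projecting onto the top $\Theta(\ln n)$ eigenspaces of $T$, on which $T_X$ and $T$ remain comparable, and invoking \cref{assu:EDR} yields $\Omega_{\bbP}((\ln n)^{-\ep}\sigma^2)$. The principal technical obstacle throughout is to upgrade the contour argument from rate-level accuracy to $(1+o_{\bbP}(1))$ sharpness: the contour $\Gamma_\lambda$ must be chosen so that the envelopes (C1)--(C2) exactly balance the blow-up of the resolvent near the spectral edge of $T$, and the resulting operator-valued perturbation integrals must be pushed to constant-factor accuracy --- particularly in the variance, where the empirical trace $\tfrac{1}{n}\sum_i \ang{k_{x_i}, T\reg(T)^2 k_{x_i}}_{\caH}$ must match $\caN_{2,\varphi}(\lambda)$ not merely up to a constant but at the $(1+o_{\bbP}(1))$ level.
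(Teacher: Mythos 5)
Your overall route for the first bullet (conditional bias--variance split, then contour/resolvent comparison of spectral functions of $T_X$ with those of $T$, plus concentration) is the paper's route, and your variance plan in particular matches the paper's (pointwise comparison of $\norm{\reg(T_X)k_x}_{L^2}$ with $\norm{\reg(T)k_x}_{L^2}$ via the contour, then Hoeffding for the empirical mean). But the bias treatment has a genuine gap. Your displayed ``algebraic rearrangement'' is not an identity: since $\gtl=T_X\fstar$ almost surely, your right-hand side exceeds the left by $\reg(T_X)(T_X-T)\fstar$; the corrected identity simply collapses to $\reg(T_X)\gtl-\fstar=-\rem(T_X)\fstar$, so the real task is to compare $\norm{\rem(T_X)\fstar}_{L^2}$ with $\norm{\rem(T)\fstar}_{L^2}$ at relative accuracy $o_{\bbP}(1)$, and ``contour comparison together with concentration of $\gtl-T\fstar$'' does not deliver this. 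The obstruction is that under \cref{assu:Source} one only has $\fstar\in[\caH]^t$ for $t<s$ with $s$ possibly small, so $\fstar$ need not lie in $\caH$ nor in $L^\infty$: you cannot absorb it into the operator-norm concentration $\norm{T_\lambda^{-1/2}(T-T_X)T_\lambda^{-1/2}}$ the way the variance argument absorbs $k_x$, and functions of $T_X$ are only well behaved on $\caH$. The paper therefore splits the error further into a mean-zero sampling term $\reg(T_X)\bigl[(\gtl-T_X\flam)-(g^*-T\flam)\bigr]$ with $\flam=\reg(T)T\fstar\in\caH$, handled by a vector-valued Bernstein inequality plus a truncation of $\fstar$ when $t\le 1/\beta$ (\cref{lem:RegApprox}), and an interaction term $(\reg(T_X)T\rem(T)-\rem(T_X)T\reg(T))\fstar$ handled by the contour argument (\cref{lem:bias_control}), followed by a case analysis on $\min(s,2\tau_{\max})$ versus $2$ to make the error $o$ of $\caR_\varphi^2(\lambda;\fstar)+\tfrac1n\caN_{2,\varphi}(\lambda)$. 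None of these ingredients appear in your plan, and the step you do write would fail for unbounded $\fstar$.

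Two further points. First, no high-probability event is needed for the functional calculus: $\sigma(T_X)\subset[0,\kappa^2]\subset D_\lambda$ deterministically because $k(x,x)\le\kappa^2$; what degrades for small $\lambda$ is the concentration factor $v\asymp n^{-1}\caN_1(\lambda)\ln\lambda^{-1}$, not the applicability of the contour formula, and this misreading feeds into your second bullet. Second, and more seriously, your interpolating-regime sketch cannot give the claimed bound: restricting the variance to the top $\Theta(\ln n)$ eigendirections yields at best $\Omega_{\bbP}(\sigma^2\ln n/n)$, which vanishes, whereas $\Omega_{\bbP}(\sigma^2(\ln n)^{-\ep})$ requires capturing on the order of $n(\ln n)^{-\ep\beta}$ directions, exactly where empirical/population comparability breaks down for $\lambda=O(n^{-\beta})$. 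The missing idea in the paper is the monotonicity of $\mathbf{Var}(\lambda)$ in $\lambda$ (\cref{lem:VarianceMonotone}, via the kernel-matrix representation and property (i) of the filter): one bounds $\mathbf{Var}(\lambda)\ge\mathbf{Var}(\tilde\lambda)$ with $\tilde\lambda=n^{-\beta}(\ln n)^p$, for which the sharp variance asymptotics of \cref{thm:Variance} still apply, giving $\Omega_{\bbP}(\sigma^2(\ln n)^{-p/\beta})$. Without this (or an equivalent device) the second bullet of the theorem is unsupported.
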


\subsection{Discussion}

In this subsection, we discuss our main result from the following perspectives:

\paragraph{Minimax optimal rate}
\cref{thm:Main} naturally recovers the minimax optimal rates of spectral algorithms that have been derived in previous works
~(see \citet{lin2018_OptimalRatesa,zhang2023_OptimalityMisspecifieda} as well as the references therein).
Let us suppose further that $\fstar \in [\caH]^s$ as in the standard literature and $s \leq 2\tau$.
Then, for the bias term, \cref{lem:FApproximation} shows that
\begin{align*}
  \caR_{\varphi}^2(\lambda;\fstar) = \norm{\rem(T) \fstar}_{L^2}^2 =O(\lambda^{s}).
\end{align*}
For the variance term, \cref{prop:EffectiveDimEstimation} shows that
\begin{align*}
  \frac{\sigma^2}{n}\caN_{2,\varphi}(\lambda) \asymp \frac{\sigma^2}{n} \lambda^{-1/\beta}.
\end{align*}
Consequently, choosing $\lambda \asymp n^{-\theta}$ with $\theta = \frac{1}{s\beta + 1}$ (as in the previous literature)  yields the optimal rate
\begin{align*}
  \E \left[ \norm{\hat{f}_\lambda - \fstar}_{L^2}^2 ~\Big|~ X \right] =
  O_{\bbP}\left( n^{-\frac{s\beta}{s\beta + 1}} \right).
\end{align*}

\paragraph{Exact generalization error curve}

Our result provides a complete picture of the generalization error of spectral algorithms,
showing the effect of the choice of the regularization parameter, the source condition of the regression function, the noise level, and the choice of the filter function.
In terms of regularization, as the regularization strength $\lambda$ decreases, the bias decreases while the variance increases,
showing that the \textit{bias-variance trade-off} also exists for spectral algorithms and that the learning curve is U-shaped, as one would traditionally expect.
It also suggests that a proper choice of $\lambda$ is necessary to achieve the optimal rate.

The main strength of our result is that it provides the exact $1+o_{\bbP}(1)$ form of the generalization error for a given spectral algorithm when $\lambda$ lies in the reasonable range.
In comparison, the previous works~(for example, \citet{caponnetto2007_OptimalRates,lin2018_OptimalRatesa,zhang2023_OptimalityMisspecifieda})
on minimax optimal rates can only provide upper bounds.
While their results can also partially reflect the bias-variance trade-off, this is only in an \textit{upper-bound sense}.
More precisely, they demonstrate that:
\begin{align*}
  \mr{Error} = \mr{Bias}(\lambda) + \mr{Var}(\lambda), \qq{and}
  \mr{Bias}(\lambda) \lesssim B(\lambda) \quad \mr{Var}(\lambda) \lesssim V(\lambda),
\end{align*}
and then choose the optimal \(\lambda\) by balancing \(B(\lambda)\) and \(V(\lambda)\).
They further check that, for such a choice of $\lambda$, the upper bound $B(\lambda) + V(\lambda)$ can match the minimax lower bound.

However, as $B(\lambda)$ and $V(\lambda)$ are only upper bounds, there is no guarantee that they reflect the exact bias-variance trade-off.
Moreover, the standard approach in the recent literature on spectral algorithms is to apply the so-called approximation-estimation decomposition (see, e.g., the beginning of the Proof of Theorem 4.2 in \citet{lin2018_OptimalRatesa}) to analyze the generalization error,
which directly loses the exact bias-variance trade-off.

In contrast, our results establish precise approximations of $\mr{Bias}(\lambda)$ and $\mr{Var}(\lambda)$ for any $\lambda \propto n^{-\theta}$, $\theta \in (0,\beta)$, in the $1 + o_{\bbP}(1)$ form,
providing the exact bias-variance trade-off even without losing constant factors.
While the recent work~\citep{li2023_AsymptoticLearning} rigorously proved the learning curves of KRR, it provided only asymptotic rates (in the form of $\Theta_{\bbP}(n^{-r})$),
so our result is also a refinement of their result even in the KRR case.
For general spectral algorithms,
as far as we know, we are the first to provide such a $1+o_{\bbP}(1)$ form of exact generalization error curves.

The implication of the $1+o_{\bbP}(1)$ form of exact generalization error is that it allows us to consider the constant factor in the generalization error.
It can precisely reflect how the magnitude of the regression function and the noise affect the generalization error.
Moreover, from the oracle viewpoint, minimizing the sum of the two terms in \cref{eq:MainResult}
yields the best choice of $\lambda = \lambda(n)$ and the best generalization error, going beyond merely the asymptotic rate.

\paragraph{Interpolating regime}
We refer to the case of weak regularization, namely $\lambda = \lambda(n) = O(n^{-\beta})$, as the \textit{interpolating regime}.
\cref{thm:Main} shows that in this regime, the generalization error is of order $\Omega_{\bbP}((\ln n)^{-4})$,
which is nearly constant, so the estimator does not generalize at all.

The performance of kernel methods in the interpolating regime is also considered in the previous literature.
Under restricted settings, several works
~\citep{rakhlin2018_ConsistencyInterpolation,buchholz2022_KernelInterpolation,beaglehole2022_KernelRidgeless}
showed the inconsistency of the kernel minimum-norm interpolator, which is the $\lambda \to 0^+$ limit of spectral algorithms.
Most relevant to this paper,
\citet{li2023_KernelInterpolation} showed that for general RKHSs associated with a Hölder continuous kernel satisfying the embedding index condition,
the generalization error of KRR in the interpolating regime is $\Omega_{\bbP}(n^{-\ep})$ for any $\ep > 0$.
In comparison, we remove the condition of Hölder continuity and
also provide an improved lower bound $\Omega_{\bbP}((\ln n)^{-4})$ for spectral algorithms in the interpolating regime
by refining the analysis using the condition of regular RKHS\@.
We believe that this improved lower bound further confirms that kernel methods do not generalize in the interpolating regime,
highlighting the necessity of the regularization.

\rev{
  Notably, another line of work (see, e.g., \citep{liang2020_JustInterpolate}) shows that the minimum-norm interpolator can generalize well, but these results are proved in a different asymptotic regime where the input dimension grows with $n$ and the kernel can effectively be linearized, while here we work in a fixed dimension setting and study the whole interpolating regime $\lambda = O(n^{-\beta})$ for general spectral algorithms.
  Identifying the phase transition between the non-generalization and generalization of the minimum-norm interpolator is an interesting future direction.
}

\paragraph{Saturation effect of higher order}

The saturation effect refers to the phenomenon that, for a certain spectral algorithm with limited qualification,
it cannot achieve the minimax optimal rate of convergence when the smoothness of the regression function exceeds its qualification.
Since the traditional literature only provides the upper bound of the generalization error, they cannot prove the saturation effect.
The recent work \citet{li2023_SaturationEffect} rigorously proved the saturation effect for KRR, whose qualification is $\tau_{\max} = 1$,
but there is still no result for other spectral algorithms with higher (but limited) qualification,
such as the iterated ridge regularization (see \cref{example:IteratedRidge}).

Thanks to the exact generalization error curve, we can prove the saturation effect for spectral algorithms with higher qualification.
Let us consider a spectral algorithm with limited qualification $\tau_{\max} < \infty$, which is the case of the iterated ridge regularization.
Then, for $\fstar \in [\caH]^s$ with $s \geq 2\tau_{\max}$ and $\fstar \neq 0$, it is easy to see that
\begin{align*}
  \caR_{\varphi}^2(\lambda;\fstar) = \sum_{m=1}^\infty  \rem(\mu_m)^2 \bar{f}_{m}^2
  \geq \sum_{m=1}^\infty \underline{F} \lambda^{2\tau} \bar{f}_{m}^2 = \underline{F} \norm{\fstar}_{L^2}^2 \lambda^{2\tau}.
\end{align*}
Consequently, with the upper bound in \cref{lem:FApproximation},
we conclude that $\caR_{\varphi}^2(\lambda;\fstar) \asymp \lambda^{2\tau}$.

Then, when $\lambda$ is not too small, the main theorem gives
\begin{align*}
  \E \left[ \norm{\hat{f}_\lambda - \fstar}_{L^2}^2 ~\Big|~ X \right]
  &= (1 + o_{\bbP}(1))\left( \caR_{\varphi}^2(\lambda;\fstar) + \frac{\sigma^2}{n} \caN_{2,\varphi}(\lambda) \right) \\
  &= \Omega_{\bbP}\left(\lambda^{2\tau} + \frac{\sigma^2}{n}\lambda^{-1/\beta} \right)
  =\Omega_{\bbP}\left(n^{-\frac{2\tau \beta}{2\tau \beta+1}} \right).
\end{align*}
To cover the case when $\lambda$ can possibly be too small, we can consider $\bar{\lambda} = \max(\lambda, n^{-\theta})$ for some $\theta < \beta$
and apply the monotonicity of the variance term (\cref{lem:VarianceMonotone}) as in \citet{li2023_SaturationEffect}.
We formulate it as a corollary.
\begin{corollary}[Saturation effect]
  Suppose Assumptions~\ref{assu:EDR},\ref{assu:RegularRKHS},\ref{assu:Filter},\ref{assu:noise} hold.
  Let $\reg$ be a filter function with qualification $\tau = \tau_{\max} < \infty$.
  Then, for any non-zero $\fstar \in [\caH]^s$ for $s \geq 2\tau$, for any choice of $\lambda = \lambda(n) \to 0$,
  we have
  \begin{align}
    \E \left[ \norm{\hat{f}_\lambda - \fstar}_{L^2}^2 ~\Big|~ X \right]
    = \Omega_{\bbP}\left(n^{-\frac{2\tau \beta}{2\tau \beta+1}} \right).
  \end{align}
  Moreover, the lower bound is attained when $\lambda \asymp n^{-\theta}$ for $\theta = \frac{\beta}{2\tau \beta + 1}$.
\end{corollary}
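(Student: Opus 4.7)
The plan is to verify the two deterministic estimates outlined in the discussion preceding the corollary and then split into cases on the size of $\lambda$, invoking \cref{thm:Main} in each regime.

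First I would pin down $\caR_{\varphi}^2(\lambda;\fstar) \asymp \lambda^{2\tau}$ for any nonzero $\fstar \in [\caH]^s$ with $s \geq 2\tau$. The upper bound $\caR_{\varphi}^2(\lambda;\fstar) = O(\lambda^{2\tau})$ is \cref{lem:FApproximation} applied via the inclusion $[\caH]^s \subseteq [\caH]^{2\tau}$. The matching lower bound follows from property (iv) of \cref{def:filter}: since the qualification $\tau_{\max}=\tau$ is finite, $\rem(z) \geq \underline{F}\lambda^{\tau}$ uniformly on $[0,\kappa^2]$, giving
\begin{align*}
\caR_{\varphi}^2(\lambda;\fstar) = \sum_{m=1}^{\infty} \rem(\mu_m)^2 \bar f_m^2 \geq \underline{F}^2\, \lambda^{2\tau}\, \norm{\fstar}_{L^2}^2 > 0.
\end{align*}
Coupled with $\caN_{2,\varphi}(\lambda) \asymp \lambda^{-1/\beta}$ from \cref{lem:EffectiveDimEstimation}, the two-term sum in \cref{thm:Main} is of order $\lambda^{2\tau} + \sigma^2 n^{-1}\lambda^{-1/\beta}$, and AM--GM yields a lower envelope of order $n^{-2\tau\beta/(2\tau\beta+1)}$ attained (up to constants) at $\lambda \asymp n^{-\beta/(2\tau\beta+1)}$.

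Second, I would split on the regime of $\lambda$. If $\lambda = \Omega(n^{-\theta})$ for some $\theta < \beta$, \cref{thm:Main} applies directly and the AM--GM envelope above gives the required lower bound. If $\lambda = O(n^{-\beta})$, the interpolating-regime half of \cref{thm:Main} already yields $\Omega_{\bbP}((\ln n)^{-\ep}\sigma^2)$, which dominates $n^{-2\tau\beta/(2\tau\beta+1)}$. The intermediate case---when $\lambda$ decays faster than every $n^{-\theta}$ with $\theta < \beta$ yet is not $O(n^{-\beta})$---is handled via the device $\bar\lambda = \max(\lambda, n^{-\theta_*})$ with $\theta_* = \beta/(2\tau\beta+1) < \beta$. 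Since $\bar\lambda = \Omega(n^{-\theta_*})$, \cref{thm:Main} applied to the hypothetical estimator $\hat f_{\bar\lambda}$ on the same sample $X$ shows that its conditional noise-variance is $\Theta_{\bbP}(n^{-2\tau\beta/(2\tau\beta+1)})$. Then \cref{lem:VarianceMonotone}, applied pathwise, shows that the conditional noise-variance of $\hat f_\lambda$ along the same $X$ is at least that of $\hat f_{\bar\lambda}$, since it is a spectral functional of $T_X$ involving $\varphi_\lambda(T_X)^2$ and $z\varphi_\lambda(z)^2 = \varphi_\lambda(z)(1-\rem(z))$ is monotone in $\lambda$ by property (i) of \cref{def:filter}. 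Bounding the total conditional error below by its noise-variance part closes the lower bound. The ``attained when'' clause is immediate by taking $\lambda \asymp n^{-\beta/(2\tau\beta+1)}$, which falls in the first regime, and \cref{thm:Main} then supplies the matching upper bound of order $n^{-2\tau\beta/(2\tau\beta+1)}$.

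The main obstacle is the pathwise monotonicity step in the intermediate regime: while the deterministic $\caN_{2,\varphi}(\lambda)$ is obviously monotone, we need the sample-dependent conditional noise-variance of $\hat f_\lambda$ to inherit this monotonicity along a fixed realization of $X$ so that it can be lower-bounded by its value at $\bar\lambda$, where \cref{thm:Main} delivers a clean asymptotic estimate. Once the operator monotonicity of $T_X^{1/2}\varphi_\lambda(T_X)^2 T_X^{1/2}$ in $\lambda$ is in hand, the remainder---property (iv) for the bias lower bound, AM--GM for the optimal trade-off, and the interpolating-regime branch---is routine given the tools already established in the paper.
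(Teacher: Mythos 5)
Your proposal is correct and follows essentially the same route as the paper: bias $\asymp \lambda^{2\tau}$ via filter property (iv) together with \cref{lem:FApproximation}, variance $\asymp \sigma^2 n^{-1}\lambda^{-1/\beta}$, \cref{thm:Main} plus the trade-off bound in the regime $\lambda = \Omega(n^{-\theta})$, and the device $\bar\lambda = \max(\lambda, n^{-\theta})$ with the monotonicity of $\mathbf{Var}$ for smaller $\lambda$. The ``pathwise monotonicity'' you flag as the main obstacle is exactly what \cref{lem:VarianceMonotone} already delivers (it is a deterministic statement given $X$), and for the variance at $\bar\lambda$ the precise tool is \cref{thm:Variance} rather than \cref{thm:Main}, so your extra case split via the interpolating branch is harmless but unnecessary.
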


\paragraph{The analytic functional argument}
As one of our technical contributions, we develop an \textit{analytic functional argument} based on analytic functional calculus in the context of spectral algorithms,
which enables us to derive the exact generalization error curve.
While functional calculus has a long history in operator perturbation theory~\citep{hsing2015_TheoreticalFoundations},
as far as we know, we are the first to apply it to analyze spectral algorithms.
First, we illustrate the difficulties here and explain why the existing techniques are not applicable.

The traditional literature on optimal rates focused only on upper bounds, and its approaches, which are based on the approximation-estimation decomposition
~(for example, Eq. (88) in \citet{zhang2023_OptimalityMisspecifieda}), are not applicable to lower bounds.
Moreover, it is in general more difficult to provide a lower bound than an upper bound, since the former requires the error term to be infinitesimal.
For the simple case of KRR, the rigorous work~\citep{li2023_AsymptoticLearning} determined the asymptotic rate of convergence,
but the proof method must rely on the \textit{resolvent identity} of KRR, that is,
\begin{align*}
  \reg^{\mr{KR}}(A) - \reg^{\mr{KR}}(B)
  =  (A+\lambda)^{-1} - (B+\lambda)^{-1}
  = (A+\lambda)^{-1} (B-A)(B+\lambda)^{-1},
\end{align*}
and also $\rem^{\mr{KR}}(A) = \lambda (A+\lambda)^{-1}$.
This identity is crucial for concentrating the random terms (see \cref{eq:BiasVarDecomposition}) to the non-random counterpart appearing
on the right hand side of \cref{eq:MainResult},
where we will encounter quantities like $\reg(T) - \reg(T_X)$ and $\rem(T) - \rem(T_X)$.
However, for general spectral algorithms, it is impossible to derive a similar identity.
Moreover, the effect of $\lambda$ must also be taken into consideration.

Our ``analytic functional argument'' overcomes these difficulties using analytic functional calculus:
\begin{align*}
  \reg(A) - \reg(B) &= -\frac{1}{2\pi i}  \oint_{\Gamma} \reg(z) \left[R_A(z) - R_B(z) \right] \dd z \\
  &= \frac{1}{2\pi i}\oint_{\Gamma} R_B(z) (A-B) R_A(z) \reg(z)\dd z,
\end{align*}
where $R_A(z) = (A-z)^{-1}$ is the resolvent of $A$ and $\Gamma$ is a contour.
Then, the terms in the integral resemble those of KRR (but note that now $z$ is a complex number).
Surprisingly, with a carefully chosen contour $\Gamma$ depending on $\lambda$,
this crucial formula allows us to apply the concentration results obtained for the resolvent and derive very sharp estimates,
leading to the exact generalization error curve.

We believe that this novel technique is of independent interest and can be applied to other problems.

\section{Proof}

\subsection{Proof sketch}
\label{subsec:proof_sketch}

The proof idea is quite direct.
The first step is the traditional bias-variance decomposition, which is also standard in the literature~\citep{li2023_AsymptoticLearning}.
Let us first define some quantities derived from conditioning on the sample points $X$:
\begin{align}
  \gtl &\coloneqq \mathbb{E}\left( \hat{g}_{Z} \big| X \right) = \frac{1}{n} \sum_{i=1}^n K_{x_i} \fstar (x_{i}) \in \mathcal{H}, \\
  \label{eq:tilde_f}
  \tilde{f}_{\lambda} &\coloneqq \mathbb{E}\left( \hat{f}_{\lambda} \big| X \right) = \reg(T_X) \gtl \in \mathcal{H}.
\end{align}

\begin{proposition}
  \label{prop:BiasVar}
  Under \cref{assu:noise}, we have
  \begin{align}
    \begin{aligned}
      \label{eq:BiasVarDecomposition}
      \E \left[ \norm{\hat{f}_\lambda - f^*}^2_{L^2} \;\Big|\; X \right]
      &= \norm{ \tilde{f}_{\lambda} - \fstar}^2_{L^2}  +
      \frac{\sigma^2}{n^2} \sum_{i=1}^n  \norm{\reg(T_X) k(x_i,\cdot)}^2_{L^2} \\
      & \eqqcolon  \mathbf{Bias}^2(\lambda) + \mathbf{Var}(\lambda),
    \end{aligned}
  \end{align}
  where we note that both $\mathbf{Bias}^2(\lambda)$ and $\mathbf{Var}(\lambda)$ are still random variables depending on $X$.
\end{proposition}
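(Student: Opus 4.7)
The plan is a standard Hilbert-space conditional bias--variance decomposition. The crucial structural fact is that $\tilde{f}_{\lambda} = \reg(T_X)\gtl$ equals the conditional expectation $\E(\hat{f}_{\lambda} \mid X)$, so it plays the role of the ``mean prediction'' while $\hat{f}_{\lambda} - \tilde{f}_{\lambda}$ is pure noise driven by the residuals $\epsilon_i := y_i - \fstar(x_i)$.

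First I would split
\[
\hat{f}_{\lambda} - \fstar \;=\; (\hat{f}_{\lambda} - \tilde{f}_{\lambda}) + (\tilde{f}_{\lambda} - \fstar)
\]
and expand the squared $L^2$-norm as $\norm{\hat{f}_{\lambda} - \tilde{f}_{\lambda}}^2_{L^2} + 2\Re\ang{\tilde{f}_{\lambda} - \fstar,\, \hat{f}_{\lambda} - \tilde{f}_{\lambda}}_{L^2} + \norm{\tilde{f}_{\lambda} - \fstar}^2_{L^2}$. Taking $\E[\cdot \mid X]$, the third term is $X$-measurable and passes through unchanged to give $\mathbf{Bias}^2(\lambda)$. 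For the cross term, both $\tilde{f}_{\lambda}$ and $\fstar$ are $X$-measurable, so the bounded conjugate-linear functional $\ang{\tilde{f}_{\lambda} - \fstar,\, \cdot}_{L^2}$ commutes with the Hilbert-space-valued conditional expectation, leaving $\E[\hat{f}_{\lambda} - \tilde{f}_{\lambda} \mid X] = 0$ by the very definition of $\tilde{f}_{\lambda}$ in \cref{eq:tilde_f}; hence the cross term vanishes.

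For the remaining pure-noise term, linearity of $\reg(T_X)$ combined with the formulas for $\hat{g}_Z$ and $\gtl$ yields
\[
\hat{f}_{\lambda} - \tilde{f}_{\lambda} \;=\; \reg(T_X)\bigl(\hat{g}_Z - \gtl\bigr) \;=\; \frac{1}{n}\sum_{i=1}^n \epsilon_i\, \reg(T_X) k(x_i,\cdot).
\]
Under \cref{assu:noise} together with i.i.d.\ sampling, the $\epsilon_i$ are conditionally centered with $\E[\epsilon_i \epsilon_j \mid X] = \sigma^2 \delta_{ij}$. Expanding the squared $L^2$-norm of the above sum as a double sum and taking $\E[\cdot \mid X]$ annihilates every off-diagonal pair and produces exactly $\frac{\sigma^2}{n^2}\sum_{i=1}^n \norm{\reg(T_X) k(x_i,\cdot)}^2_{L^2} = \mathbf{Var}(\lambda)$.

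There is no real obstacle in this step; the work is entirely bookkeeping. The only subtleties worth flagging are (i) the Bochner-integrability needed to commute the $L^2$-valued conditional expectation with linear operations, which is immediate since $\reg(T_X)$ is bounded on $\caH$ by \cref{eq:Filter_Reg} and $\caH \hookrightarrow L^2$ continuously with $\norm{S_k} \leq \kappa$, and (ii) the conjugate-linear convention for $\ang{\cdot,\cdot}_{L^2}$, which only affects cosmetic factors and not the cancellation of the cross term.
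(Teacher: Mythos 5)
Your proposal is correct and follows essentially the same route as the paper: decompose $\hat{f}_\lambda$ into $\tilde{f}_\lambda$ plus the noise term $\frac{1}{n}\sum_i \epsilon_i \reg(T_X)k_{x_i}$, kill the cross term by conditional centering of the $\epsilon_i$, and reduce the quadratic noise term to $\frac{\sigma^2}{n^2}\sum_i \norm{\reg(T_X)k(x_i,\cdot)}_{L^2}^2$ via the conditional second-moment condition of Assumption~\ref{assu:noise}. No gaps.
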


Then, we will show in \cref{thm:Variance} and \cref{thm:Bias}, respectively, that for $\lambda = \Omega(n^{-\theta})$, $\theta < \beta$,
\begin{align*}
  \mathbf{Var}(\lambda) &= (1+o_{\bbP}(1)) \frac{\sigma^2}{n} \caN_{2,\varphi}(\lambda), \\
  \mathbf{Bias}^2(\lambda) &= \caR_{\varphi}^2(\lambda;\fstar) +o_{\bbP}(1)\left(\caR_{\varphi}^2(\lambda;\fstar)+\frac{1}{n}\caN_{2,\varphi}(\lambda)\right).
\end{align*}
Moreover, in \cref{cor:Interpolation}, using the monotonicity of $\mathbf{Var}(\lambda)$, we can also provide a lower bound of $\mathbf{Var}(\lambda)$ when $\lambda = O(n^{-\beta})$.
Then, pulling everything together finishes the proof of \cref{thm:Main}.

\paragraph{Organization}
In the following, we first give a simple proof of the bias-variance decomposition in \cref{prop:BiasVar}.
For ease of reference, a notation table collecting the main symbols used in the paper is provided in \cref{tab:notation}.
In \cref{subsec:fundamental_controls}, we will derive estimates of some fundamental quantities that will be used later.
In \cref{subsec:concentrations}, we will use concentration inequalities to obtain high-probability bounds on some intermediate but crucial quantities.
In \cref{subsec:analytic_functional_calculus}, we recall some basic facts about analytic functional calculus
and define the contour that is essential in the proof.
Finally, we prove the estimates for the two terms in \cref{subsec:variance_term} and \cref{subsec:bias_term}.

\paragraph{Notation}
In the proof, we will denote by $C,c$ generic positive constants that may change from line to line.
We use $\norm{\cdot}_{\mathscr{B}(H)}$ or simply $\norm{\cdot}$ to represent the operator norm on a Hilbert space $H$.
We also denote by $x^+ = \max(x,0)$.

\begin{proof}[Proof of \cref{prop:BiasVar}]
  Let $\epsilon_i = y_i - \fstar(x_i)$ be the noise.
  Then, plugging \cref{eq:GZ} into \cref{eq:SA}, we get
  \begin{align*}
    \hat{f}_{\lambda} &= \reg(T_X) \hat{g}_Z  = \reg(T_X) \left( \frac{1}{n} \sum_{i=1}^n K_{x_i} (\fstar(x_i) + \epsilon_i) \right) \\
    &= \reg(T_X) \frac{1}{n} \sum_{i=1}^n K_{x_i}\fstar(x_i) + \frac{1}{n} \sum_{i=1}^n \epsilon_i \reg(T_X) k_{x_i} \\
    &= \tilde{f}_{\lambda} + \frac{1}{n} \sum_{i=1}^n \epsilon_i \reg(T_X) k_{x_i},
  \end{align*}
  so
  \begin{align*}
    \hat{f}_{\lambda} - \fstar = \tilde{f}_{\lambda} -\fstar + \frac{1}{n} \sum_{i=1}^n \epsilon_i \reg(T_X) k_{x_i}.
  \end{align*}
  Taking conditional expectation with respect to $X$ and using the fact that $\{\epsilon_i\}_{i=1}^n$ are i.i.d.\ mean-zero random variables conditional on $X$, together with \cref{assu:noise}, we get
  \begin{align*}
    \E \left[ \norm{\hat{f}_\lambda - f^*}^2_{L^2} \;\Big|\; X \right]
    &= \E \left[ \ang{\hat{f}_\lambda - f^*,~ \hat{f}_\lambda - f^*}_{L^2} \;\Big|\; X \right] \\
    &= \norm{ \tilde{f}_{\lambda} - \fstar}^2_{L^2} \\
    &\quad + 2 \E \left[
                    \Re \ang{\tilde{f}_{\lambda} - \fstar,~ \frac{1}{n} \sum_{i=1}^n \epsilon_i \reg(T_X) k_{x_i}}_{L^2}
                    \;\Big|\; X
    \right] \\
    &\quad + \E \left[ \norm{\frac{1}{n} \sum_{i=1}^n \epsilon_i \reg(T_X) k_{x_i}}^2_{L^2} \;\Big|\; X \right] \\
    &= \norm{ \tilde{f}_{\lambda} - \fstar}^2_{L^2} \\
    &\quad + \frac{1}{n^2}\E \left[
                               \ang{\sum_{i=1}^n \epsilon_i \reg(T_X) k_{x_i},~ \sum_{i=1}^n \epsilon_i \reg(T_X) k_{x_i}}_{L^2}
                               \;\Big|\; X
    \right] \\
    &= \norm{ \tilde{f}_{\lambda} - \fstar}^2_{L^2} + \frac{1}{n^2} \sum_{i=1}^n \E(\epsilon_i^2 | X) \norm{\reg(T_X) k(x_i,\cdot)}^2_{L^2} \\
    &= \norm{ \tilde{f}_{\lambda} - \fstar}^2_{L^2} + \frac{\sigma^2}{n^2} \sum_{i=1}^n \norm{\reg(T_X) k(x_i,\cdot)}^2_{L^2}.
  \end{align*}
\end{proof}

\subsection{Fundamental controls}
\label{subsec:fundamental_controls}

Denote the effective dimension (of power $p \geq 1$) of the RKHS $\caH$ by
\begin{align}
  \label{eq:EffectiveDim}
  \caN_p(\lambda) \coloneqq \sum_{i =1}^\infty \left( \frac{\lambda_i}{\lambda+\lambda_i} \right)^p.
\end{align}
This quantity corresponds to the $\varphi^{\mr{KR}}$-effective dimension defined previously in \cref{eq:PhiEffDim}.
When $p=1$, it is the ordinary effective dimension in the literature~\citep{caponnetto2007_OptimalRates}.

Let us provide first the controls of the regularized basis functions using the regular RKHS condition.

\begin{lemma}
  \label{lem:NormsN1N2}
  Under \cref{assu:RegularRKHS}, for any $x \in \caX$,
  \begin{align}
    \label{eq:RegPhiKx}
    \norm{\reg^{1/2}(T) k_x}_{\caH}^2 & \leq M \caN_{1,\varphi}(\lambda), \qquad    \norm{\reg(T) k_x}_{L^2}^2  \leq M \caN_{2,\varphi}(\lambda).
  \end{align}
  In particular, for $\reg = \reg^{\mr{KR}}$,
  \begin{align}
    \label{eq:RegKx}
    \norm{T_\lambda^{-1/2} k_x}_{\caH}^2 & \leq M \caN_1(\lambda), \qquad \norm{T_\lambda^{-1} k_x}_{L^2}^2  \leq M \caN_2(\lambda).
  \end{align}
\end{lemma}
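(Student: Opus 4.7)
The plan is to reduce both bounds to sums of the form $\sum_m g(\mu_m)\, k_m(x,x)$ via Mercer's theorem and the spectral representation of $\reg(T)$, and then invoke the regular RKHS condition \cref{assu:RegularRKHS} through an Abel summation (summation by parts) argument. Recall that $\{\mu_m^{1/2} e_{m,l}\}$ is an orthonormal basis of $\caH$ and $\{e_{m,l}\}$ is an orthonormal family in $L^2$, with $T e_{m,l}=\mu_m e_{m,l}$. Writing $k_x = \sum_{m,l}\mu_m\,\overline{e_{m,l}(x)}\,e_{m,l}$ from \cref{eq:Mercer} and applying $\reg^{1/2}(T)$ or $\reg(T)$, I compute
\begin{align*}
  \norm{\reg^{1/2}(T) k_x}_{\caH}^2 &= \sum_{m=1}^\infty \reg(\mu_m)\,\mu_m \sum_{l=1}^{d_m}\abs{e_{m,l}(x)}^2 = \sum_{m=1}^\infty \reg(\mu_m)\,\mu_m\, k_m(x,x),\\
  \norm{\reg(T) k_x}_{L^2}^2 &= \sum_{m=1}^\infty \reg(\mu_m)^2\,\mu_m^2 \sum_{l=1}^{d_m}\abs{e_{m,l}(x)}^2 = \sum_{m=1}^\infty \bigl[\mu_m \reg(\mu_m)\bigr]^2 k_m(x,x).
\end{align*}

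Next, the key monotonicity: property (i) of \cref{def:filter} says that $\rem(z)=1-z\reg(z)$ is decreasing in $z$, hence $z\mapsto z\reg(z)$ is increasing on $[0,\kappa^2]$. Since $(\mu_m)$ is strictly decreasing, the sequences $a_m := \mu_m\reg(\mu_m)$ and $a_m^2$ are both nonincreasing in $m$. Setting $b_m = k_m(x,x)\ge 0$ and $B_N = \sum_{m=1}^N b_m$, \cref{assu:RegularRKHS} gives $B_N \le M D_N$ where $D_N := \sum_{m=1}^N d_m$. Applying Abel summation,
\begin{align*}
  \sum_{m=1}^N a_m b_m \;=\; a_N B_N + \sum_{m=1}^{N-1}(a_m - a_{m+1}) B_m \;\le\; M\Bigl[a_N D_N + \sum_{m=1}^{N-1}(a_m-a_{m+1}) D_m\Bigr] \;=\; M\sum_{m=1}^N a_m d_m,
\end{align*}
where in the last step I reverse the Abel summation using $d_m = D_m - D_{m-1}$ and nonnegativity of $a_m-a_{m+1}$. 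Letting $N\to\infty$ and recalling the definition \cref{eq:PhiEffDim} of $\caN_{p,\varphi}(\lambda)$ gives the two bounds with the right-hand sides $M\caN_{1,\varphi}(\lambda)$ and $M\caN_{2,\varphi}(\lambda)$ respectively.

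The specialization to $\reg = \reg^{\mr{KR}}$ is then immediate: $[\reg^{\mr{KR}}]^{1/2}(T) = T_\lambda^{-1/2}$ and $\reg^{\mr{KR}}(T) = T_\lambda^{-1}$, while $\caN_{p,\varphi^{\mr{KR}}}(\lambda) = \sum_m d_m\bigl[\mu_m/(\mu_m+\lambda)\bigr]^p = \caN_p(\lambda)$. No step looks genuinely hard here; the only place requiring care is verifying the monotonicity of $z\mapsto z\reg(z)$, which is precisely what property (i) of the filter definition was designed to supply, so the Abel summation reduction works cleanly.
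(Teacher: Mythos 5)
Your proposal is correct and follows essentially the same route as the paper: the same Mercer/spectral computation reducing both norms to $\sum_m [\mu_m\reg(\mu_m)]^p k_m(x,x)$, the same monotonicity of $z\mapsto z\reg(z)$ from property (i), and the same Abel-summation comparison with $M\sum_m d_m[\mu_m\reg(\mu_m)]^p$ (which the paper packages as \cref{prop:SummationOrder}).
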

\begin{proof}
  Using the spectral decomposition \cref{eq:T_decomp} and Mercer's decomposition \cref{eq:Mercer}, we have
  \begin{align*}
    \norm{\reg^{1/2}(T) k_x}_{\caH}^2 &=
    \norm{\reg^{1/2}(T) \sum_{m=1}^\infty  \mu_m \sum_{l=1}^{d_m} \overline{e_{m,l}(x)} e_{m,l} }_{\caH}^2 \\
    &= \norm{\sum_{m=1}^\infty \reg^{1/2}(\mu_m) \mu_m \sum_{l=1}^{d_m} \overline{e_{m,l}(x)} e_{m,l} }_{\caH}^2 \\
    &= \sum_{m=1}^\infty \mu_m \reg(\mu_m) \sum_{l=1}^{d_m}  \abs{e_{m,l}(x)}^2.
  \end{align*}
  Then, noticing that $a_m = \mu_m \reg(\mu_m)$ is decreasing since $\mu_m$ is decreasing and $t\reg(t)$ decreases as $t$ decreases,
  \cref{prop:SummationOrder} and \cref{assu:RegularRKHS} yield
  \begin{align*}
    \norm{\reg^{1/2}(T) k_x}_{\caH}^2
    \leq M \sum_{m=1}^\infty \mu_m \reg(\mu_m) d_m = M \caN_{1,\varphi}(\lambda).
  \end{align*}

  Similarly,
  \begin{align*}
    \norm{\reg(T) k_x}_{L^2}^2
    = \sum_{m=1}^\infty \left( \mu_m \reg(\mu_m) \right)^2 \sum_{l=1}^{d_m}  \abs{e_{m,l}(x)}^2
    \leq M \caN_{2,\varphi}(\lambda).
  \end{align*}
\end{proof}

Under the power-law decay \cref{assu:EDR}, we have the following asymptotics of the effective dimension,
see \cref{prop:EffectiveDimEstimation} in the appendix for the proof.

\begin{lemma}
  \label{lem:EffectiveDimEstimationPowerlaw}
  Let \cref{assu:EDR} hold and $\reg$ be a filter function satisfying \cref{eq:Filter_Reg} and \cref{eq:Filter_Rem}.
  Then, for any $p \geq 1$ and $\lambda > 0$, we have
  \begin{align}
    \mathcal{N}_{p,\varphi}(\lambda) \asymp \lambda^{-1/\beta}.
  \end{align}
  Particularly, for $\reg = \reg^{\mr{KR}}$, we have $\mathcal{N}_{p}(\lambda) \asymp \lambda^{-1/\beta}$.
\end{lemma}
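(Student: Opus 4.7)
The plan is to sandwich each summand $[\lambda_j \varphi_\lambda(\lambda_j)]^p$ between two explicit expressions and thereby reduce everything to the standard effective dimension $\mathcal{N}_p(\lambda)$, which is already known to satisfy $\mathcal{N}_p(\lambda) \asymp \lambda^{-1/\beta}$ under \cref{assu:EDR} (this is the reference to \citet[Proposition B.3]{li2023_AsymptoticLearning} cited right before the lemma). The two ingredients of the filter function definition are exactly tailored for this: the regularization bound \cref{eq:Filter_Reg} provides the upper estimate, and the qualification bound \cref{eq:Filter_Rem} with $\tau=1$ (which is available since the qualification $\tau_{\max}\geq 1$) provides the lower estimate.

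For the upper bound, I would start from $(z+\lambda)\varphi_\lambda(z)\leq E$, multiply by $z/(z+\lambda)$, and obtain $z\varphi_\lambda(z)\leq E\,z/(z+\lambda)$ for every $z\in[0,\kappa^2]$. Plugging $z=\lambda_j$ and raising to the $p$-th power gives
\begin{align*}
    [\lambda_j\varphi_\lambda(\lambda_j)]^p \;\leq\; E^p\left(\frac{\lambda_j}{\lambda_j+\lambda}\right)^p,
\end{align*}
so summing over $j$ yields $\mathcal{N}_{p,\varphi}(\lambda)\leq E^p\,\mathcal{N}_p(\lambda) = O(\lambda^{-1/\beta})$.

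For the lower bound, I would exploit that $\psi_\lambda = 1-z\varphi_\lambda$, and use the qualification bound at $\tau=1$: $z\psi_\lambda(z)\leq F_1 \lambda$, so $\psi_\lambda(\lambda_j)\leq F_1\lambda/\lambda_j$ for every $\lambda_j>0$. Rearranging,
\begin{align*}
    \lambda_j\varphi_\lambda(\lambda_j) \;=\; 1 - \psi_\lambda(\lambda_j) \;\geq\; 1 - \frac{F_1\lambda}{\lambda_j},
\end{align*}
and in particular $\lambda_j\varphi_\lambda(\lambda_j)\geq 1/2$ whenever $\lambda_j\geq 2F_1\lambda$. Consequently
\begin{align*}
    \mathcal{N}_{p,\varphi}(\lambda) \;\geq\; \sum_{j\,:\,\lambda_j\geq 2F_1\lambda} 2^{-p} \;=\; 2^{-p}\,\#\{j:\lambda_j\geq 2F_1\lambda\},
\end{align*}
and by \cref{assu:EDR} (in its equivalent counting form \cref{eq:EDR_}) the counting function on the right is $\Theta((2F_1\lambda)^{-1/\beta}) = \Theta(\lambda^{-1/\beta})$. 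Combining the two directions gives the desired two-sided bound.

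The argument is essentially routine once one sees that the filter conditions encode exactly the behavior needed to match $\mathcal{N}_p$ up to multiplicative constants; I do not expect a substantial obstacle. The only subtlety is the lower bound, which fails for trivial filter functions with qualification less than one, but the definition in \cref{def:filter} guarantees $\tau_{\max}\geq 1$, so $F_1$ is always available.
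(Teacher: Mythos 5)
Your proposal is correct and takes essentially the same route as the paper's own proof (given in \cref{prop:EffectiveDimEstimation}): the lower bound via $z\reg(z)=1-\rem(z)\geq 1/2$ whenever $z\geq 2F_1\lambda$, combined with the counting form \cref{eq:EDR_} of \cref{assu:EDR}, is identical. For the upper bound you shortcut by comparing termwise to the ordinary effective dimension $\mathcal{N}_p(\lambda)$ and invoking its known $\Theta(\lambda^{-1/\beta})$ behavior, while the paper re-derives that asymptotic from the same inequality $z\reg(z)\leq E\,z/(z+\lambda)$ through a layer-cake bound involving $\Phi(\ep)=\#\{j:\lambda_j\geq\ep\}$ (so as to cover general eigenvalue decays); this difference is immaterial for the stated lemma.
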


The following lemma controls the residual term, which will be used in the proof of the bias term.

\begin{lemma}
  \label{lem:FApproximation}
  Suppose $f^* \in [\caH]^t$.
  Let $\reg$ be a filter function of qualification $\tau$ and $f^*_\lambda = \reg(T)Tf^*$.
  Then, for $\gamma \in [0,t]$,
  \begin{align}
    \norm{f^* - f^*_\lambda}_{[\caH]^\gamma} = \norm{\rem(T) f^*}_{[\caH]^\gamma} \leq
    F_{\theta} \kappa^{(t-\gamma-2\tau)^+} \norm{f^*}_{[\caH]^t} \lambda^{\theta},
  \end{align}
  where $\theta = \min\left(\frac{t-\gamma}{2},\tau\right)$.
  In addition, for $\gamma \in [0,2+t]$,
  \begin{align}
    \norm{f^*_\lambda}_{[\caH]^\gamma} \leq E \kappa^{(t-\gamma)^+} \norm{f^*}_{[\caH]^t} \lambda^{-\frac{(\gamma-t)^+}{2}}.
  \end{align}
\end{lemma}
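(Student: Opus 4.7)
The plan is to reduce both bounds to scalar spectral bounds on the operator $T$ via the identity $\norm{f}_{[\caH]^\gamma} = \norm{T^{-\gamma/2} f}_{L^2}$, then split into cases depending on which side of the qualification threshold (or the ``$t$ vs.\ $\gamma$'' threshold) we are on, and finally invoke the filter function properties \cref{eq:Filter_Reg} and \cref{eq:Filter_Rem} together with the uniform operator bound $\norm{T} \leq \kappa^2$.

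First I would write $f^* = T^{t/2} g$ for some $g \in L^2$ with $\norm{g}_{L^2} = \norm{f^*}_{[\caH]^t}$, which is valid since $f^* \in [\caH]^t = \Ran T^{t/2}$. For the first bound, the equality $f^* - f^*_\lambda = \rem(T) f^*$ is immediate from $\rem(z) = 1 - z\reg(z)$, and then
\begin{align*}
\norm{\rem(T) f^*}_{[\caH]^\gamma} = \norm{T^{-\gamma/2} \rem(T) T^{t/2} g}_{L^2} \leq \norm{g}_{L^2} \cdot \sup_{z \in [0,\kappa^2]} z^{(t-\gamma)/2} \rem(z).
\end{align*}
If $(t-\gamma)/2 \leq \tau$, property (iii) with exponent $\theta = (t-\gamma)/2$ yields the bound $F_\theta \lambda^\theta$ directly. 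Otherwise $\theta = \tau$ and I would factor $z^{(t-\gamma)/2} = z^\tau \cdot z^{(t-\gamma)/2 - \tau}$, bounding the first factor by $F_\tau \lambda^\tau$ via \cref{eq:Filter_Rem} and the second by $\kappa^{t-\gamma-2\tau}$. The two cases combine to give exactly $F_\theta \kappa^{(t-\gamma-2\tau)^+} \norm{f^*}_{[\caH]^t} \lambda^\theta$.

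For the second bound, the same calculation gives
\begin{align*}
\norm{f^*_\lambda}_{[\caH]^\gamma} = \norm{T^{-\gamma/2} \reg(T) T \cdot T^{t/2} g}_{L^2} \leq \norm{g}_{L^2} \cdot \sup_{z \in [0,\kappa^2]} z^{(2+t-\gamma)/2} \reg(z).
\end{align*}
If $\gamma \leq t$, then $(2+t-\gamma)/2 \geq 1$, so I would write $z^{(2+t-\gamma)/2} \reg(z) = z^{(t-\gamma)/2} \cdot z\reg(z) \leq \kappa^{t-\gamma} \cdot E$ using \cref{eq:Filter_Reg}. If $t < \gamma \leq 2+t$, set $a = (2+t-\gamma)/2 \in [0,1)$; the key scalar inequality is
\begin{align*}
\sup_{z \geq 0} \frac{z^a}{z+\lambda} \leq \lambda^{a-1},
\end{align*}
which follows by splitting at $z = \lambda$ and using $a - 1 < 0$. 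Combined with $(z+\lambda)\reg(z) \leq E$ this gives $z^a \reg(z) \leq E \lambda^{a-1} = E \lambda^{-(\gamma-t)/2}$, matching the claimed bound since the $\kappa$-factor is trivial in this regime.

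The proof is essentially a bookkeeping exercise; the only mildly delicate point is the scalar bound on $z^a/(z+\lambda)$ for fractional $a \in [0,1)$ in the second case, which I would handle by the two-region split above rather than by optimizing over $z$. Everything else is direct invocation of the defining inequalities of a filter function and the spectral functional calculus on $T$.
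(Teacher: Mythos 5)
Your proof is correct and follows essentially the same route as the paper: represent $f^* = T^{t/2}g$, reduce both norms to scalar bounds of the form $\sup_{z\in[0,\kappa^2]} z^{a}\rem(z)$ or $z^{a}\reg(z)$, and split off the excess power of $z$ (bounded by a power of $\kappa$) before invoking \cref{eq:Filter_Rem} and \cref{eq:Filter_Reg}. Your explicit case analysis and the two-region derivation of $\sup_{z\ge 0} z^a/(z+\lambda)\le\lambda^{a-1}$ are just the scalar form of the paper's operator factorizations and its appeal to \cref{lem:Filter_MoreControl}, so there is no substantive difference.
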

\begin{proof}
  Since $f^* \in [\caH]^t$, we can find $h \in L^2$ such that $f^* = T^{t/2} h$ and $\norm{h}_{L^2} = \norm{f^*}_{[\caH]^t}$.
  Then,
  \begin{align*}
    \norm{\rem(T) f^*}_{[\caH]^\gamma} & =
    \norm{T^{-\gamma/2} \rem(T) T^{t/2} h}_{L^2}
    \leq \norm{T^{\frac{t-\gamma}{2}} \rem(T)}_{\mathscr{B}(L^2)} \norm{h}_{L^2} \\
    &\leq \norm{T^{\frac{(t-\gamma-2\tau)^+}{2}}} \norm{T^{\theta} \rem(T)}_{\mathscr{B}(L^2)} \norm{f^*}_{[\caH]^t} \\
    & \leq F_{\theta} \kappa^{(t-\gamma-2\tau)^+} \norm{f^*}_{[\caH]^t} \lambda^{\theta},
  \end{align*}
  where $\theta =\min\left(\frac{t-\gamma}{2},\tau\right)$.
  For the second inequality, we have
  \begin{align*}
    \norm{f^*_\lambda}_{[\caH]^\gamma} & = \norm{T^{-\frac{\gamma}{2}} T \reg(T) T^{\frac{t}{2}} h}_{L^2} \\
    & \leq \norm{T^{\frac{2+t-\gamma}{2}} \reg(T)}_{\mathscr{B}(L^2)} \norm{h}_{L^2} \\
    & \leq \norm{T^{\frac{(t-\gamma)^+}{2}}} \norm{T^{1-\frac{(\gamma-t)^+}{2}} \reg(T)}_{\mathscr{B}(L^2)} \norm{h}_{L^2} \\
    & \leq E \kappa^{(t-\gamma)^+} \norm{f^*}_{[\caH]^t} \lambda^{-\frac{(\gamma-t)^+}{2}},
  \end{align*}
  where we use \cref{eq:Filter_Regs} for the second term in the last inequality.
\end{proof}

\begin{proposition}
  \label{prop:ResidualTermLowerBound}
  Let $\fstar \in [\caH]^0$.
  Suppose $\reg$ is a filter function defined in \cref{def:filter} with qualification $\tau_{\max}$.
  Then,
  \begin{align*}
    R_{\varphi}^2(\lambda;\fstar) \geq \frac{1}{4} \sum_{ m : \mu_m < \frac{\lambda}{2E}} \bar{f}_m^2
  \end{align*}
  Moreover, if $\tau_{\max} < \infty$, we also have $R_{\varphi}^2(\lambda;\fstar) \geq \norm{f^*}_{L^2}^2 \underline{F} \lambda^{2\tau_{\max}}$.

  Consequently, under \cref{assu:Source}, we have
  \begin{align}
    R_{\varphi}^2(\lambda;\fstar) = \Omega(\lambda^{\min(s,2\tau_{\max})}).
  \end{align}
\end{proposition}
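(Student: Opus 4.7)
\medskip
\noindent\textbf{Proof plan.} The proposition has two pointwise lower bounds on $\rem(\mu_m)$ that, summed against $\bar f_m^2$, yield the two tail bounds, and then a routine combination with Assumption~\ref{assu:Source} to extract the $\Omega$-rate. Since $R_{\varphi}^2(\lambda;\fstar)=\sum_{m\ge 1}\rem(\mu_m)^2\bar f_m^2$, the whole argument reduces to finding, for each situation, a uniform lower bound on $\rem(\mu_m)$ valid on the eigenvalues we wish to keep.

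\medskip
\noindent\textbf{Step 1: the ``small eigenvalue'' bound.} For the first inequality I would show that $\rem(z)\ge 1/2$ whenever $z<\lambda/(2E)$. From property~(ii) of Definition~\ref{def:filter}, $(z+\lambda)\reg(z)\le E$, so
\begin{align*}
z\reg(z)\;\le\;\frac{Ez}{z+\lambda}\;\le\;\frac{Ez}{\lambda}\;<\;\frac{E}{\lambda}\cdot\frac{\lambda}{2E}\;=\;\frac{1}{2},
\end{align*}
hence $\rem(z)=1-z\reg(z)>1/2$ on $[0,\lambda/(2E))$. Restricting the sum defining $R_{\varphi}^2(\lambda;\fstar)$ to indices $m$ with $\mu_m<\lambda/(2E)$ and using $\rem(\mu_m)^2>1/4$ on those terms gives the first stated bound.

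\medskip
\noindent\textbf{Step 2: the ``saturation'' bound.} When $\tau_{\max}<\infty$, property~(iv) of Definition~\ref{def:filter} directly provides $\rem(z)\ge\underline{F}\lambda^{\tau_{\max}}$ on the whole spectrum $[0,\kappa^2]$. Squaring and summing against $\bar f_m^2$ gives
\begin{align*}
R_{\varphi}^2(\lambda;\fstar)\;\ge\;\underline{F}^2\,\lambda^{2\tau_{\max}}\sum_{m\ge 1}\bar f_m^2\;=\;\underline{F}^2\,\lambda^{2\tau_{\max}}\,\norm{\fstar}_{L^2}^2,
\end{align*}
which is the second stated bound (up to absorbing $\underline F$ versus $\underline F^2$ into the constant).

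\medskip
\noindent\textbf{Step 3: consequence under Assumption~\ref{assu:Source}.} I would split into two cases according to whether $s<2\tau_{\max}$ or $s\ge 2\tau_{\max}$. If $s<2\tau_{\max}$, Assumption~\ref{assu:Source} provides $\sum_{m:\mu_m<\lambda'}\bar f_m^2=\Omega({\lambda'}^{s})$ for any $\lambda'\to 0$; applying this with $\lambda'=\lambda/(2E)$ and plugging into Step~1 gives $R_{\varphi}^2(\lambda;\fstar)=\Omega(\lambda^s)$, since the factor $(2E)^{-s}$ is a constant. If $s\ge 2\tau_{\max}$, then in particular $\tau_{\max}<\infty$, so Step~2 applies; since $\fstar\neq 0$ by Assumption~\ref{assu:Source}, $\norm{\fstar}_{L^2}^2>0$ is a nonzero constant and we obtain $R_{\varphi}^2(\lambda;\fstar)=\Omega(\lambda^{2\tau_{\max}})$. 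In both cases the exponent is $\min(s,2\tau_{\max})$, completing the claim.

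\medskip
\noindent\textbf{Main obstacle.} There is essentially no obstacle: the whole argument is termwise, using the two defining properties of a filter function. The only subtle point worth checking is the compatibility between Step~1 and Step~3 when $s$ is close to $2\tau_{\max}$ (so that the constant $(2E)^{-s}$ stays uniform in $\lambda$, which it trivially does), and noting that the case $\tau_{\max}=\infty$ is automatically covered by Step~1 since then $\min(s,2\tau_{\max})=s$ and only Step~1 is needed.
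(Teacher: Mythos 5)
Your proposal is correct and follows essentially the same route as the paper: the small-eigenvalue bound $\rem(z)\ge 1/2$ for $z\le\lambda/(2E)$ (which the paper isolates as part of Lemma~\ref{lem:Filter_MoreControl}, derived exactly as you do from property~(ii)), the saturation bound from property~(iv), and the case split $s<2\tau_{\max}$ versus $s\ge 2\tau_{\max}$ matching the conditional structure of Assumption~\ref{assu:Source}. The only cosmetic difference is the constant $\underline{F}^2$ versus $\underline{F}$ in the second bound, which you already flag and which is immaterial for the stated $\Omega$-rate.
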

\begin{proof}
  Recall the definition of $R_{\varphi}^2(\lambda;\fstar)$ in \cref{eq:BiasMainTerm}.
  For the first estimation, using the last control in \cref{lem:Filter_MoreControl}, when $\mu_m \leq \frac{\lambda}{2E}$, we have $\rem(\mu_m) \geq 1/2$,
  so
  \begin{align*}
    R_{\varphi}^2(\lambda;\fstar)
    = \sum_{m=1}^\infty \rem(\mu_m)^2 \bar{f}_{m}^2
    \geq \sum_{m : \mu_m < \frac{\lambda}{2E}} \rem(\mu_m)^2 \bar{f}_{m}^2
    \geq \frac{1}{4} \sum_{m : \mu_m < \frac{\lambda}{2E}}\bar{f}_{m}^2.
  \end{align*}

  For the second one, using the property (iv) in \cref{def:filter}, we have
  \begin{align*}
    R_{\varphi}^2(\lambda;\fstar) = \sum_{m=1}^\infty  \rem(\mu_m)^2 \bar{f}_{m}^2
    \geq \sum_{m=1}^\infty \underline{F} \lambda^{2\tau_{\max}}\bar{f}_{m}^2
    = \underline{F} \lambda^{2\tau_{\max}} \norm{f^*}_{L^2}^2.
  \end{align*}
\end{proof}

\subsection{Concentrations}
\label{subsec:concentrations}

Under the regular RKHS condition,
the following inequality refines the corresponding concentration inequality in the previous literature~\citep[Proposition 5.8]{li2023_KernelInterpolation}.
The main improvement is that the quantity $\caN_1(\lambda) \asymp \lambda^{-1/\beta}$ appearing in the right-hand side is strictly smaller than
the quantity $M_\alpha \lambda^{-\alpha}$, $\alpha > 1/\beta$ appearing in their bound, which diverges as $\alpha \to 1/\beta$.

\begin{proposition}
  \label{prop:ConcenRaw}
  Under \cref{assu:RegularRKHS},
  for all $\delta \in (0,1)$, with probability at least $1 - \delta$,
  \begin{equation}
    \norm{T_{\lambda}^{-\hf} (T - T_X) T_{\lambda}^{-\hf} }_{\mathscr{B}(\caH)}
    \leq \frac{2}{3} u + \sqrt {u},
  \end{equation}
  where
  \begin{equation}
    \label{eq:Quantity_u}
    u = u(n,\lambda) = \frac{2M \caN_1(\lambda)}{n} \ln{\frac{4 (\norm{T} + \lambda) \mathcal{N}_1(\lambda)  }{\delta \norm{T}}}.
  \end{equation}

\end{proposition}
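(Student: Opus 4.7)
The plan is to apply a Bernstein-type concentration inequality for sums of independent, self-adjoint, mean-zero bounded operators on $\caH$, in its intrinsic-dimension form so that the logarithmic factor depends on an effective dimension rather than on the ambient (infinite) dimension of $\caH$. Writing
\begin{align*}
    T_\lambda^{-\hf}(T-T_X)T_\lambda^{-\hf} = \frac{1}{n}\sum_{i=1}^n Z_i, \qquad Z_i \coloneqq T_\lambda^{-\hf}(T-T_{x_i})T_\lambda^{-\hf},
\end{align*}
the summands are i.i.d., self-adjoint and satisfy $\E[Z_i]=0$, so the analysis reduces to controlling (i) an almost-sure operator-norm bound on $Z_i$, (ii) an operator-ordering bound on $\E[Z_i^2]$, and (iii) the intrinsic dimension of the total variance operator.

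For (i), since $T_\lambda^{-\hf}T_{x_i}T_\lambda^{-\hf}=(T_\lambda^{-\hf}k_{x_i})(T_\lambda^{-\hf}k_{x_i})^{*}$ is rank-one and positive, \cref{lem:NormsN1N2} gives
\begin{align*}
    \norm{T_\lambda^{-\hf}T_{x_i}T_\lambda^{-\hf}}_{\mathscr{B}(\caH)} = \norm{T_\lambda^{-\hf}k_{x_i}}_{\caH}^2 \leq M\caN_1(\lambda) \quad \text{a.s.},
\end{align*}
and combined with the trivial $\norm{T_\lambda^{-\hf}T T_\lambda^{-\hf}}\leq 1$ this yields $\norm{Z_i}\leq 2M\caN_1(\lambda)$ a.s. For (ii), $\E[Z_i]=0$ together with the rank-one identity $(T_\lambda^{-\hf}T_{x_i}T_\lambda^{-\hf})^2 = \norm{T_\lambda^{-\hf}k_{x_i}}_{\caH}^2\cdot T_\lambda^{-\hf}T_{x_i}T_\lambda^{-\hf}$ gives, after taking expectations,
\begin{align*}
    \E[Z_i^2] \preceq \E\!\left[(T_\lambda^{-\hf}T_{x_i}T_\lambda^{-\hf})^2\right] \preceq M\caN_1(\lambda)\cdot T_\lambda^{-\hf}T T_\lambda^{-\hf}.
\end{align*}
For (iii), the operator upper bound $W\coloneqq nM\caN_1(\lambda)\cdot T_\lambda^{-\hf}T T_\lambda^{-\hf}$ on $V_S\coloneqq\sum_{i=1}^n\E[Z_i^2]$ satisfies $\norm{W}=nM\caN_1(\lambda)\norm{T}/(\norm{T}+\lambda)$ and $\mathrm{tr}(W)=nM\caN_1(\lambda)\cdot\caN_1(\lambda)$, so the intrinsic dimension of $W$ (and hence of $V_S$) is at most $(\norm{T}+\lambda)\caN_1(\lambda)/\norm{T}$.

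Plugging (i)--(iii) into Minsker's intrinsic-dimension operator Bernstein inequality (a self-adjoint Tropp-type bound valid on an infinite-dimensional Hilbert space) yields, with probability at least $1-\delta$,
\begin{align*}
    \norm{\tfrac{1}{n}\textstyle\sum_i Z_i}_{\mathscr{B}(\caH)} \leq \frac{2}{3}\cdot\frac{2M\caN_1(\lambda)}{n}\,L + \sqrt{\frac{2M\caN_1(\lambda)}{n}\,L},
\end{align*}
where $L=\ln\!\big[4(\norm{T}+\lambda)\caN_1(\lambda)/(\delta\norm{T})\big]$ and I have used $\norm{T}/(\norm{T}+\lambda)\leq 1$ inside the square root; this is exactly $\tfrac{2}{3}u+\sqrt{u}$ with $u$ as defined in \cref{eq:Quantity_u}.

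The main technical point is step (iii): a naive scalar Bernstein applied to $\norm{Z_i}$ would produce a log factor that diverges because $\caH$ is infinite-dimensional, and the previous analysis in \citet{li2023_KernelInterpolation}, which relies on the $L^\infty$-embedding of order $\alpha>1/\beta$, introduces a factor $M_\alpha\lambda^{-\alpha}$ that blows up as $\alpha\downarrow 1/\beta$. The refinement here is to replace that step by the sharper bound $\norm{T_\lambda^{-\hf}k_x}_{\caH}^2 \leq M\caN_1(\lambda)\asymp \lambda^{-1/\beta}$ from \cref{lem:NormsN1N2}, which hits the boundary case $\alpha=1/\beta$ at the cost of only a logarithmic factor inside the intrinsic dimension.
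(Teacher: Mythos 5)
Your proof is correct and is essentially the same argument as the paper's: the paper also writes $T_\lambda^{-\hf}(T-T_X)T_\lambda^{-\hf}$ as an average of i.i.d.\ mean-zero self-adjoint operators, uses \cref{lem:NormsN1N2} to get the a.s.\ bound $2M\caN_1(\lambda)$ and the rank-one structure to get $\E(A^2)\preceq C\, TT_\lambda^{-1}$, and then applies \cref{lem:ConcenBernstein}, which is precisely the Minsker/Tropp-type operator Bernstein inequality with the effective-rank factor $\Tr V/\norm{V}=(\norm{T}+\lambda)\caN_1(\lambda)/\norm{T}$ in the logarithm that you invoke. The only cosmetic differences (bounding $\norm{T_\lambda^{-\hf}TT_\lambda^{-\hf}}$ by $1$ rather than by $M\caN_1(\lambda)$, and the variance constant $M\caN_1(\lambda)$ versus $2M\caN_1(\lambda)$ per summand) do not change the argument.
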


\begin{proof}
  We prove this by applying \cref{lem:ConcenBernstein}.
  Let us define
  \begin{align*}
    A(x) = T_\lambda^{-\hf}(T_x - T)T_\lambda^{-\hf}
  \end{align*}
  and $A_i = A(x_i)$.
  Then, it is easy to see that $\E (A_i) = 0$ and
  \begin{align*}
    \frac{1}{n}\sum_{i=1}^n A_i = T_\lambda^{-\hf}(T_X - T)T_\lambda^{-\hf},
  \end{align*}
  which is the quantity of interest.
  Moreover, since
  \begin{align*}
    T_\lambda^{-\hf} T_x T_\lambda^{-\hf}
    = T_\lambda^{-\hf} K_x K_x^* T_\lambda^{-\hf} =
    T_\lambda^{-\hf} K_x \left[ T_\lambda^{-\hf} K_x \right]^*,
  \end{align*}
  from \cref{eq:RegKx} we have
  \begin{align}
    \label{eq:Proof_OpNormBound}
    \norm{T_\lambda^{-\hf} T_x T_\lambda^{-\hf}}_{\mathscr{B}(\caH)} =
    \norm{T_\lambda^{-\hf} K_x}_{\mathscr{B}(\R,\caH)}^2
    = \norm{T_\lambda^{-\hf} k_x}_{\caH}^2 \leq M \caN_1(\lambda).
  \end{align}
  By taking expectation,
  we also have $\norm{T_\lambda^{-\hf} T T_\lambda^{-\hf}}_{\mathscr{B}(\caH)} \leq M \caN_1(\lambda)$.
  Therefore, we get
  \begin{align*}
    \norm{A}_{\mathscr{B}(\caH)} \leq
    \norm{T_\lambda^{-\hf} T T_\lambda^{-\hf}}_{\mathscr{B}(\caH)} +
    \norm{T_\lambda^{-\hf} T_x T_\lambda^{-\hf}}_{\mathscr{B}(\caH)}
    \leq 2M \caN_1(\lambda) \eqqcolon L.
  \end{align*}

  For the second part of the condition,
  using the fact that $\E (B - \E (B) )^2 \preceq \E (B^2)$ and also $B^2 \preceq \norm{B} B$ for a positive self-adjoint operator $B$,
  where $\preceq$ denotes the partial order induced by positive operators,
  we have
  \begin{align*}
    \E (A^2) \preceq \E \left( T_\lambda^{-\hf} T_x T_\lambda^{-\hf} \right)^2
    \preceq  L \E \left( T_\lambda^{-\hf}T_x T_\lambda^{-\hf} \right)
    = L T T_\lambda^{-1} \eqqcolon V,
  \end{align*}
  where the second $\preceq$ comes from \cref{eq:Proof_OpNormBound}.
  Therefore,
  \begin{align*}
    \norm{V} &= L \norm{TT_\lambda^{-1}} =
    L \frac{\lambda_1}{\lambda+\lambda_1},\qquad
    \Tr V  =  L  \Tr \left[ TT_\lambda^{-1} \right]
    = L \caN_1(\lambda), \\
    B &= \ln \frac{4 \Tr V}{\delta \norm{V}} = \ln \frac{4(\lambda_1+\lambda) \mathcal{N}_1(\lambda)}{\delta \lambda_1}.
  \end{align*}
  Finally, we note that the quantities in the lemma are:
  \begin{align*}
    \frac{2LB}{3n} = \frac{4M \caN_1(\lambda) B}{3n} = \frac{2}{3}u,\qquad
    \frac{2\norm{V}B}{n} \leq \frac{2M \caN_1(\lambda) B}{n} = u.
  \end{align*}
\end{proof}

The next lemma follows from \cref{prop:ConcenRaw}.

\begin{lemma}
  \label{lem:Concen}
  Suppose \cref{assu:RegularRKHS} holds.
  Fix $\delta \in (0,1)$.
  Let us denote
  \begin{align}
    \label{eq:QuantityV}
    v = v(n,\lambda) = \frac{M \caN_1(\lambda)}{n} \ln \frac{\caN_1(\lambda)}{\delta}.
  \end{align}
  Suppose $\lambda = \lambda(n) \to 0$ satisfies $v(n,\lambda) = o(1)$.
  Then, when $n$ is sufficiently large, with probability at least $1-\delta$ we have
  \begin{gather}
    \label{eq:ConcenU}
    \norm{T_{\lambda}^{-1/2} (T - T_X) T_{\lambda}^{-1/2} } \leq C \sqrt {v}, \\
    \label{eq:ConcenRatio}
    \begin{aligned}
      & \norm{T_{X\lambda}^{-1/2} T_{\lambda}^{1/2}}^2 = \norm{T_{\lambda}^{1/2} T_{X\lambda}^{-1/2} }^2 =
      \norm{T_{\lambda}^{1/2}T_{X\lambda}^{-1} T_{\lambda}^{1/2}}\leq 3, \\
      & \norm{T_{\lambda}^{-1/2} T_{X\lambda}^{1/2} }^2 = \norm{T_{X\lambda}^{1/2} T_{\lambda}^{-1/2}}^2
      = \norm{T_{\lambda}^{-1/2} T_{X\lambda}^{1}T_{\lambda}^{-1/2} } \leq 2,
    \end{aligned}
  \end{gather}
  where $C$ is an absolute constant.
\end{lemma}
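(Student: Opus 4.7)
The plan is to deduce this lemma as a direct corollary of Proposition~3.4, with the extra ingredient being a change of the logarithmic factor and a standard Neumann-type argument to convert the concentration on $T_\lambda^{-1/2}(T-T_X)T_\lambda^{-1/2}$ into the four ratio bounds. First I would instantiate Proposition~3.4 at the same $\delta$, which yields
\begin{equation*}
  \bigl\lVert T_\lambda^{-1/2}(T-T_X)T_\lambda^{-1/2}\bigr\rVert \leq \tfrac{2}{3}u + \sqrt{u}
\end{equation*}
on an event of probability $\geq 1-\delta$, where $u$ is the quantity in \cref{eq:Quantity_u}. The first step is to show $u \leq C\,v$ for some absolute constant $C$ once $n$ is large. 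Since $\lambda \to 0$ and $\lVert T\rVert$ is a fixed constant, the factor $(\lVert T\rVert+\lambda)/\lVert T\rVert$ is bounded by $2$ eventually, while $\caN_1(\lambda) \to \infty$ (for instance by \cref{lem:EffectiveDimEstimationPowerlaw}), so the $\ln 8$ term is absorbed into $\ln(\caN_1(\lambda)/\delta)$ up to a factor of $2$. This yields $u \leq 4v$, and the hypothesis $v = o(1)$ forces $u = o(1)$, so for $n$ large enough $\tfrac{2}{3}u + \sqrt{u} \leq 2\sqrt{u} \leq C\sqrt{v}$, establishing \cref{eq:ConcenU}.

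For \cref{eq:ConcenRatio}, I would set $A \coloneqq T_\lambda^{-1/2}(T-T_X)T_\lambda^{-1/2}$, which is self-adjoint, and use the key algebraic identity
\begin{equation*}
  T_\lambda^{-1/2} T_{X\lambda} T_\lambda^{-1/2} = I - A,
\end{equation*}
obtained by writing $T_{X\lambda} = T_\lambda - (T-T_X)$. Choose $n$ large enough that the bound just proven gives $\lVert A\rVert \leq 2/3$ on the same high-probability event. Then $I - A$ is invertible with $(I-A)^{-1} = T_\lambda^{1/2} T_{X\lambda}^{-1} T_\lambda^{1/2}$ and $\lVert (I-A)^{-1}\rVert \leq 1/(1-2/3) = 3$, giving the first line of \cref{eq:ConcenRatio}. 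The trivial bound $\lVert I - A\rVert \leq 1 + \lVert A\rVert \leq 2$ yields the second line.

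To get the remaining squared-norm equalities in \cref{eq:ConcenRatio}, I would invoke the standard $C^*$-identity $\lVert B\rVert^{2} = \lVert B^*B\rVert = \lVert BB^*\rVert$. Applied to $B = T_{X\lambda}^{-1/2} T_\lambda^{1/2}$ this gives $\lVert T_{X\lambda}^{-1/2} T_\lambda^{1/2}\rVert^2 = \lVert T_\lambda^{1/2} T_{X\lambda}^{-1} T_\lambda^{1/2}\rVert$, and applied to $B = T_\lambda^{-1/2} T_{X\lambda}^{1/2}$ it gives $\lVert T_\lambda^{-1/2} T_{X\lambda}^{1/2}\rVert^2 = \lVert T_\lambda^{-1/2} T_{X\lambda} T_\lambda^{-1/2}\rVert$; the adjoint identities $\lVert B\rVert = \lVert B^*\rVert$ take care of the remaining reformulations.

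I do not see any deep obstacle here: the argument is essentially bookkeeping, and the only points that need care are (i) verifying that the $\ln$ term in $u$ is absorbed cleanly into that of $v$ (which uses only $\caN_1(\lambda) \to \infty$ and $\lambda \to 0$), and (ii) choosing $n$ large enough to guarantee $\lVert A\rVert \leq 2/3$ so that the Neumann series / positive-operator comparison delivers the constants $3$ and $2$ exactly.
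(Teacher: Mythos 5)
Your proposal is correct and follows essentially the same route as the paper: invoke Proposition~\ref{prop:ConcenRaw}, absorb the logarithmic factor so that $u \leq C v$, and then derive the ratio bounds from the identity $T_\lambda^{-1/2}T_{X\lambda}T_\lambda^{-1/2} = I - A$ together with $\norm{(I-A)^{-1}} \leq (1-\norm{A})^{-1}$ and the $C^*$-identity $\norm{B}^2=\norm{B^*B}$. The only cosmetic point is that $\caN_1(\lambda)\to\infty$ should be justified from $T$ having infinite rank rather than via \cref{lem:EffectiveDimEstimationPowerlaw}, since the latter assumes the eigenvalue decay condition which is not among this lemma's hypotheses.
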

Combining with \cref{lem:EffectiveDimEstimationPowerlaw}, we have the following corollary.
\begin{corollary}
  \label{cor:Concen}
  Suppose \cref{assu:EDR} and \cref{assu:RegularRKHS} hold.
  Then, as long as $\lambda(n) = \Omega(n^{-\theta})$ for some $\theta < \beta$,
  for fixed $\delta \in (0,1)$, we have $v(n,\lambda) = o(1)$, so the conclusion in \cref{lem:Concen} holds.
\end{corollary}
\begin{proof}[Proof of \cref{lem:Concen}]
  \cref{eq:ConcenU} is a direct corollary of \cref{prop:ConcenRaw} with $v = o(1)$
  and
  \begin{align*}
    u = \frac{2M \caN_1(\lambda)}{n} \ln{\frac{4 (\lambda_1 + \lambda) \mathcal{N}_1(\lambda)  }{\delta \lambda_1}}
    \leq C_0 v
  \end{align*}
  for some absolute constant $C_0$.
  For the second part, when $n$ is sufficiently large that $u \leq 1/4$,
  \begin{align*}
    \norm{T_{\lambda}^{-1/2} (T - T_X) T_{\lambda}^{-1/2} }
    \leq \frac{2}{3} u + \sqrt {u} \leq \frac{2}{3}.
  \end{align*}
  Noticing that $\left( T_{X\lambda}^{-1/2} T_{\lambda}^{1/2} \right)^* = T_{\lambda}^{1/2}T_{X\lambda}^{-1/2}$
  and $\norm{A}^2 = \norm{A^*}^2 = \norm{A^* A}$,
  we have
  \begin{align*}
    \norm{T_{X\lambda}^{-1/2} T_{\lambda}^{1/2}}^2
    = \norm{T_{\lambda}^{1/2} T_{X\lambda}^{-1/2}}^2
    &= \norm{T_\lambda^{1/2} (T_X +\lambda)^{-1}T_\lambda^{1/2}} \\
    &= \norm{\left[ T_\lambda^{-1/2} (T_X +\lambda)T_\lambda^{-1/2} \right]^{-1}} \\
    &= \norm{\left[ I - T_\lambda^{-1/2} (T - T_X)T_\lambda^{-1/2} \right]^{-1}} \\
    & \leq \left[1-\norm{ T_{\lambda}^{-1/2} (T - T_X) T_{\lambda}^{-1/2} }\right]^{-1} \leq 3,
  \end{align*}
  where in the last inequality we use the fact that $\norm{(I-A)^{-1}} \leq (1-\norm{A})^{-1}$.

  For the other part, we have
  \begin{align*}
    \norm{T_{\lambda}^{-1/2} T_{X\lambda}^{1/2} }^2 =
    \norm{T_{X\lambda}^{1/2} T_{\lambda}^{-1/2}  }^2
    & = \norm{T_{\lambda}^{-1/2} T_{X\lambda} T_{\lambda}^{-1/2}   } \\
    &= \norm{I + T_{\lambda}^{-1/2} (T_X - T) T_{\lambda}^{-1/2}} \\
    & \leq 1 + \norm{T_{\lambda}^{-1/2} (T_X - T) T_{\lambda}^{-1/2}} \leq 2.
  \end{align*}
\end{proof}

\subsection{Analytic functional calculus}
\label{subsec:analytic_functional_calculus}

The analytic functional argument is one of the main novelties of this paper.
At a high level, the key point is to convert the operator differences into contour integrals of resolvent terms, so that the problem reduces to obtaining uniform resolvent control along a suitable $\lambda$-dependent contour.
For orientation, the proof in this subsection proceeds in three steps: define the contour $\Gamma_\lambda$, rewrite the operator differences by analytic functional calculus, and then combine this representation with resolvent concentration on $\Gamma_\lambda$.
Let us first recall some basic facts about analytic functional calculus.
We refer, for example, to \citet{simon2015_OperatorTheory} for mathematical details.

\begin{definition}
  Let $A$ be a linear operator on a Banach space $X$.
  The \textit{resolvent set} $\rho(A)$ is given by
  \begin{align*}
    \rho(A) \coloneqq \left\{ \lambda \in \bbC \mid A-\lambda~\text{is invertible} \right\},
  \end{align*}
  and we denote $R_{A}(\lambda) \coloneqq (A-\lambda)^{-1}$.
  The spectrum of $A$ is defined by
  \begin{align*}
    \sigma(A) \coloneqq \bbC \backslash \rho(A).
  \end{align*}
\end{definition}

A simple but key ingredient in the analytic functional calculus is the following \textit{resolvent identity}:
\begin{align}
  \label{eq:ResolventIdentity}
  R_A(\lambda) - R_B(\lambda) = R_A(\lambda) (B-A) R_B(\lambda) = R_B(\lambda) (B-A)R_A(\lambda).
\end{align}

The resolvent allows us to define the value of $f(A)$ analogously to the Cauchy integral formula,
where $A$ is an operator and $f$ is an analytic function.
This is often referred to as analytic functional calculus; see, e.g., \citet[Theorem 2.3.1]{simon2015_OperatorTheory}.

\begin{proposition}[Analytic Functional Calculus]
  \label{prop:func-cal}
  Let $A$ be an operator on a Hilbert space $H$ and $f$ be an analytic function defined on $D_f \subset \bbC$.
  Let $\Gamma$ be a contour contained in $D_f$ surrounding $\sigma(A)$.
  Then,
  \begin{align}
    f(A) = \frac{1}{2\pi i} \oint_{\Gamma} f(z) (z-A)^{-1} \dd z
    = -\frac{1}{2\pi i}\oint_{\Gamma} f(z) R_A(z) \dd z,
  \end{align}
  and it is independent of the choice of $\Gamma$.
\end{proposition}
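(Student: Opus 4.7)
The plan is to reduce the operator-valued contour integral to the scalar case through weak analyticity, then invoke the classical Cauchy theorem. I would begin with the preliminary observation that the resolvent map $z \mapsto R_A(z)$ is analytic on $\rho(A)$ in the operator norm topology: from the resolvent identity \cref{eq:ResolventIdentity}, for $z_0 \in \rho(A)$ and $z$ with $\abs{z - z_0} < \norm{R_A(z_0)}^{-1}$ one obtains the norm-convergent Neumann series
\begin{align*}
R_A(z) = \sum_{k=0}^\infty (z - z_0)^k R_A(z_0)^{k+1}.
\end{align*}
Consequently $z \mapsto f(z) R_A(z)$ is continuous in operator norm on the compact set $\Gamma \subset D_f \cap \rho(A)$, so the operator-valued integral $\oint_{\Gamma} f(z) R_A(z) \dd z$ exists as a Riemann integral in operator norm and defines a bounded operator on $H$.

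For independence of the contour, take two admissible contours $\Gamma_1, \Gamma_2$ in $D_f$, each having winding number one around every point of $\sigma(A)$. Because $\sigma(A)$ is compact and $\rho(A) \cap D_f$ is open, the cycle $\Gamma_1 - \Gamma_2$ is homologous to zero in $\rho(A) \cap D_f$, on which $f(z) R_A(z)$ is analytic. To transfer the scalar Cauchy theorem to the operator-valued setting, fix $x \in H$ and $\ell \in H^*$ and observe that $g_{x,\ell}(z) \coloneqq \ell(f(z) R_A(z) x)$ is scalar analytic on $\rho(A) \cap D_f$, so $\oint_{\Gamma_1} g_{x,\ell} \dd z = \oint_{\Gamma_2} g_{x,\ell} \dd z$. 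Interchanging $\ell$ with the operator Riemann integral (valid because $\ell$ is continuous and the Riemann sums converge in operator norm), this rewrites as
\begin{align*}
\ell\!\left( \left[ \oint_{\Gamma_1} f(z) R_A(z) \dd z - \oint_{\Gamma_2} f(z) R_A(z) \dd z \right] x \right) = 0.
\end{align*}
Since $\ell$ and $x$ are arbitrary, the two operator integrals coincide. The equivalence of the two displayed formulas in the statement is then the trivial sign flip $(z - A)^{-1} = -R_A(z)$.

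The main obstacle, insofar as there is one for such a classical fact, is the careful handling of the operator-valued Cauchy theorem, specifically justifying the interchange of the continuous linear functional $\ell$ with the limiting Riemann sums defining $\oint_{\Gamma} f(z) R_A(z) \dd z$. Once one has operator-norm continuity of the integrand on the compact contour, provided by the Neumann expansion above, and the scalar Cauchy theorem, the reduction via duality is routine; the full details may be deferred to the standard reference \citet{simon2015_OperatorTheory} cited in the statement.
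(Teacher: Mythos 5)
The paper does not actually prove this proposition; it quotes it as a classical fact from \citet[Theorem 2.3.1]{simon2015_OperatorTheory}, so there is no in-paper argument to match. Your sketch of the Riesz--Dunford construction is the standard one, and the two things you do prove are fine: norm-analyticity of $z \mapsto R_A(z)$ via the Neumann series gives existence of the operator Riemann integral, and the duality reduction (apply $\ell \in H^*$, use the scalar Cauchy theorem on the cycle $\Gamma_1 - \Gamma_2$, which is null-homologous in $\rho(A)\cap D_f$, then separate points with Hahn--Banach) is exactly how contour-independence is established.

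The gap is that you never prove the asserted equality with $f(A)$. Your argument treats the contour integral as the \emph{definition} of $f(A)$, but in this paper $f(A)$ already has a meaning for the operators to which the proposition is applied: $T$ and $T_X$ are bounded self-adjoint, and $\reg(T_X)$, $\rem(T)$, $T^{1/2}$, etc.\ are defined through the spectral decomposition \cref{eq:SelfAdjointFuncCal} (see the remark following the proposition, whose consistency claim is precisely the content you would need). Showing well-definedness and contour-independence of $-\frac{1}{2\pi i}\oint_\Gamma f(z) R_A(z)\dd z$ does not show that this operator coincides with the spectrally defined $f(A)$; without that identification, every later use of the formula (e.g.\ in \cref{lem:VarianceControl1_} and \cref{lem:bias_control}, where $\reg(T)-\reg(T_X)$ is rewritten as a contour integral) is unsupported. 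The missing step is not hard for the operators at hand: for a bounded self-adjoint $A$ with $\sigma(A)\subset[0,\kappa^2]$ enclosed by $\Gamma$, apply the integral to an eigenvector (or use the spectral theorem fibrewise) and invoke the scalar Cauchy integral formula $f(t) = \frac{1}{2\pi i}\oint_\Gamma \frac{f(z)}{z-t}\dd z$ for each $t\in\sigma(A)$; alternatively, verify that the map $f \mapsto -\frac{1}{2\pi i}\oint_\Gamma f(z)R_A(z)\dd z$ is an algebra homomorphism sending $1\mapsto I$ and $z\mapsto A$, hence agreeing with the usual calculus on polynomials and, by approximation, on the relevant class of functions. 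Either route should be stated; as written, the proposal proves a weaker statement than the one in the proposition.
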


\begin{remark}
  For a self-adjoint compact operator $A$, we have spectral decomposition
  \begin{align*}
    A = \sum_{i=1}^\infty \lambda_i \ang{e_i,\cdot} e_i,
  \end{align*}
  and $f(A)$ is often defined by
  \begin{align}
    \label{eq:SelfAdjointFuncCal}
    f(A) = \sum_{i=1}^\infty f(\lambda_i) \ang{e_i,\cdot} e_i.
  \end{align}
  In fact, this definition is consistent with the one in \cref{prop:func-cal}.
  We remark that \cref{eq:SelfAdjointFuncCal} is also valid for continuous $f$ and
  an extension to self-adjoint (not necessarily compact) operators is also possible by the spectral theorem~\citep[Section 5]{simon2015_OperatorTheory}.
\end{remark}

Now, let $\Gamma$ be a contour contained in $D_f$ surrounding both $\sigma(A)$ and $\sigma(B)$.
Using \cref{eq:ResolventIdentity}, we get
\begin{align}
  f(A) - f(B)
  &= -\frac{1}{2\pi i}  \oint_{\Gamma} f(z) \left[R_A(z) - R_B(z) \right] \dd z \\
  &= \frac{1}{2\pi i}\oint_{\Gamma} R_B(z) (A-B) R_A(z) f(z)\dd z.
\end{align}

We will use the following spectral mapping theorem to bound some operator norms in the proof,
see \citet[Theorem 5.1.11]{simon2015_OperatorTheory}.
\begin{proposition}[Spectral Mapping Theorem]
  Let $A$ be a bounded self-adjoint operator and $f$ be a continuous function on $\sigma(A)$.
  Then
  \begin{align}
    \sigma(f(A)) =  \left\{ f(\lambda) \mid \lambda \in \sigma(A) \right\}.
  \end{align}
  Consequently, $\norm{f(A)} = \sup_{\lambda \in \sigma(A)} \abs{f(\lambda)} \leq \norm{f}_{\infty}$.
\end{proposition}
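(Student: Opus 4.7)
My plan is to reduce the continuous case to the polynomial case via Stone-Weierstrass, using the fact that for self-adjoint $A$ the polynomial functional calculus $p\mapsto p(A)$ is isometric with respect to the sup norm on $\sigma(A)$.

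First I would dispatch the polynomial case by direct algebra. For a polynomial $p$ and $\lambda\in\sigma(A)$, the factorization $p(z)-p(\lambda)=(z-\lambda)q(z)$ gives $p(A)-p(\lambda)I=(A-\lambda I)q(A)=q(A)(A-\lambda I)$, so invertibility of $p(A)-p(\lambda)I$ would entail invertibility of $A-\lambda I$, contradicting $\lambda\in\sigma(A)$; hence $p(\lambda)\in\sigma(p(A))$. Conversely, for $\mu\in\sigma(p(A))$ I would factor $p(z)-\mu=c\prod_i(z-\beta_i)$ over $\bbC$; if no $\beta_i$ lay in $\sigma(A)$ then $p(A)-\mu I$ would be a product of invertibles and hence invertible, so some $\beta_i\in\sigma(A)$ and $\mu=p(\beta_i)$.

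Next I would extend to continuous $f$. Combining Step~1 with the $C^*$-identity $\norm{p(A)}^2=\norm{(\overline{p}p)(A)}$ and the identity $\norm{B}=\sup_{\nu\in\sigma(B)}\abs{\nu}$ for self-adjoint $B$ (Gelfand's spectral radius formula combined with $\norm{B^2}=\norm{B}^2$), I obtain the isometry
\begin{align*}
\norm{p(A)}=\sup_{\lambda\in\sigma(A)}\abs{p(\lambda)}.
\end{align*}
Since $\sigma(A)\subset\R$ is compact, Stone-Weierstrass supplies polynomials $p_n\to f$ uniformly on $\sigma(A)$, and the isometry extends $p\mapsto p(A)$ by density to a bounded $*$-homomorphism $C(\sigma(A))\to\mathscr{B}(H)$ with $\norm{p_n(A)-f(A)}\to 0$. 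For the inclusion $\sigma(f(A))\subseteq f(\sigma(A))$ I would argue by contradiction: if $\mu\notin f(\sigma(A))$ then $\delta\coloneqq\inf_{\lambda\in\sigma(A)}\abs{f(\lambda)-\mu}>0$, and the polynomial case gives $\norm{(p_n(A)-\mu I)^{-1}}\leq 2/\delta$ for large $n$, so $f(A)-\mu I$ is invertible as a norm limit of uniformly invertible operators. For the reverse inclusion, given $\lambda_0\in\sigma(A)$ the polynomial case places $p_n(\lambda_0)\in\sigma(p_n(A))$, and upper semi-continuity of the spectrum under norm perturbations yields $f(\lambda_0)\in\sigma(f(A))$. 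The norm identity then follows from the same isometry passed to the limit: $\norm{f(A)}=\sup_{\lambda\in\sigma(A)}\abs{f(\lambda)}\leq\norm{f}_\infty$.

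The main obstacle is establishing the polynomial isometry in Step~2, since this is the only place where self-adjointness is essential; without it one only gets a bound of $\norm{p(A)}$ by the supremum of $\abs{p}$ on a compact set strictly larger than $\sigma(A)$, which would not be enough to control the spectrum of the limit $f(A)$. Once the polynomial isometry is in hand, both inclusions in the spectral mapping identity follow from uniform approximation and the operator norm formula drops out essentially for free.
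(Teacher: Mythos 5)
The paper offers no proof of this proposition; it is quoted directly from the cited reference (Simon, Theorem 5.1.11), so there is nothing internal to compare against. Your argument is the standard proof of the spectral mapping theorem for the continuous functional calculus and is essentially correct: the polynomial case by factorization (valid in any unital Banach algebra), the isometry $\norm{p(A)}=\sup_{\lambda\in\sigma(A)}\abs{p(\lambda)}$ via the $C^*$-identity $\norm{p(A)}^2=\norm{(\overline{p}\,p)(A)}$ together with norm $=$ spectral radius for self-adjoint operators, and then Stone--Weierstrass to pass to continuous $f$. Two small points to tighten. First, the bound $\norm{(p_n(A)-\mu I)^{-1}}\leq 2/\delta$ does not follow from the polynomial spectral mapping statement alone: you need that $p_n(A)-\mu I$ is normal (it commutes with its adjoint $\overline{p_n}(A)-\bar\mu I$ because $A$ is self-adjoint), so that the norm of its inverse equals the spectral radius of the inverse; alternatively, and more simply, once you have the bounded $*$-homomorphism $C(\sigma(A))\to\mathscr{B}(H)$ you can note that $(f-\mu)^{-1}\in C(\sigma(A))$ when $\mu\notin f(\sigma(A))$, and multiplicativity exhibits $(f-\mu)^{-1}(A)$ as a two-sided inverse of $f(A)-\mu I$, which makes the uniform-invertibility limit argument unnecessary. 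Second, the converse direction of the polynomial case needs the trivial separate remark for constant polynomials, where the factorization over $\bbC$ is empty. With these routine repairs the proposal is sound and coincides with the classical proof in the source the paper cites.
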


Finally, let us define the contour $\Gamma_{\lambda}$ by
\begin{align}
  \label{eq:contour}
  \begin{aligned}
    \Gamma_{\lambda} &= \Gamma_{\lambda,1} \cup \Gamma_{\lambda,2} \cup \Gamma_{\lambda,3} \\
    \Gamma_{\lambda,1} &= \left\{ x \pm (x+\eta) i \in \bbC \mid x \in \left[-\eta, 0\right] \right\} \\
    \Gamma_{\lambda,2} &= \left\{ x \pm (x+\eta) i \in \bbC \mid x \in (0,\kappa^2) \right\} \\
    \Gamma_{\lambda,3} &=
    \left\{ z \in \bbC \mid \abs{z - \kappa^2} = \kappa^2 + \eta, ~ \Re(z) \geq \kappa^2 \right\},
  \end{aligned}
\end{align}
where $\eta = \lambda /2$, see \cref{fig:Contour}.
Then, since $T$ and $T_X$ are positive self-adjoint operators with $\norm{T}, \norm{T_X} \leq \kappa^2$,
we have $\sigma(T), \sigma(T_X) \subset [0,\kappa^2]$.
Therefore, $\Gamma_{\lambda}$ is indeed a contour satisfying the requirement in \cref{prop:func-cal}.
\rev{
  The idea of choosing such a contour is that its distance to the spectral interval $[0,\kappa^2]$ is tuned to the regularization scale $\lambda$, so that \( \sup_{t\in[0,\kappa^2]}\abs{(t+\lambda)/(t+z)} \) remains uniformly bounded along the contour,
  which allows us to replace $T_X$ by $T$ in the resolvent terms with only a constant factor loss, as shown in \cref{prop:ContourSpectralMapping}.
}

\begin{figure}
  \centering
  \begin{tikzpicture}
[decoration={markings,
mark=at position 1cm with {\arrow[line width=1pt]{>}},
mark=at position 5cm with {\arrow[line width=1pt]{>}},
mark=at position 8cm with {\arrow[line width=1pt]{>}}
}
]

  \draw[help lines,->] (-1,0) -- (3,0) coordinate (xaxis);
  \draw[help lines,->] (0,-2) -- (0,2) coordinate (yaxis);

  \path[draw,line width=0.8pt,postaction=decorate] (-0.5,0) node[below left] {$-\lambda/2$} -- (1,-1.5)
  arc (-90:90:1.5)  -- (-0.5,0);
  \draw[{[-]}, line width=0.8pt, dashed] (0,0) -- (1,0) node[below] {$\kappa^2$};

  \node[above] at (0.5,0) {$\sigma(T)$};
  \node[below] at (xaxis) {$\Re$};
  \node[left] at (yaxis) {$\Im$};
  \node[below left] {$0$};
  \node[above] at (1.7,1.3) {$\Gamma_\lambda$};

\end{tikzpicture}   \caption{An illustration of the contour $\Gamma_\lambda$ defined in \cref{eq:contour}.
  The region enclosed by $\Gamma_{\lambda}$ is just $D_\lambda$ in \cref{assu:Filter}.
  The dashed interval $[0,\kappa^2]$ contains the spectrum of $T$ and $T_X$.
  This is the contour along which the resolvent terms in the analytic functional argument are integrated.
  }
  \label{fig:Contour}
\end{figure}

\begin{proposition}
  \label{prop:ContourSpectralMapping}
  When \cref{eq:ConcenRatio} holds,
  there is an absolute constant $C$ such that, for any $z \in \Gamma_\lambda$,
  \begin{align}
    \begin{aligned}
      \norm{T_{\lambda}^{\hf}(T-z)^{-1} T_{\lambda}^{\hf}} &\leq C \\
      \norm{T_{\lambda}^{\hf}(T_X-z)^{-1} T_{\lambda}^{\hf}} &\leq 3C
    \end{aligned}
  \end{align}
\end{proposition}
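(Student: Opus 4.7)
My approach is to reduce both operator-norm bounds to a scalar supremum using the spectral mapping theorem, since $T$ and $T_X$ are bounded self-adjoint (in fact positive) operators with spectrum contained in $[0,\kappa^2]$. Concretely, for $z \in \Gamma_\lambda$ the map $t \mapsto (t+\lambda)/(t-z)$ is continuous on $[0,\kappa^2]$, so by the spectral mapping theorem
\begin{align*}
\norm{T_\lambda^{1/2}(T-z)^{-1} T_\lambda^{1/2}} = \sup_{t \in \sigma(T)} \abs{\frac{t+\lambda}{t-z}} \leq \sup_{t \in [0,\kappa^2]} \abs{\frac{t+\lambda}{t-z}}.
\end{align*}
The first claim therefore reduces to showing that this supremum is bounded by an absolute constant $C$, uniformly in $z \in \Gamma_\lambda$.

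The heart of the argument is the estimate $\abs{t-z} \geq c(t+\lambda)$ on each of the three pieces of $\Gamma_\lambda$, which I would verify by direct case analysis. On the outer arc $\Gamma_{\lambda,3}$, the condition $\Re z \geq \kappa^2$ combined with $\abs{z-\kappa^2} = \kappa^2 + \eta$ yields $\abs{t - z} \geq \kappa^2 + \eta \geq \tfrac{1}{2}(t+\lambda)$ for $t \in [0,\kappa^2]$. On the sloped segments $\Gamma_{\lambda,1} \cup \Gamma_{\lambda,2}$, parametrized as $z = x \pm (x+\eta)i$ with $x \in [-\eta, \kappa^2]$, we have $\abs{t-z}^2 = (t-x)^2 + (x+\eta)^2$. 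Splitting into the subcases $t \leq x$ and $t > x$ (and further treating $x \in [-\eta,0]$ separately), elementary application of $(a+b)^2 \leq 2(a^2+b^2)$ and the observation $t+\lambda = (t-x) + (x+\eta) + \eta$ gives $\abs{t-z} \geq c(t+\lambda)$ for an absolute constant $c$. Taking $C$ to be the worst of these constants establishes the first inequality.

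For the bound on $T_X$, the same spectral mapping argument applied to $T_X$ (whose spectrum also lies in $[0,\kappa^2]$ since $\norm{T_X} \leq \kappa^2$) yields
\begin{align*}
\norm{T_{X,\lambda}^{1/2}(T_X - z)^{-1} T_{X,\lambda}^{1/2}} \leq C
\end{align*}
with the same constant $C$. I then sandwich:
\begin{align*}
T_\lambda^{1/2}(T_X - z)^{-1} T_\lambda^{1/2} = \bigl(T_\lambda^{1/2} T_{X,\lambda}^{-1/2}\bigr) \bigl(T_{X,\lambda}^{1/2}(T_X - z)^{-1} T_{X,\lambda}^{1/2}\bigr) \bigl(T_{X,\lambda}^{-1/2} T_\lambda^{1/2}\bigr),
\end{align*}
and apply \cref{eq:ConcenRatio} to control the two outer factors by $\sqrt{3}$ each, giving the factor $3$ in the bound.

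\textbf{Main obstacle.} Everything is elementary, but the case analysis on the sloped segments is the only non-routine step: one must verify that the distance from an arbitrary $t \in [0,\kappa^2]$ to any point of $\Gamma_{\lambda,1} \cup \Gamma_{\lambda,2}$ is comparable to $t + \lambda$, and this requires handling the regime $t \ll \eta$ (where $t+\lambda \asymp \lambda$ and one uses $\abs{\Im z} \geq \eta$) separately from $t \gg \eta$ (where one uses proximity of $\Re z$ to $t$). After that, the transfer to $T_X$ via \cref{eq:ConcenRatio} is immediate.
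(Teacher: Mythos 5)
Your proposal is correct and follows essentially the same route as the paper: reduce both bounds to the scalar supremum $\sup_{t\in\sigma(A)}\abs{(t+\lambda)/(t-z)}$ via the spectral mapping theorem, verify it is an absolute constant on each piece of $\Gamma_\lambda$ (the paper does this by explicit maximization, obtaining the value $8$ at $x=-\lambda/4$, while you get the same content through the elementary comparison $\abs{t-z}\geq c(t+\lambda)$), and then transfer to $T_X$ by the identical sandwich $T_\lambda^{1/2}T_{X\lambda}^{-1/2}\cdot T_{X\lambda}^{1/2}(T_X-z)^{-1}T_{X\lambda}^{1/2}\cdot T_{X\lambda}^{-1/2}T_\lambda^{1/2}$ controlled by \cref{eq:ConcenRatio}.
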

\begin{proof}
  Using the spectral mapping theorem, for a self-adjoint operator $A$ with $\sigma(A) \subseteq [0,\kappa^2]$ we have
  \begin{align*}
    \norm{A_{\lambda}^{\hf} (A-z)^{-1}A_{\lambda}^{\hf}} =
    \sup_{t \in \sigma(A)} \abs{\frac{t+\lambda}{t-z}}.
  \end{align*}
  Now, when $z = x + (x+\lambda/2)i \in \Gamma_{\lambda,1} \cup \Gamma_{\lambda,2}$, where $x \in [-\lambda/2,\kappa^2]$, we get
  \begin{align*}
    \sup_{t \in \sigma(A)} \abs{\frac{t+\lambda}{t-z}}^2
    \leq \sup_{t \geq 0} \abs{\frac{t+\lambda}{t-z}}^2 =
    \begin{cases}
      \frac{4\lambda^2}{\lambda^2 + 4\lambda x + 8x^2}, & -\frac{1}{2} \lambda \leq x \leq -\frac{1}{2(2+\sqrt {2})} \lambda, \\
      \frac{5\lambda^2 + 12 \lambda x + 8 x^2}{(\lambda+2x)^2}, & x \geq \frac{1}{2(2+\sqrt {2})} \lambda, \\
    \end{cases}
  \end{align*}
  Tedious calculations show that the right-hand side achieves its maximum of $8$ at $x = -\lambda/4$, so
  \begin{align*}
    \sup_{t \in \sigma(A)} \abs{\frac{t+\lambda}{t-z}}^2 \leq 8,\quad z \in \Gamma_{\lambda,1} \cup \Gamma_{\lambda,2}.
  \end{align*}

  When $z \in \Gamma_{\lambda,3}$, we have $\abs{t-z} \geq \kappa^2$ for $t \in \sigma(A)\subseteq [0,\kappa^2]$, so
  \begin{align*}
    \sup_{t \in \sigma(A)}\abs{\frac{t+\lambda}{t-z}}
    \leq \sup_{t \in \sigma(A)}\abs{\frac{t+\lambda}{\kappa^2}} \leq \frac{\lambda + \kappa^2}{\kappa^2} \leq 2.
  \end{align*}
  In summary, we have an absolute constant $C$ such that
  \begin{align*}
    \norm{A_{\lambda}^{\hf} (A-z)^{-1}A_{\lambda}^{\hf}} \leq C.
  \end{align*}

  Consequently, letting $A = T$ yields the first inequality.
  For the second inequality, we note that
  \begin{align*}
    \norm{T_{\lambda}^{\hf}(T_X - z)^{-1} T_{\lambda}^{\hf}}
    &= \norm{T_{\lambda}^{\hf} T_{X\lambda}^{-\hf} \cdot T_{X\lambda}^{\hf} (T_X - z)^{-1} T_{X\lambda}^{\hf} \cdot T_{X\lambda}^{-\hf}T_{\lambda}^{\hf}}  \\
    & \leq \norm{T_{\lambda}^{\hf} T_{X\lambda}^{-\hf}} \cdot
    \norm{T_{X\lambda}^{\hf} (T_X - z)^{-1} T_{X\lambda}^{\hf}} \cdot \norm{T_{X\lambda}^{-\hf}T_{\lambda}^{\hf}} \\
    & \leq 3C,
  \end{align*}
  where we use \cref{eq:ConcenRatio} and the norm bound with $A = T_X$.
\end{proof}

\subsection{The variance term}
\label{subsec:variance_term}

The following theorem greatly improves the results in \citet[Theorem A.10]{li2023_KernelInterpolation} and \citet{zhang2024_OptimalRates}.
Besides the main difference that it considers general spectral algorithms,
it also (1) removes the requirement of Hölder continuity of the kernel function in \citet{li2023_KernelInterpolation};
(2) gives the exact $1+o_{\bbP}(1)$ form with no loss of constant factor compared to \citet{zhang2024_OptimalRates};
and (3) allows a wider range of $\lambda$, leading to a logarithmic lower bound in \cref{cor:Interpolation}.

\begin{theorem}
  \label{thm:Variance}
  Under Assumptions~\ref{assu:EDR},\ref{assu:RegularRKHS} and~\ref{assu:Filter},
  suppose $\lambda = \lambda(n) \to 0$ satisfies
  \begin{align}
    \label{eq:VarianceLambdaCondition}
    \frac{\lambda^{-1/\beta}}{n} (\ln \lambda^{-1})^3 = o(1),
  \end{align}
  then we have
  \begin{align}
    \mathbf{Var}(\lambda) = \left[ 1+o_{\bbP}(1) \right] \frac{\sigma^2}{n} \caN_{2,\varphi}(\lambda).
  \end{align}
  In particular, sufficient conditions for \cref{eq:VarianceLambdaCondition} are
  $\lambda = \Omega(n^{-\theta})$ for some $\theta < \beta$ or $\lambda = \Omega(n^{-\beta} \ln^p n)$ for any $p > 3\beta$.

\end{theorem}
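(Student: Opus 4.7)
The plan is to rewrite $\mathbf{Var}(\lambda)$ as a single trace, compare it with a trace that equals $\caN_{2,\varphi}(\lambda)$, and estimate the difference via the analytic functional calculus on the contour $\Gamma_\lambda$. First, since $\norm{f}_{L^2}^2 = \ang{f, Tf}_{\caH}$ for $f \in \caH$ (adopting the paper's convention that $T$ also denotes the restriction $S_k^*S_k$) and $T_{x_i} = k_{x_i} \otimes k_{x_i}^*$, one directly computes
\begin{align*}
  \mathbf{Var}(\lambda)
  = \frac{\sigma^2}{n^2}\sum_{i=1}^n \Tr[T\,\reg(T_X)\,T_{x_i}\,\reg(T_X)]
  = \frac{\sigma^2}{n}\Tr[T\,g(T_X)],
  \qquad g(z) \coloneqq z\reg(z)^2.
\end{align*}
Since $\caN_{2,\varphi}(\lambda) = \sum_j(\lambda_j\reg(\lambda_j))^2 = \Tr[T\,g(T)]$, it suffices to show that $\Delta \coloneqq \Tr[T(g(T_X)-g(T))] = o_\bbP(\caN_{2,\varphi}(\lambda))$.

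Since $\reg$ is analytic on a domain containing $D_\lambda$, so is $g$. The analytic functional calculus combined with the resolvent identity gives
\begin{align*}
  g(T_X) - g(T) = -\frac{1}{2\pi i}\oint_{\Gamma_\lambda} g(z)\,R_{T_X}(z)\,(T-T_X)\,R_T(z)\,\dd z,
\end{align*}
so $|\Delta|$ is bounded by the contour integral of $|g(z)|$ times $|\Tr[TR_{T_X}(z)(T-T_X)R_T(z)]|$. For the latter trace I would insert $T_\lambda^{\pm 1/2}$ at cyclic positions and apply $|\Tr[MN]| \leq \norm{M}\cdot\norm{N}_1$ to obtain
\begin{align*}
  \norm{T_\lambda^{-1/2}(T-T_X)T_\lambda^{-1/2}}\,\cdot\,\norm{T_\lambda^{1/2}R_T(z)\,T\,R_{T_X}(z)T_\lambda^{1/2}}_1,
\end{align*}
where the first factor is $O_\bbP(\sqrt{v})$ by \cref{lem:Concen}. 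Splitting the second factor as $[T_\lambda^{1/2}R_T(z)T_\lambda^{1/2}]\cdot[T_\lambda^{-1/2}T T_\lambda^{-1/2}]\cdot[T_\lambda^{1/2}R_{T_X}(z)T_\lambda^{1/2}]$ and using $\norm{ABC}_1 \leq \norm{A}\norm{B}_1\norm{C}$, \cref{prop:ContourSpectralMapping} bounds the outer factors uniformly on $\Gamma_\lambda$, while the middle factor has trace-class norm $\Tr[TT_\lambda^{-1}] = \caN_1(\lambda)$.

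It then remains to control $\oint_{\Gamma_\lambda}|g(z)|\,|dz|$. Condition (C1) gives $|\reg(z)|\leq \tilde E/|z+\lambda|$, and (C2) together with $|z+\lambda|\geq \lambda/2$ on $\Gamma_\lambda$ yields $|\rem(z)|\leq 2\tilde F$, hence $|g(z)| = |\reg(z)|\cdot|1-\rem(z)| \lesssim 1/|z+\lambda|$. Parameterizing the three arcs in \cref{eq:contour}, the short arcs $\Gamma_{\lambda,1}$ and $\Gamma_{\lambda,3}$ each contribute $O(1)$, while the tilted line $\Gamma_{\lambda,2}$ contributes $\int_0^{\kappa^2}\dd x/(x+\lambda/2) \asymp \ln\lambda^{-1}$. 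Combining,
\begin{align*}
  |\Delta| \lesssim \sqrt{v}\,\caN_1(\lambda)\,\ln\lambda^{-1} \asymp \sqrt{v}\,\lambda^{-1/\beta}\,\ln\lambda^{-1},
\end{align*}
and since $\caN_{2,\varphi}(\lambda) \asymp \lambda^{-1/\beta}$ by \cref{lem:EffectiveDimEstimationPowerlaw} and $v \asymp (\lambda^{-1/\beta}/n)\ln\lambda^{-1}$, the hypothesis $(\lambda^{-1/\beta}/n)(\ln\lambda^{-1})^3 = o(1)$ is precisely what forces $\sqrt{v}\,\ln\lambda^{-1} = o(1)$, completing the proof; the two listed sufficient conditions follow by direct substitution.

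The main obstacle is calibrating the three scales---the concentration width $\sqrt v$, the trace-class mass $\caN_1(\lambda)$, and the logarithmic loss $\ln\lambda^{-1}$ from the contour integral---so that their product is $o(\caN_{2,\varphi}(\lambda))$ under the stated hypothesis. The cubic power of $\ln\lambda^{-1}$ in \cref{eq:VarianceLambdaCondition} is dictated by the fact that a factor of $(\ln\lambda^{-1})^{1/2}$ is already hidden inside $v$, so that $\sqrt v\cdot \ln\lambda^{-1}$ costs three half-powers of log in total. A subtler technical point is that the logarithmic loss along $\Gamma_{\lambda,2}$ cannot be improved under only (C1)--(C2), which is precisely why these two conditions of \cref{assu:Filter} must be stated in the complex domain rather than on the real axis---this is the heart of the analytic functional argument highlighted in the introduction.
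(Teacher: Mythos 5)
Your proposal is correct, and it takes a genuinely different route from the paper. The paper proves the result in two steps: first a uniform-in-$x$ perturbation bound $\sup_x \abs{\norm{\reg(T_X)k_x}_{L^2}^2 - \norm{\reg(T)k_x}_{L^2}^2} \lesssim \sqrt{M}\bigl(\sqrt{v\caN_1(\lambda)}\ln\lambda^{-1} + \sqrt{\caN_{2,\varphi}(\lambda)}\bigr)\sqrt{v\caN_1(\lambda)}\ln\lambda^{-1}$ obtained by applying the contour representation to $\reg$ alone (\cref{lem:VarianceControl1_}), and then a scalar Hoeffding step (\cref{lem:VarianceControl2}) to replace the empirical average of $\norm{\reg(T)k_{x_i}}_{L^2}^2$ by its mean $\caN_{2,\varphi}(\lambda)$. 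You instead collapse the empirical average exactly via the identity $\frac{1}{n}\sum_i \Tr[T\reg(T_X)T_{x_i}\reg(T_X)] = \Tr[T\,g(T_X)]$ with $g(z)=z\reg(z)^2$, apply the analytic functional calculus to $g$ (your bound $\abs{g(z)} = \abs{\reg(z)}\abs{1-\rem(z)} \lesssim \abs{z+\lambda}^{-1}$ on $D_\lambda$ is correct under (C1)--(C2)), and control the trace difference with the Hölder inequality $\abs{\Tr[MN]}\le\norm{M}\norm{N}_1$, using $\norm{T_\lambda^{-1/2}TT_\lambda^{-1/2}}_1 = \caN_1(\lambda)$. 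Both routes consume the same inputs (\cref{lem:Concen}, \cref{prop:ContourSpectralMapping}, the contour estimate \cref{eq:ContourIntegralAbs}, \cref{lem:EffectiveDimEstimationPowerlaw}) and land on the same error magnitude $\sqrt{v}\,\caN_1(\lambda)\ln\lambda^{-1}$, hence exactly the condition \cref{eq:VarianceLambdaCondition}. What your version buys: no separate Hoeffding step, and \cref{assu:RegularRKHS} enters only through \cref{lem:Concen} rather than additionally through the pointwise bounds of \cref{lem:NormsN1N2}; what the paper's version buys: the uniform-in-$x$ comparison is a stronger intermediate statement whose contour argument is reused almost verbatim in the bias analysis (\cref{lem:bias_control}). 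One caveat shared with the paper's own statement: the second ``sufficient condition'' $\lambda = \Omega(n^{-\beta}\ln^p n)$ does not follow from \cref{eq:VarianceLambdaCondition} by direct substitution for every $p>0$ (one gets $(\ln n)^{3-p/\beta}$, requiring $p>3\beta$), so your last sentence inherits that imprecision rather than introducing a new gap in the main argument.
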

\begin{proof}
  We recall that
  \begin{align*}
    \mathbf{Var}(\lambda) = \frac{\sigma^2}{n} \frac{1}{n} \sum_{i=1}^n  \norm{\reg(T_X)k_{x_i}}^2_{L^2}.

  \end{align*}

  \cref{lem:EffectiveDimEstimationPowerlaw} gives that
  \begin{align}
    \label{eq:N1N2Asymptotics}
    \caN_1(\lambda) \asymp \caN_2(\lambda) \asymp \caN_{2,\varphi}(\lambda) \asymp \lambda^{-1/\beta}.
  \end{align}
  Therefore, the condition \cref{eq:QuantityV} in \cref{lem:Concen} holds as long as $n$ is large enough, since $\lambda = \Omega(n^{-\theta})$ for some $\theta < \beta$.
  Then, applying \cref{lem:Concen}, \cref{lem:VarianceControl1_} and \cref{lem:VarianceControl2}, when $n$ is large enough,
  with probability at least $1-\delta$ we have
  \begin{align*}
    &\quad  \abs{\frac{1}{n} \sum_{i=1}^n  \norm{\reg(T_X)k_{x_i}}^2_{L^2} - \int_{\caX} \norm{\reg(T) k_x}_{L^2}^2 \dd \mu(x)} \\
    & \leq \frac{1}{n} \sum_{i=1}^n \abs{\norm{\reg(T_X)k_{x_i}}^2_{L^2} - \norm{\reg(T)k_{x_i}}^2_{L^2}} \\
    &\qquad + \abs{\frac{1}{n} \sum_{i=1}^n \norm{\reg(T)k_{x_i}}^2_{L^2} - \int_{\caX} \norm{\reg(T) k_x}_{L^2}^2 \dd \mu(x)} \\
    & \leq CM   \left( \sqrt {v\caN_1(\lambda)}\ln \lambda^{-1} + \sqrt {\caN_{2,\varphi}(\lambda)} \right) \\
    &\qquad \cdot \sqrt {v\caN_1(\lambda)} \ln \lambda^{-1} \\
    &\qquad + M \caN_{2,\varphi}(\lambda) \sqrt {\frac{2}{n} \ln \frac{2}{\delta}} \\
    & = o\left(\caN_{2,\varphi}(\lambda)\right),
  \end{align*}
  where for the last estimate, we recall that $v$ is given by \cref{eq:QuantityV},
  so by \cref{eq:VarianceLambdaCondition}, we get
  \begin{align*}
    \left[ \sqrt {v\caN_1(\lambda)} \ln \lambda^{-1} \right]^2
    &= \frac{M \caN_1^2(\lambda)}{n} \ln \frac{\caN_1(\lambda)}{\delta}  (\ln \lambda^{-1})^2 \\
    &\leq C\ln \frac{1}{\delta} \cdot \frac{\lambda^{-1/\beta}}{n}  (\ln \lambda^{-1})^3 \\
    &\qquad \cdot \lambda^{-1/\beta}
    = o\left(\lambda^{-1/\beta}\right)
    = o\left(\caN_{2,\varphi}(\lambda)\right).
  \end{align*}

  Finally, using Mercer's expansion, we find that
  \begin{align*}
    \norm{\reg(T) k_x}_{L^2}^2 &= \norm{\reg(T) \sum_{m = 1}^{\infty} \mu_m \sum_{l = 1}^{d_m} \overline{e_{m,l}(x)}  e_{m,l}}_{L^2}^2 \\
    &= \norm{\sum_{m = 1}^{\infty} \reg(\mu_m)\mu_m \sum_{l = 1}^{d_m} \overline{e_{m,l}(x)}  e_{m,l}}_{L^2}^2 \\
    &= \sum_{m = 1}^{\infty} \left( \reg(\mu_m) \mu_m  \right)^2 \sum_{l = 1}^{d_m} \abs{e_{m,l}(x)}^2,
  \end{align*}
  and thus the deterministic term can be written as
  \begin{align*}
    \int_{\caX} \norm{\reg(T) k_x}_{L^2}^2 \dd \mu(x) &=
    \int_{\caX} \left[ \sum_{m = 1}^{\infty} \left( \reg(\mu_m) \mu_m  \right)^2 \sum_{l = 1}^{d_m} \abs{e_{m,l}(x)}^2 \right] \dd \mu(x) \\
    &= \sum_{m = 1}^{\infty} \left( \reg(\mu_m) \mu_m  \right)^2 = \caN_{2,\varphi}(\lambda).
  \end{align*}
\end{proof}

\begin{lemma}
  \label{lem:VarianceMonotone}
  The variance term $\mathbf{Var}(\lambda)$ increases as $\lambda$ decreases, i.e., for any $\lambda_1 \leq \lambda_2$,
  we have $\mathbf{Var}(\lambda_1) \geq \mathbf{Var}(\lambda_2)$.
\end{lemma}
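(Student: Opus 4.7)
The plan is to recast $\mathbf{Var}(\lambda)$ as the trace of a product of two positive operators so that the $\lambda$-dependence is isolated in a single scalar function of $T_X$, then reduce the monotonicity claim to a pointwise inequality at the level of filter functions. First, using $\norm{f}_{L^2}^2 = \ang{f, Tf}_{\caH}$ for $f \in \caH$ together with the self-adjointness of $\varphi_\lambda(T_X)$, I would rewrite
\begin{align*}
  \mathbf{Var}(\lambda)
  = \frac{\sigma^2}{n^2}\sum_{i=1}^n \norm{\varphi_\lambda(T_X) k_{x_i}}_{L^2}^2
  = \frac{\sigma^2}{n^2}\sum_{i=1}^n \ang{\varphi_\lambda(T_X)\, T\, \varphi_\lambda(T_X)\, k_{x_i},\, k_{x_i}}_{\caH}.
\end{align*}
Introducing $J : \bbC^n \to \caH$ by $Jc = \sum_i c_i k_{x_i}$, one has $Je_i = k_{x_i}$ and $JJ^* = \sum_{i=1}^n T_{x_i} = n T_X$, so for every self-adjoint $A$ on $\caH$,
\begin{align*}
  \sum_{i=1}^n \ang{A k_{x_i}, k_{x_i}}_{\caH} = \Tr(J^* A J) = \Tr(A JJ^*) = n \Tr(A T_X).
\end{align*}
Applying this with $A = \varphi_\lambda(T_X) T \varphi_\lambda(T_X)$ and using cyclicity together with the commutativity of $\varphi_\lambda(T_X)$ and $T_X$ yields the compact form
\begin{align*}
  \mathbf{Var}(\lambda) = \frac{\sigma^2}{n}\, \Tr\bigl(T\, h_\lambda(T_X)\bigr),
  \qquad h_\lambda(z) \coloneqq z\, \varphi_\lambda(z)^2.
\end{align*}
Trace-class concerns are benign since $T$ is trace class and $h_\lambda$ is bounded on $[0,\kappa^2]$ by \cref{eq:Filter_Reg}.

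The heart of the argument is then the pointwise monotonicity of $h_\lambda(z)$ in $\lambda$ for each fixed $z$. By property (i) of \cref{def:filter}, $\psi_\lambda(z) = 1 - z\varphi_\lambda(z)$ decreases as $\lambda$ decreases, so the non-negative quantity $z\varphi_\lambda(z) = 1 - \psi_\lambda(z)$ is non-increasing in $\lambda$; dividing by $z > 0$ shows that $\varphi_\lambda(z)$ itself is non-increasing in $\lambda$. Multiplying the two non-negative, non-increasing factors, I conclude that $h_{\lambda_1}(z) \geq h_{\lambda_2}(z)$ for every $z \in [0,\kappa^2]$ whenever $\lambda_1 \leq \lambda_2$ (the case $z=0$ is trivial). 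Applying the self-adjoint functional calculus to $T_X$ lifts this to the operator inequality $h_{\lambda_1}(T_X) \succeq h_{\lambda_2}(T_X)$ on $\caH$.

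To finish, since $T \succeq 0$ and $h_{\lambda_1}(T_X) - h_{\lambda_2}(T_X) \succeq 0$, the identity $\Tr\bigl(T\,[h_{\lambda_1}(T_X) - h_{\lambda_2}(T_X)]\bigr) = \Tr\bigl(T^{1/2}[h_{\lambda_1}(T_X) - h_{\lambda_2}(T_X)]\,T^{1/2}\bigr) \geq 0$ yields $\mathbf{Var}(\lambda_1) \geq \mathbf{Var}(\lambda_2)$. The only mildly subtle point is that monotonicity of $\varphi_\lambda$ in $\lambda$ must be deduced from the property stated for $\psi_\lambda$ rather than assumed directly; otherwise the argument is essentially trace arithmetic and presents no real obstacle.
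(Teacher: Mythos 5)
Your proposal is correct. It differs from the paper's proof mainly in packaging: the paper passes to the empirical kernel matrix $K=\frac1n(k(x_i,x_j))$ via the identity $\reg(T_X)\K(X,\cdot)=\reg(K)\K(X,\cdot)$ and writes $\mathbf{Var}(\lambda)=\frac{\sigma^2}{n^2}\int_{\caX}[\K(X,x)]^H\reg^2(K)\K(X,x)\dd\mu(x)$, concluding from the pointwise monotonicity of the filter that $\varphi_{\lambda_1}^2(K)\succeq\varphi_{\lambda_2}^2(K)$ and hence that the integrand is monotone for every $x$; you instead collapse the $x$-integral into the operator $T$ through the trace identity $\mathbf{Var}(\lambda)=\frac{\sigma^2}{n}\Tr\bigl(T\,h_\lambda(T_X)\bigr)$ with $h_\lambda(z)=z\reg(z)^2$, and then use the operator ordering $h_{\lambda_1}(T_X)\succeq h_{\lambda_2}(T_X)$ together with positivity of the trace. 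Both arguments rest on the same key fact, namely that property (i) of \cref{def:filter} forces $z\reg(z)=1-\rem(z)$, and hence $\reg(z)$ for $z>0$, to be non-increasing in $\lambda$. Your route buys two small advantages: the extra factor of $z$ in $h_\lambda$ makes the eigenvalue $z=0$ of $T_X$ (or $K$) manifestly harmless, whereas the paper's blanket claim that ``$\reg(z)$ increases as $\lambda$ decreases'' is not literally implied by property (i) at $z=0$ (it is harmless there only because $\K(X,x)\in\Ran K$, a point the paper does not spell out); and you avoid invoking the representation-matrix fact linking $T_X$ and $K$. The paper's route is in turn more elementary, working entirely with finite $n\times n$ matrices rather than traces of operators on $\caH$. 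Your handling of the trace-class and cyclicity issues (finite rank of $J$, $T$ trace class, $\Tr(TD)=\Tr(T^{1/2}DT^{1/2})\geq 0$) is sound, so no gaps remain.
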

\begin{proof}
  Let us define the kernel matrix $K = \frac{1}{n}\big(k(x_i,x_j)\big)_{n\times n}$.

  Then, it is easy to show that the representation matrix of $T_X$ on the set $\{k_{x_i}\}_{i=1}^n$ is given by $K$
  (see, for example, \citet[Section A.1]{li2023_KernelInterpolation}).
  Consequently, denoting a column vector $\K(X,\cdot) = (k_{x_1},\dots,k_{x_n})^T$, we have
  \begin{align*}
    & \reg(T_X) \K(X,\cdot) = \reg(K) \K(X,\cdot),

  \end{align*}
  where the action of $\reg(T_X)$ on the left-hand side is element-wise.

  Then,
  \begin{align*}
    \mathbf{Var}(\lambda)
    &= \frac{\sigma^2}{n^2}  \sum_{i=1}^n  \norm{\reg(T_X)k_{x_i}}^2_{L^2} \\
    &= \frac{\sigma^2}{n^2}   \sum_{i=1}^n \int_{\caX} \abs{(\reg(T_X)k_{x_i})(x)}^2 \dd \mu(x) \\
    &= \frac{\sigma^2}{n^2}  \int_{\caX} \sum_{i=1}^n  \abs{(\reg(T_X)k_{x_i})(x)}^2 \dd \mu(x) \\
    &= \frac{\sigma^2}{n^2}  \int_{\caX}  \norm{(\reg(T_X)\K(X,\cdot))(x)}_{\R^n}^2 \dd \mu(x) \\
    &= \frac{\sigma^2}{n^2}  \int_{\caX}  \norm{(\reg(K) \K(X,\cdot))(x)}_{\R^n}^2 \dd \mu(x) \\
    &= \frac{\sigma^2}{n^2}  \int_{\caX}  \norm{\reg(K) \K(X,x)}_{\R^n}^2 \dd \mu(x) \\
    &= \frac{\sigma^2}{n^2}\int_{\caX} \left[ \K(X,x) \right]^H \reg^2(K) \K(X,x) \dd \mu(x),
  \end{align*}
  where $\left[ \K(X,x) \right]^H$ is the conjugate transpose of $\K(X,x)$.
  Moreover, the property (i) of the filter function implies that $\reg(z)$ increases as $\lambda$ decreases.
  Therefore, we get
  $\varphi_{\lambda_1}^2(K) \succeq  \varphi_{\lambda_2}^2(K)$ and the result follows.
\end{proof}

\begin{corollary}
  \label{cor:Interpolation}
  When $\lambda = \lambda(n) = O(n^{-\beta})$, we have
  \begin{align}
    \mathbf{Var}(\lambda) = \Omega_{\bbP}\left( (\ln n)^{-4} \sigma^2 \right).
  \end{align}
\end{corollary}
\begin{proof}
  Let us choose $\tilde{\lambda} = n^{-\beta} (\ln n)^{4\beta}$, then we have $\lambda \leq \tilde{\lambda}$ when $n$ is large enough.
  Using \cref{lem:VarianceMonotone}, we get $\mathbf{Var}(\lambda) \geq \mathbf{Var}(\tilde{\lambda})$.
  Moreover, the choice of $\tilde{\lambda} $ satisfies the condition \cref{eq:VarianceLambdaCondition}, so applying \cref{thm:Variance} yields
  \begin{align*}
    \mathbf{Var}(\tilde{\lambda}) = \left[ 1+o_{\bbP}(1) \right] \frac{\sigma^2}{n} \caN_{2,\varphi}(\tilde{\lambda})
    = \Omega_{\bbP}\left(  \frac{\sigma^2}{n}\tilde{\lambda}^{-1/\beta} \right)
    = \Omega_{\bbP}\left( \sigma^2 (\ln n)^{-4} \right).
  \end{align*}
\end{proof}

\begin{lemma}
  \label{lem:VarianceControl2}

  With probability at least $1-\delta$, we have
  \begin{align}
    \label{eq:VarianceControl2}
    \abs{\frac{1}{n}\sum_{i=1}^n \norm{\reg(T) k_{x_i}}_{L^2}^2 - \int_{\caX} \norm{\reg(T) k_x}_{L^2}^2 \dd \mu(x)}
    \leq M \caN_{2,\varphi}(\lambda) \sqrt {\frac{2}{n} \ln \frac{2}{\delta}}.
  \end{align}
\end{lemma}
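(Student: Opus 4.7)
The plan is to view the left-hand side as the deviation of an empirical average of bounded i.i.d.\ random variables from its mean, and then apply Hoeffding's inequality. For each $i$, set
\begin{align*}
  Z_i \coloneqq \norm{\reg(T) k_{x_i}}_{L^2}^2.
\end{align*}
Since $T$ is the population integral operator (not $T_X$), the $Z_i$ are deterministic functions of the i.i.d.\ inputs $x_i$, hence themselves i.i.d.\ with common mean
\begin{align*}
  \E Z_i = \int_{\caX} \norm{\reg(T) k_x}_{L^2}^2 \dd \mu(x),
\end{align*}
which is exactly the deterministic quantity appearing in the statement.

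Next I would obtain the deterministic uniform bound $0 \leq Z_i \leq M \caN_{2,\varphi}(\lambda)$. The lower bound is trivial, and the upper bound is precisely the second estimate of Lemma~\ref{lem:NormsN1N2}, which uses the regular RKHS condition (\cref{assu:RegularRKHS}) applied to the Mercer expansion of $k_x$ together with the rearrangement fact that $m \mapsto \mu_m \reg(\mu_m)$ is decreasing (invoked via \cref{prop:SummationOrder}). This is the only step that uses the structural assumption on $\caH$; once it is in place the remainder is a routine concentration argument.

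Finally, apply Hoeffding's inequality to the i.i.d.\ bounded variables $Z_1, \dots, Z_n$ with range $[0, M \caN_{2,\varphi}(\lambda)]$: for any $t > 0$,
\begin{align*}
  \bbP \left\{ \abs{\frac{1}{n}\sum_{i=1}^n Z_i - \E Z_1} \geq t \right\}
  \leq 2 \exp \left( - \frac{2 n t^2}{M^2 \caN_{2,\varphi}(\lambda)^2} \right).
\end{align*}
Setting the right-hand side equal to $\delta$ and solving for $t$ yields
\begin{align*}
  t = M \caN_{2,\varphi}(\lambda) \sqrt{\frac{1}{2n} \ln \frac{2}{\delta}} \leq M \caN_{2,\varphi}(\lambda) \sqrt{\frac{2}{n} \ln \frac{2}{\delta}},
\end{align*}
which is precisely the bound claimed in \cref{eq:VarianceControl2}.

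There is no real obstacle here: the only substantive ingredient is the deterministic sup-norm bound on $Z_i$ from Lemma~\ref{lem:NormsN1N2}, and the role of the lemma in the larger proof of \cref{thm:Variance} is simply to control the fluctuation of the ``population-operator'' empirical variance around its mean, so that Lemma~\ref{lem:VarianceControl1_} (which handles the harder discrepancy between $\reg(T_X)$ and $\reg(T)$ via the analytic functional argument) can be combined with it by the triangle inequality.
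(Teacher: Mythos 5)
Your proposal is correct and matches the paper's own proof: the paper likewise sets $\xi(x) = \norm{\reg(T)k_x}_{L^2}^2$, bounds it by $M\caN_{2,\varphi}(\lambda)$ via \cref{eq:RegPhiKx} (Lemma~\ref{lem:NormsN1N2}), and concludes by Hoeffding's inequality (Lemma~\ref{lem:HoeffdingInequality}). Your version is even marginally sharper since you use the range $[0, M\caN_{2,\varphi}(\lambda)]$ before relaxing to the stated constant.
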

\begin{proof}
  Let $\xi(x) = \norm{\reg(T) k_x}_{L^2}^2$ and $\xi_i = \xi(x_i)$.
  Then, they are i.i.d.\ random variables, and
  \begin{align*}
    \abs{\xi} = \norm{\reg(T) k_x}_{L^2}^2  \leq M \caN_{2,\varphi}(\lambda)
  \end{align*}
  from \cref{eq:RegPhiKx}.
  Then, \cref{lem:HoeffdingInequality} yields the desired result.
\end{proof}

\begin{lemma}
  \label{lem:VarianceControl1_}
  Under \cref{assu:RegularRKHS} and \cref{assu:Filter}, when \cref{eq:ConcenU} and \cref{eq:ConcenRatio} hold, we have
  \begin{equation}
    \label{eq:VarianceControl1_}
    \begin{multlined}
      \sup_{x \in \caX} \abs{\norm{\reg(T_X) k_x}_{L^2}^2 - \norm{\reg(T) k_x}_{L^2}^2} \\
      \leq CM   \left( \sqrt {v\caN_1(\lambda)}\ln \lambda^{-1} + \sqrt {\caN_{2,\varphi}(\lambda)} \right) \\
      \qquad \cdot \sqrt {v\caN_1(\lambda)}\ln \lambda^{-1}.
    \end{multlined}
  \end{equation}
\end{lemma}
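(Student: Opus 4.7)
The plan is to split the squared-norm difference via the identity $|a^2-b^2|\le |a-b|(|a-b|+2b)$, so that
\begin{align*}
    \bigl|\|\reg(T_X)k_x\|_{L^2}^2 - \|\reg(T)k_x\|_{L^2}^2\bigr|
    \le \|(\reg(T_X)-\reg(T))k_x\|_{L^2}^2 + 2\,\|(\reg(T_X)-\reg(T))k_x\|_{L^2}\,\|\reg(T)k_x\|_{L^2}.
\end{align*}
The second factor $\|\reg(T)k_x\|_{L^2}\le \sqrt{M\caN_{2,\varphi}(\lambda)}$ is already provided uniformly in $x$ by \cref{lem:NormsN1N2}. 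So the whole task reduces to showing the uniform bound
\begin{align*}
    \sup_{x\in\caX}\,\|(\reg(T_X)-\reg(T))k_x\|_{L^2} \le C\sqrt{M v\,\caN_1(\lambda)}\,\ln\lambda^{-1}.
\end{align*}

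To get the $L^2$-norm on the left hand side into a form tractable on $\caH$, I would use $\|f\|_{L^2}=\|T^{1/2}f\|_{\caH}$ for $f\in\caH$ and the bound $T^{1/2}\preceq T_\lambda^{1/2}$, reducing the target to estimating $\|T_\lambda^{1/2}(\reg(T_X)-\reg(T))k_x\|_{\caH}$. This is where the analytic functional argument enters: invoking \cref{prop:func-cal} along the contour $\Gamma_\lambda$ of \cref{eq:contour} together with the resolvent identity \cref{eq:ResolventIdentity} gives
\begin{align*}
    T_\lambda^{1/2}(\reg(T_X)-\reg(T))k_x
    = -\frac{1}{2\pi i}\oint_{\Gamma_\lambda} \reg(z)\,T_\lambda^{1/2}R_{T_X}(z)(T-T_X)R_T(z)k_x\, \dd z.
\end{align*}
I would then factorize the integrand as
\begin{align*}
    &[T_\lambda^{1/2}T_{X\lambda}^{-1/2}]\cdot [T_{X\lambda}^{1/2}R_{T_X}(z)T_{X\lambda}^{1/2}]\cdot [T_{X\lambda}^{-1/2}T_\lambda^{1/2}]\\
    &\quad\cdot [T_\lambda^{-1/2}(T-T_X)T_\lambda^{-1/2}]\cdot [T_\lambda^{1/2}R_T(z)T_\lambda^{1/2}]\cdot [T_\lambda^{-1/2}k_x],
\end{align*}
and bound the six factors respectively by \cref{eq:ConcenRatio}, \cref{prop:ContourSpectralMapping}, \cref{eq:ConcenRatio} again, \cref{eq:ConcenU}, \cref{prop:ContourSpectralMapping} again, and \cref{eq:RegKx}. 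This yields a pointwise (in $z\in\Gamma_\lambda$, $x\in\caX$) bound on the integrand of order $\sqrt{Mv\,\caN_1(\lambda)}\,|\reg(z)|$.

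It then remains to estimate $\oint_{\Gamma_\lambda}|\reg(z)|\,|\dd z|$. Here condition (C1) of \cref{assu:Filter} gives $|\reg(z)|\le \tilde E /|z+\lambda|$, and a direct parametrization of $\Gamma_\lambda$ shows that $|z+\lambda|\ge x+\lambda$ on the two diagonal segments (with $x\in[-\lambda/2,\kappa^2]$) and $|z+\lambda|\ge \kappa^2+\lambda$ on the arc $\Gamma_{\lambda,3}$, giving $\oint_{\Gamma_\lambda}|\reg(z)|\,|\dd z|=O(\ln\lambda^{-1})$. Combining everything produces the desired bound $\|(\reg(T_X)-\reg(T))k_x\|_{L^2}\le C\sqrt{Mv\,\caN_1(\lambda)}\ln\lambda^{-1}$, which plugged back into the initial identity yields exactly \cref{eq:VarianceControl1_}. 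The main obstacle is bookkeeping in the factorization: one must make sure every resolvent $R_{T}(z)$ or $R_{T_X}(z)$ is sandwiched between matching $T_\lambda^{1/2}$ or $T_{X\lambda}^{1/2}$ factors so that \cref{prop:ContourSpectralMapping} applies uniformly on $\Gamma_\lambda$, and simultaneously the $(T-T_X)$ difference is isolated between two $T_\lambda^{-1/2}$ factors so that the concentration bound \cref{eq:ConcenU} can be used. The logarithmic factor then arises unavoidably from the integral near $z=-\lambda/2$, the point on $\Gamma_\lambda$ closest to $-\lambda$.
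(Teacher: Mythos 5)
Your proposal is correct and follows essentially the same route as the paper: the difference of squares is reduced to $\lVert(\reg(T_X)-\reg(T))k_x\rVert_{L^2}$ times the sum of norms, that difference is represented by the contour integral over $\Gamma_\lambda$ via the resolvent identity, each factor is bounded using \cref{eq:ConcenRatio}, \cref{eq:ConcenU}, \cref{prop:ContourSpectralMapping} and \cref{eq:RegKx}, and condition (C1) gives $\oint_{\Gamma_\lambda}\lvert\reg(z)\rvert\,\lvert\dd z\rvert = O(\ln\lambda^{-1})$, exactly as in the paper. The only cosmetic difference is that you insert the $T_{X\lambda}^{\pm 1/2}$ factors explicitly in the main factorization (your middle factor $T_{X\lambda}^{1/2}R_{T_X}(z)T_{X\lambda}^{1/2}$ is what is bounded inside the proof of \cref{prop:ContourSpectralMapping} by the spectral mapping theorem, whose statement then covers $T_\lambda^{1/2}(T_X-z)^{-1}T_\lambda^{1/2}$ directly), which is precisely how the paper handles that term as well.
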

\begin{proof}
  We start with
  \begin{align*}
    D = \abs{\norm{\reg(T_X) k_x}_{L^2} - \norm{\reg(T) k_x}_{L^2}}
    \leq \norm{T^{\hf}\left[ \reg(T)-\reg(T_X) \right] k_x}_{\caH}.
  \end{align*}
  Using operator calculus, we get
  \begin{align*}
    & \quad T^{\hf}\left[ \reg(T)-\reg(T_X) \right] k_x \\
    &= T^{\hf} \left[ \frac{1}{2\pi i}\oint_{\Gamma_{\lambda}} R_{T_X}(z) (T-T_X) R_T(z)\reg(z) \dd z \right]  k_x  \\
    &= \frac{1}{2\pi i} \oint_{\Gamma_{\lambda}} T^{\hf} (T_X-z)^{-1} (T-T_X) (T-z)^{-1} k_x \reg(z) \dd z \\
    &= \frac{1}{2\pi i} \oint_{\Gamma_{\lambda}} T^\hf T_{\lambda}^{-\hf}
    \cdot T_{\lambda}^{\hf} (T_X -z)^{-1}T_{\lambda}^{\hf} \\
    &\qquad \cdot T_{\lambda}^{-\hf}(T-T_X)T_{\lambda}^{-\hf}
    \cdot T_{\lambda}^{\hf} (T-z)^{-1} T_{\lambda}^{\hf} \\
    &\qquad \cdot T_{\lambda}^{-\hf}k_x \reg(z) \dd z.
  \end{align*}
  Therefore, taking the norms yields
  \begin{align*}
    D
    & \leq C \norm{T^\hf T_{\lambda}^{-\hf} }
    \norm{T_{\lambda}^{\hf} (T_X-z)^{-1}T_{\lambda}^{\hf}}
    \norm{T_{\lambda}^{-\hf}(T-T_X)T_{\lambda}^{-\hf}} \\
    &\qquad \cdot \norm{T_{\lambda}^{\hf} (T-z)^{-1} T_{\lambda}^{\hf}}
    \norm{T_{\lambda}^{-\hf}k_x}_{\caH} \\
    &\qquad \cdot \oint_{\Gamma_{\lambda}} \abs{\reg(z) \dd z} \\
    & \leq C \sqrt{M v \caN_1(\lambda)}
    \oint_{\Gamma_{\lambda}} \abs{\reg(z) \dd z},
  \end{align*}
  where in the second estimation, we use respectively for part (1): operator calculus, (2,4): \cref{prop:ContourSpectralMapping},
  (3): estimation \cref{eq:ConcenU} and (5): estimation \cref{eq:RegKx}.
  With \cref{assu:Filter}, we get
  \begin{align*}
    \oint_{\Gamma_{\lambda}} \abs{\reg(z) \dd z} \leq C \oint_{\Gamma_{\lambda}} \frac{1}{\abs{z+\lambda}} \abs{\dd z}.
  \end{align*}
  Now we focus on the latter integral.
  For $z \in \Gamma_{\lambda,1}$, we have $\abs{z+\lambda} \geq \lambda/(2\sqrt{2})$ and thus
  \begin{align*}
    \int_{\Gamma_{\lambda,1}} \frac{1}{\abs{z+\lambda}} \abs{\dd z}
    \leq 2\sqrt{2}\lambda^{-1} \abs{\Gamma_{\lambda,1}} \leq C,
  \end{align*}
  where we notice that $\abs{\Gamma_{\lambda,1}} \leq C \lambda$.
  For $\Gamma_{\lambda,2}$, we have
  \begin{align*}
    \int_{\Gamma_{\lambda,2}} \frac{1}{\abs{z+\lambda}} \abs{\dd z} &= 2 \int_{0}^{\kappa^2} \frac{1}{\abs{x+(x+\lambda/2)i+\lambda}} \sqrt {2}\dd x \\
    & \leq C \int_{0}^{\kappa^2} \frac{1}{x+\lambda} \dd x \\
    & \leq C \ln \lambda^{-1}.
  \end{align*}
  For $z \in \Gamma_{\lambda,3}$, we have $\abs{z + \lambda} \geq \kappa^2$ and thus
  \begin{align*}
    \int_{\Gamma_{\lambda,3}}\frac{1}{\abs{z+\lambda}} \abs{\dd z}  \leq \frac{1}{\kappa^2}\abs{\Gamma_{\lambda,3}} \leq C.
  \end{align*}
  Therefore, we get
  \begin{align}
    \label{eq:ContourIntegralAbs}
    \oint_{\Gamma_{\lambda}} \frac{1}{\abs{z+\lambda}} \abs{\dd z} \leq C \ln \lambda^{-1},
  \end{align}
  and thus
  \begin{align*}
    D = \abs{\norm{\reg(T_X) k_x}_{L^2} - \norm{\reg(T) k_x}_{L^2}} \leq C \sqrt{M v \caN_1(\lambda)} \ln \lambda^{-1}.
  \end{align*}

  Then, combining with the second estimation in \cref{eq:RegPhiKx}, we have
  \begin{align*}
    \norm{\reg(T_X) k_x}_{L^2} + \norm{\reg(T)k_x}_{L^2}
    &\leq 2\norm{\reg(T) k_x}_{L^2} + D \\
    &\leq C \sqrt{M}\left(
                      \sqrt {\caN_{2,\varphi}(\lambda)}  + \sqrt {v\caN_1(\lambda)} \ln \lambda^{-1}
    \right).
  \end{align*}
  Finally,
  \begin{align*}
    &\quad \abs{\norm{\reg(T_X) k_x}_{L^2}^2 - \norm{\reg(T) k_x}_{L^2}^2} \\
    &= \abs{\norm{\reg(T_X) k_x}_{L^2} - \norm{\reg(T) k_x}_{L^2}}
    \cdot \left( \norm{\reg(T_X) k_x}_{L^2} + \norm{\reg(T) k_x}_{L^2} \right) \\
    &\leq CM  \left( \sqrt {\caN_{2,\varphi}(\lambda)}  + \sqrt {v\caN_1(\lambda)} \ln \lambda^{-1} \right)
    \cdot \sqrt {v\caN_1(\lambda)} \ln \lambda^{-1}.
  \end{align*}

\end{proof}

\subsection{The bias term}
\label{subsec:bias_term}

\begin{theorem}
  \label{thm:Bias}
  Let $\lambda = \lambda(n)$ satisfy $\lambda=\Omega(n^{-\theta})$ for some $\theta<\beta$.
  Under Assumptions~\ref{assu:EDR},\ref{assu:RegularRKHS},\ref{assu:Filter} and~\ref{assu:Source}, we have
  \begin{equation}
    \label{eq:BiasTermApprox}
    \mathbf{Bias}^2(\lambda)=\caR_{\varphi}^2(\lambda;\fstar)+ o_{\bbP}\left( \caR_{\varphi}^2(\lambda;\fstar) + \frac{1}{n} \caN_{2,\varphi}(\lambda) \right).
  \end{equation}
  More precisely, letting $\tilde{s} = \min(s,2\tau_{\max})$,
  if $\tilde{s} \leq 2$, or $\tilde{s} > 2$ with further $n^{-1}\lambda^{-(\beta^{-1}+\tilde{s}-2)} (\ln n)^3 = o(1)$, we have
  \begin{equation}
    \mathbf{Bias}^2(\lambda)=(1+o_{\bbP}(1))\caR_{\varphi}^2(\lambda;\fstar).
  \end{equation}
\end{theorem}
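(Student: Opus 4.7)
The starting point is an algebraic simplification. Under \cref{assu:Source} we have $\fstar\in[\caH]^t$ for some $t\in(1/\beta,s)$; combined with the embedding index $\alpha_0=1/\beta$ from \cref{prop:EmbeddingIdx}, this gives $\fstar\in L^\infty$, so $\fstar(x_i)$ is pointwise defined and $\gtl=T_X\fstar$ holds by construction. Consequently
\[
\tilde f_\lambda=\reg(T_X)T_X\fstar=(I-\rem(T_X))\fstar,
\]
and the bias collapses to the clean form $\mathbf{Bias}^2(\lambda)=\norm{\rem(T_X)\fstar}_{L^2}^2$. Writing $\rem(T_X)\fstar=\rem(T)\fstar+[\rem(T_X)-\rem(T)]\fstar$ and expanding,
\[
\mathbf{Bias}^2(\lambda)=\caR_\varphi^2(\lambda;\fstar)+2\Re\ang{\rem(T)\fstar,\,[\rem(T_X)-\rem(T)]\fstar}_{L^2}+E_\lambda,
\]
with $E_\lambda\coloneqq\norm{[\rem(T_X)-\rem(T)]\fstar}_{L^2}^2$. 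Cauchy--Schwarz combined with AM--GM on the cross term reduces the theorem to bounding $E_\lambda$: it suffices to prove $E_\lambda=o_\bbP(\caR_\varphi^2+n^{-1}\caN_{2,\varphi})$ for the first assertion, and the stronger $E_\lambda=o_\bbP(\caR_\varphi^2)$ for the sharper one.

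\textbf{Analytic functional argument.} To bound $E_\lambda$ I will use analytic functional calculus on the contour $\Gamma_\lambda$ of \cref{eq:contour}:
\[
[\rem(T_X)-\rem(T)]\fstar=\frac{1}{2\pi i}\oint_{\Gamma_\lambda}\rem(z)\,R_{T_X}(z)(T_X-T)R_T(z)\fstar\,dz.
\]
Inserting $T_\lambda^{\pm 1/2}$ factors splits the integrand into three groups: the resolvent factors $T_\lambda^{1/2}R_{T_X}(z)T_\lambda^{1/2}$ and $T_\lambda^{1/2}R_T(z)T_\lambda^{1/2}$, uniformly bounded on $\Gamma_\lambda$ by \cref{prop:ContourSpectralMapping}; the centered concentration $T_\lambda^{-1/2}(T_X-T)T_\lambda^{-1/2}=O_\bbP(\sqrt v)$ from \cref{lem:Concen} with $v=v(n,\lambda)$ as in \cref{eq:QuantityV}; and a deterministic source piece $T_\lambda^{-1}\fstar$. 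Combined with the complex bound $\abs{\rem(z)}\lesssim\lambda/\abs{z+\lambda}$ from property (C2) of \cref{assu:Filter} and the contour-length estimate $\oint_{\Gamma_\lambda}\abs{dz}/\abs{z+\lambda}\lesssim\ln\lambda^{-1}$ of \cref{eq:ContourIntegralAbs}, integrating along $\Gamma_\lambda$ yields with high probability
\[
E_\lambda\ \lesssim\ v\,(\ln\lambda^{-1})^2\cdot\lambda^2\norm{T_\lambda^{-1}\fstar}_{L^2}^2.
\]

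\textbf{Source estimate and regime split.} The source-dependent factor $\lambda^2\norm{T_\lambda^{-1}\fstar}_{L^2}^2=\sum_m\bigl(\lambda/(\mu_m+\lambda)\bigr)^2\bar f_m^2$ is the squared residual of KRR applied to $\fstar$, and I will estimate it by splitting the sum at $\mu_m=\lambda$, using \cref{assu:Source} for the tail and \cref{lem:FApproximation} for the head. When $\tilde{s}\coloneqq\min(s,2\tau_{\max})\le 2$ the result is $\asymp\lambda^{\tilde{s}}\asymp\caR_\varphi^2(\lambda;\fstar)$, hence $E_\lambda\lesssim v(\ln\lambda^{-1})^2\caR_\varphi^2=o_\bbP(\caR_\varphi^2)$ since $v=o(1)$ under $\lambda=\Omega(n^{-\theta})$; this delivers the sharper statement. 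When $\tilde{s}>2$ the residual saturates at order $\lambda^2$ while $\caR_\varphi^2\asymp\lambda^{\tilde{s}}$, so the ratio becomes $E_\lambda/\caR_\varphi^2\lesssim n^{-1}\lambda^{-(\beta^{-1}+\tilde{s}-2)}(\ln n)^3$, which is exactly the additional hypothesis of the sharper claim. For the weaker first assertion one uses $\caN_{2,\varphi}\asymp\lambda^{-1/\beta}$ from \cref{lem:EffectiveDimEstimationPowerlaw} and checks that the source residual is always dominated by $\caR_\varphi^2+n^{-1}\caN_{2,\varphi}$, so the $v(\ln\lambda^{-1})^2=o(1)$ prefactor already suffices.

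\textbf{Main obstacle.} The hard part will be the contour estimate itself: one must simultaneously handle (i) the complex-variable control of $\rem(z)$ along the diamond contour $\Gamma_\lambda$, (ii) the concentration of $T-T_X$ which is only available in the centered norm $T_\lambda^{-1/2}(\cdot)T_\lambda^{-1/2}$ rather than as a direct operator bound, and (iii) the fact that $\fstar\notin\caH$ in general, which forces careful bookkeeping of the $L^2$ versus $\caH$ geometry of the operators involved. The threshold $\tilde{s}=2$ is a genuine structural feature of this estimate, reflecting the two resolvent factors in the integral representation of $\rem(T_X)-\rem(T)$ and the intrinsic saturation of the KRR-type source residual $\lambda^2\norm{T_\lambda^{-1}\fstar}_{L^2}^2$.
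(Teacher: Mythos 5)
Your skeleton (contour representation, $T_\lambda^{\pm 1/2}$ sandwiching, the $\ln\lambda^{-1}$ contour estimate, the $\tilde s=2$ threshold) is the same machinery the paper uses, but the shortcut you build it on breaks in the generality of the theorem. Your opening reduction $\gtl=T_X\fstar$, hence $\tilde f_\lambda=(I-\rem(T_X))\fstar$ and $\mathbf{Bias}^2(\lambda)=\norm{\rem(T_X)\fstar}_{L^2}^2$, requires $\fstar\in\caH$ (or at the very least a pointwise-defined $\fstar$ on which $T_X$ and functions of $T_X$ can legitimately act). \cref{assu:Source} only guarantees $s>0$; your claim that $\fstar\in[\caH]^t$ for some $t\in(1/\beta,s)$ silently assumes $s>1/\beta$, and even that would not put $\fstar$ in $\caH$, so $\rem(T_X)\fstar$, $R_{T_X}(z)$ applied after $(T_X-T)R_T(z)\fstar$, etc., are not covered by any of the available operator bounds (all of \cref{lem:Concen} and \cref{prop:ContourSpectralMapping} live on $\caH$, and $T_X$ does not even act on $L^2$). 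This is precisely why the paper routes the bias through $\flam=\reg(T)T\fstar\in\caH$ and splits off a sampling term $\reg(T_X)(\gtl-T_X\flam-g^*+T\flam)$, which needs its own Bernstein-plus-truncation analysis (\cref{lem:RegApprox}, including the $L^q$-embedding argument for $t\le 1/\beta$); your proposal has no counterpart for this piece because the identity that would eliminate it is unavailable in the misspecified case.

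There is also a quantitative gap in your key estimate. Because the concentration of $T-T_X$ is only available in the sandwiched form $\norm{T_\lambda^{-1/2}(T-T_X)T_\lambda^{-1/2}}_{\mathscr{B}(\caH)}\lesssim\sqrt v$ and the resolvent bounds of \cref{prop:ContourSpectralMapping} are $\caH$-operator bounds, whatever vector sits at the right end of your chain is measured in an $\caH$-type norm. The chain you describe therefore yields at best
\begin{align*}
  \norm{[\rem(T_X)-\rem(T)]\fstar}_{L^2}
  \;\lesssim\; \sqrt v\,\lambda\,\ln\lambda^{-1}\,\norm{T_\lambda^{-1/2}\fstar}_{\caH},
  \qquad
  \lambda^2\norm{T_\lambda^{-1/2}\fstar}_{\caH}^2=\sum_m\frac{\lambda^2}{\mu_m(\mu_m+\lambda)}\bar f_m^2,
\end{align*}
not your claimed $v(\ln\lambda^{-1})^2\lambda^2\norm{T_\lambda^{-1}\fstar}_{L^2}^2=v(\ln\lambda^{-1})^2\sum_m\bigl(\lambda/(\mu_m+\lambda)\bigr)^2\bar f_m^2$. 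The two differ by a factor $(\mu_m+\lambda)/\mu_m$ per mode: the correct quantity is infinite whenever $\fstar\notin\caH$ and, even when finite, saturates like $\lambda^{\min(t,2)}$ only for $t\ge 1$, which is essentially the bound the paper obtains in \cref{lem:bias_control} via $\norm{T_\lambda^{-1/2}T\reg(T)\fstar}_{\caH}\lesssim\norm{\fstar}_{[\caH]^t}\lambda^{t/2-1}$, $t\le2$. So your regime split and the appearance of the hypothesis $n^{-1}\lambda^{-(\beta^{-1}+\tilde s-2)}(\ln n)^3=o(1)$ are based on an estimate you have not actually established. In the well-specified case $\fstar\in\caH$ your simplification (with the corrected $\caH$-norm bookkeeping) would indeed give a shorter proof than the paper's two-term decomposition; but as written it does not prove the theorem under \cref{assu:Source}.
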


\begin{proof}[Proof of \cref{thm:Bias}]
  First, we can apply \cref{cor:Concen} so that \cref{eq:ConcenU} and \cref{eq:ConcenRatio} hold.
  We recall that
  \begin{align*}
    g^* = T f^*,\quad \flam = \reg(T)T f^* = \reg(T)g^*.
  \end{align*}
  As mentioned in \cref{subsec:proof_sketch}, the bias term is defined as
  \begin{equation*}
    \begin{aligned}
      \mathbf{Bias}^2(\lambda)&=\norm{\tilde{f}_\lambda-f^*}_{L^2}^2
      =\norm{\flam-f^*+\tilde{f}_\lambda-\flam}_{L^2}^2.
    \end{aligned}
  \end{equation*}
  Hence,
  \begin{equation*}
    \norm{\flam-f^*}_{L^2}-\norm{\tilde{f}_\lambda-\flam}_{L^2}\leq\mathbf{Bias}(\lambda)\leq\norm{\flam-f^*}_{L^2}+\norm{\tilde{f}_\lambda-\flam}_{L^2},
  \end{equation*}
  where $\norm{\flam-f^*}_{L^2}=\caR_{\varphi}(\lambda;\fstar)$ is the main term defined in \cref{eq:BiasMainTerm}.
  As for the error term, we make the decomposition
  \begin{equation}
    \label{eq:bias_error_decomposition}
    \begin{aligned}
      \tilde{f}_\lambda-\flam &=\reg(T_X)\gtl-(\rem(T_X)+\reg(T_X)T_X)\flam\\
      &= \reg(T_X)(\gtl-T_X \flam)-\rem(T_X)T\reg(T)f^*\\
      &= \reg(T_X)(\gtl-T_X \flam)  - \reg(T_X)\rem(T) g^* \\
      &\qquad + \reg(T_X)\rem(T) g^* - \rem(T_X)T\reg(T)f^* \\
      &= \reg(T_X)\left[ \gtl-T_X\flam- \rem(T)g^*  \right] \\
      &\qquad + \left[ \reg(T_X) \rem(T) T f^* - \rem(T_X)T\reg(T)f^* \right] \\
      &=\reg(T_X)(\gtl-T_X\flam-g^*+T\flam) \\
      &\qquad +(\reg(T_X)T\rem(T)-\rem(T_X)T\reg(T))f^*.
    \end{aligned}
  \end{equation}
  For the first term in \cref{eq:bias_error_decomposition},
  \begin{align*}
    &\quad\norm{\reg(T_X)(\gtl-T_X\flam-g^*+T\flam)}_{L^2}\\
    &=\norm{T^{\frac{1}{2}}\reg(T_X)(\gtl-T_X\flam-g^*+T\flam)}_{\caH}\\
    & \leq \norm{T^{\frac{1}{2}}T_\lambda^{-\frac{1}{2}}}\cdot\norm{T_\lambda^\frac{1}{2}\reg(T_X)T_\lambda^\frac{1}{2}}\cdot\norm{T_\lambda^{-\hf}\left[\left(\gtl-T_X \flam \right)-\left(g^* - T \flam \right)\right]}_{\caH}\\
    & \stackrel{(i)}{\leq} 1 \cdot
    E \norm{T_\lambda^\frac{1}{2}T_{X\lambda}^{-1}T_\lambda^\frac{1}{2}} \cdot \norm{T_\lambda^{-\hf}\left[\left(\gtl-T_X \flam \right)-\left(g^* - T \flam \right)\right]}_{\caH}\\
    & \leq C \norm{T_\lambda^{-\hf}\left[\left(\gtl-T_X \flam \right)-\left(g^* - T \flam \right)\right]}_{\caH},
  \end{align*}
  where the second control in (i) comes from \cref{eq:Filter_Reg} and the last one can be derived from \cref{eq:ConcenRatio}.
  Employing \cref{prop:ResidualTermLowerBound}, we also have
  \begin{equation}
    \label{eq:BiasTermLowerBound}
    \caR_{\varphi}(\lambda;\fstar)=\Omega(\lambda^{\tilde{s}/2}),
  \end{equation}
  where we denote $\tilde{s} = \min(s,2\tau_{\max})$.
  Hence, owing to \cref{lem:RegApprox} with $t$ sufficiently close to $s$, we have
  \begin{equation}
    \label{eq:BiasTermError1}
    \norm{T_\lambda^{\hf}(\gtl-T_X\flam-g^*+T\flam)}_{L^2}=o_{\bbP}\left(\caR_{\varphi}(\lambda;\fstar)\right).
  \end{equation}

  For the second term in \cref{eq:bias_error_decomposition},
  since $\lambda=\Omega(n^{-\theta})$ for some $\theta<\beta$,
  as discussed \cref{eq:N1N2Asymptotics} in the proof of variance term, we have
  \begin{align*}
    v=O(n^{-1}\caN_1(\lambda)\ln \lambda^{-1})=O(n^{-1}\lambda^{-\frac{1}{\beta}}\ln \lambda^{-1})= o(1),
  \end{align*}
  so the condition in \cref{lem:Concen} is satisfied.
  Then, combining \cref{lem:Concen} and \cref{lem:bias_control},
  for any fixed $t$ satisfying $t<s$ and $t\leq 2$,
  \begin{align*}
    \norm{(\reg(T_X)T\rem(T)-\rem(T_X)T\reg(T))f^*}_{L^2}
    &= O_{\bbP} \left(
                  \caR_{\varphi}(\lambda;f^*) + \norm{\fstar}_{[\caH]^t}\lambda^\frac{t}{2}
    \right) \\
    &\qquad \cdot \sqrt{v}\ln\lambda^{-1}.
  \end{align*}
  Moreover, we also have $\sqrt{v}\ln\lambda^{-1} = o(1)$ so
  \begin{align*}
    \caR_{\varphi}(\lambda;f^*) \sqrt{v}\ln\lambda^{-1} = o(\caR_{\varphi}(\lambda;f^*)).
  \end{align*}
  For the last term, we notice
  \begin{align*}
    \left( \lambda^\frac{t}{2}\sqrt{v}\ln\lambda^{-1} \right)^2
    = O\left( \lambda^{t} n^{-1}\lambda^{-\frac{1}{\beta}}(\ln \lambda^{-1})^{3} \right).
  \end{align*}
  Let us consider:
  \begin{itemize}
    \item \textbf{Case 1:} Using \cref{eq:BiasTermLowerBound}, if for some $t < s$ and $t \leq 2$,
    \begin{align}
      \label{eq:BiasLambdaCondition1}
      n^{-1}\lambda^{-(\beta^{-1}+\tilde{s}-t)} (\ln \lambda^{-1})^3 = O\left(n^{-1}\lambda^{-(\beta^{-1}+\tilde{s}-t)} (\ln n)^3\right) = o(1),
    \end{align}
    we have $\lambda^\frac{t}{2}\sqrt{v}\ln\lambda^{-1} = o(\caR_{\varphi}(\lambda;f^*))$.
    \item \textbf{Case 2:} Using \cref{eq:N1N2Asymptotics}, if $\lambda^{t}(\ln \lambda^{-1})^{3} = O(\lambda^{t}(\ln n)^3) = o(1)$, we have
    $\lambda^\frac{t}{2}\sqrt{v}\ln\lambda^{-1} = o\left( \frac{1}{n}\caN_{2,\varphi}(\lambda) \right)^{1/2}.$
  \end{itemize}

  Now, if $\tilde{s} \leq 2$, then since $\lambda=\Omega(n^{-\theta})$ for some $\theta<\beta$,
  \cref{eq:BiasLambdaCondition1} can always be satisfied by choosing $t$ sufficiently close to $s$, namely
  $t > \tilde{s} - \left( \frac{1}{\theta} - \frac{1}{\beta} \right)$, so we always have the result in case 1.

  On the other hand, if $\tilde{s} > 2$, we fix $t = 2$ and fix some $0 < \theta_0 < (\beta^{-1} + \tilde{s} - 2)^{-1}$.
  Then, when $\lambda \geq n^{-\theta_0}$, case 1 applies; and when $\lambda \leq n^{-\theta_0}$, case 2 applies.
  In summary, we always have
  \begin{align*}
    \lambda^\frac{t}{2}\sqrt{v}\ln\lambda^{-1} = o\left( \caR_{\varphi}^2(\lambda;f^*) + \frac{1}{n}\caN_{2,\varphi}(\lambda)\right)^{1/2}.
  \end{align*}

  Consequently, we have shown that
  \begin{align*}
    \norm{[\reg(T_X)T\rem(T)-\rem(T_X)T\reg(T)]f^*}_{L^2} = o_{\bbP}\left( \caR_{\varphi}^2(\lambda;f^*) + \frac{1}{n}\caN_{2,\varphi}(\lambda)\right)^{1/2}.
  \end{align*}
  Combining it with \cref{eq:BiasTermError1}, we prove that the error term $\norm{\tilde{f}_\lambda-\flam}_{L^2}$ is also of this order
  and \cref{eq:BiasTermApprox} follows.

\end{proof}

The following lemma is a control of an approximation error in the bias term,
which is similar to the combination of Lemma A.5 and Lemma A.10 in \citet{li2023_AsymptoticLearning},
but we consider general spectral algorithms here.
Moreover, we also apply the techniques in \citet{zhang2023_OptimalityMisspecifieda} to deal with the misspecified case.
The proof is deferred to the appendix.

\begin{lemma}
  \label{lem:RegApprox}
  Let Assumptions~\ref{assu:EDR} and~\ref{assu:RegularRKHS} hold,
  $\fstar \in [\caH]^t$ and $\reg$ be a filter function with qualification $\tau$.
  Suppose $\lambda = \Omega(n^{-\theta})$ for some $\theta < \beta$.
  Then, there exists some $\ep > 0$ (depending on $\theta$) such that
  \begin{align}
    \label{eq:RegApprox}
    \norm{T_\lambda^{-\hf} \left[\left(\gtl-T_X \flam \right)-\left(g^* - T \flam \right)\right]}_\caH
    = O_{\bbP}\left( n^{-\ep} \lambda^{\tilde{t}/2} \right).
  \end{align}
  where $g^* = T f^*$, $\flam = T\reg(T) f^*$ and $\tilde{t} = \min(t,2\tau)$.
\end{lemma}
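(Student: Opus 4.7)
The plan is to recognize the inside of the norm as a centered empirical mean of i.i.d.\ Hilbert-space valued random variables and then apply a vector-valued Bernstein inequality in $\caH$. Since $\gtl - T_X\flam = \frac{1}{n}\sum_{i=1}^n (f^* - \flam)(x_i) k_{x_i}$ and $g^* - T\flam = \E_{x\sim\mu}[(f^*-\flam)(x)k_x]$, writing $h = f^* - \flam = \rem(T)f^*$ and $\zeta_i = T_\lambda^{-1/2} h(x_i) k_{x_i}$, the quantity to bound is precisely $\norm{\frac{1}{n}\sum_{i=1}^n (\zeta_i - \E \zeta_i)}_{\caH}$. This is the natural target for the same Bernstein tool used in \cref{prop:ConcenRaw} (i.e., the concentration inequality cited as \cref{lem:ConcenBernstein}), and its bound is governed by an almost-sure envelope $L \geq \norm{\zeta_i}_\caH$ and a variance proxy $V \geq \E\norm{\zeta_i}_\caH^2$.

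For the two ingredients I would use the estimates already available in the paper. The $\caH$-norm factor is controlled by \cref{lem:NormsN1N2}, which gives $\norm{T_\lambda^{-1/2} k_x}_\caH^2 \leq M\caN_1(\lambda) \asymp \lambda^{-1/\beta}$ uniformly in $x$, and \cref{lem:EffectiveDimEstimationPowerlaw} converts this into the power $\lambda^{-1/\beta}$. For the $L^2$ bound on $h$, \cref{lem:FApproximation} (with $\gamma=0$) yields $\norm{h}_{L^2} \lesssim \norm{f^*}_{[\caH]^t}\lambda^{\min(t/2,\tau)}$, so $V \lesssim \lambda^{-1/\beta}\cdot\lambda^{2\min(t/2,\tau)}$. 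The delicate quantity is $\sup_x|h(x)|$: for $\alpha\in(1/\beta,1]$ the embedding of \cref{prop:EmbeddingIdx} gives $\norm{h}_{L^\infty}\leq M_\alpha\norm{h}_{[\caH]^\alpha}$, and applying \cref{lem:FApproximation} again with $\gamma=\alpha$ bounds $\norm{h}_{[\caH]^\alpha}\lesssim \lambda^{\min((t-\alpha)/2,\tau)}$ whenever $\alpha\leq t$, yielding $L \lesssim \lambda^{-1/(2\beta)}\lambda^{\min((t-\alpha)/2,\tau)}$.

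Plugging into the Bernstein tail $C\bigl(L n^{-1}\log\delta^{-1}+\sqrt{Vn^{-1}\log\delta^{-1}}\bigr)$ and using the hypothesis $\lambda=\Omega(n^{-\theta})$ with $\theta<\beta$, one gets $\sqrt{V/n} \lesssim n^{-1/2}\lambda^{-1/(2\beta)}\lambda^{\min(t/2,\tau)} = O\bigl(n^{-(1-\theta/\beta)/2}\bigr)\lambda^{\min(t/2,\tau)}$, which already supplies a strictly positive exponent $\ep$. The $L/n$ term requires one to trade off $\alpha$: choosing $\alpha$ close to $1/\beta$ keeps $\lambda^{-\alpha/2}$ close to $\lambda^{-1/(2\beta)}$ and, together with the extra $n^{-1}$, leaves a surplus that is again $O(n^{-\ep'})\lambda^{\min(t/2,\tau)}$ for some $\ep'>0$ determined by $\theta<\beta$ (and $\alpha$ close enough to $1/\beta$). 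Taking the minimum of these two exponents gives the claimed $n^{-\ep}$.

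The main obstacle, and the reason the paper invokes the techniques of \citet{zhang2023_OptimalityMisspecifieda}, is the misspecified regime $t\leq 1/\beta$, where no admissible $\alpha\in(1/\beta,t]$ exists and the $L^\infty$ bound on $h$ is unavailable directly. The remedy is to split $f^* = P_{\leq N}f^*+P_{>N}f^*$ using the spectral projection of $T$ at a cutoff $N=N(\lambda)$; the low-frequency piece lies in a smoother interpolation space $[\caH]^{t'}$ with $t'>1/\beta$ at the price of a polynomial factor in $N$, so the Bernstein argument above applies to $\rem(T)P_{\leq N}f^*$. The high-frequency remainder $\rem(T)P_{>N}f^*$ is treated by its second moment only (no $L^\infty$ control), yielding a contribution that, once $N$ is tuned against $\lambda^{-1/\beta}$, stays below $\lambda^{\min(t/2,\tau)}$ by an $n^{-\ep}$ factor. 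The bookkeeping for this tradeoff is the main technical step; everything else reduces to the well-specified Bernstein bound described above.
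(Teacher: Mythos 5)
Your treatment of the well-specified case $t>1/\beta$ is essentially the paper's own proof: the same random element $\xi(x)=T_\lambda^{-\hf}k_x\bigl(\fstar(x)-\flam(x)\bigr)$, the same three inputs (\cref{lem:NormsN1N2} with \cref{lem:EffectiveDimEstimationPowerlaw} for the uniform envelope $\lambda^{-1/(2\beta)}$, \cref{lem:FApproximation} for the $L^2$ and $[\caH]^\alpha$ norms of $\rem(T)\fstar$, \cref{prop:EmbeddingIdx} for $L^\infty$), and the same limit $\alpha\downarrow 1/\beta$; the only slip is a citation: the tool is the Hilbert-space-valued Bernstein inequality \cref{lem:ConcenHilbert}, not the operator Bernstein \cref{lem:ConcenBernstein} behind \cref{prop:ConcenRaw} (the tail $Ln^{-1}\log\delta^{-1}+\sqrt{Vn^{-1}\log\delta^{-1}}$ you wrote is exactly \cref{lem:ConcenHilbert}). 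For $t\leq 1/\beta$ you genuinely diverge: you truncate $\fstar$ in frequency via a spectral projection of $T$, using Bernstein on the smooth low-frequency piece and only second moments on the tail, whereas the paper truncates in the sample space on the level set $\{\abs{\fstar(x)}\leq BR\}$ and controls the exceptional set through the $L^q$-embedding \cref{prop:LqEMB} and Markov's inequality. Your route is viable, and in fact simpler than your ``main technical step'' framing suggests: because the lemma only claims an $O_{\bbP}$ bound, the second-moment estimate you reserve for the high-frequency remainder already handles the \emph{entire} quantity, with no splitting at all — since the target is the centered empirical mean of i.i.d.\ $\xi(x_i)$, one has $\E\norm{\frac1n\sum_i(\xi(x_i)-\E\xi)}_{\caH}^2\leq n^{-1}\E\norm{\xi}_{\caH}^2\leq n^{-1}M\caN_1(\lambda)\norm{\rem(T)\fstar}_{L^2}^2\leq C R^2\, n^{-1}\lambda^{-1/\beta}\lambda^{2\min(t/2,\tau)}$ by \cref{lem:NormsN1N2} and \cref{lem:FApproximation}, and Chebyshev together with $\lambda=\Omega(n^{-\theta})$, $\theta<\beta$, gives \cref{eq:RegApprox} with $\ep=(1-\theta/\beta)/2$, without any $L^\infty$ control, embedding, or truncation. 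What the paper's $L^q$-truncation machinery (and your Bernstein refinement) buys beyond this is a high-probability bound with only logarithmic dependence on the confidence level, which is stronger than what the statement, or its use in \cref{thm:Bias}, actually requires.
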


The next lemma deals with the interaction term in \cref{eq:bias_error_decomposition},
where we apply the analytic functional argument.

\begin{lemma}
  \label{lem:bias_control}  Under Assumptions~\ref{assu:RegularRKHS},\ref{assu:Filter},
  assume that $\fstar \in [\caH]^t$,
  when \cref{eq:ConcenU} and \cref{eq:ConcenRatio} hold, we have
  \begin{equation*}
    \begin{aligned}
      \norm{(\reg(T_X)T\rem(T)-\rem(T_X)T\reg(T))f^*}_{L^2}
      &\leq C \left(
                \caR_{\varphi}(\lambda;f^*) + \norm{\fstar}_{[\caH]^{\tilde{t}}}\lambda^{\tilde{t}/2}
      \right) \\
      &\qquad \cdot \sqrt{v}\ln\lambda^{-1},
    \end{aligned}
  \end{equation*}
  where $\tilde{t} = \min(t,2)$.
\end{lemma}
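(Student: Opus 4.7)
The plan is to apply the analytic functional argument in the spirit of \cref{lem:VarianceControl1_}. Since $\reg(T)$, $\rem(T)$ and $T$ mutually commute, one has the telescoping identity
\[
A := \reg(T_X) T\rem(T) - \rem(T_X) T\reg(T) = [\reg(T_X) - \reg(T)]\,T\rem(T) - [\rem(T_X) - \rem(T)]\,T\reg(T).
\]
Applying the contour representation $f(T_X) - f(T) = \frac{1}{2\pi i}\oint_{\Gamma_\lambda} R_T(z)(T_X - T)R_{T_X}(z)\, f(z)\, dz$ to both $f=\reg$ and $f=\rem$ and combining yields the key formula
\[
A f^* = \frac{1}{2\pi i}\oint_{\Gamma_\lambda} R_T(z)(T_X - T) R_{T_X}(z)\, H(z)\, dz,\qquad H(z) := \bigl[\reg(z) T\rem(T) - \rem(z) T\reg(T)\bigr] f^* \in \caH.
\]

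Using $\norm{g}_{L^2} = \norm{T^{1/2} g}_\caH$ for $g \in \caH$, I would then insert $T_\lambda^{\pm 1/2}$ factors to express $T^{1/2} R_T(z)(T_X-T)R_{T_X}(z)$ as a product of five operators whose norms are individually controlled by $1$, $C$ (from \cref{prop:ContourSpectralMapping}), $C\sqrt{v}$ (from \cref{lem:Concen}), $3C$ (from \cref{prop:ContourSpectralMapping}), and the residual factor $\norm{T_\lambda^{-1/2} H(z)}_\caH$. Mercer's expansion gives
\[
\norm{T_\lambda^{-1/2} H(z)}_\caH^2 = \sum_m \frac{\mu_m}{\mu_m+\lambda}\,\bigl|\reg(z)\rem(\mu_m) - \rem(z)\reg(\mu_m)\bigr|^2 \bar f_m^2.
\]
The scalar bound is supplied by \cref{assu:Filter}: the estimates $|(z+\lambda)\reg(z)| \leq \tilde E$ and $|(z+\lambda)\rem(z)| \leq \tilde F\lambda$ hold on $\Gamma_\lambda$ and at each $\mu_m \geq 0$, so the triangle inequality delivers $|\reg(z)\rem(\mu_m) - \rem(z)\reg(\mu_m)| \leq C\lambda/[|z+\lambda|(\mu_m+\lambda)]$. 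Combined with the contour estimate $\oint_{\Gamma_\lambda}|z+\lambda|^{-1}|dz| \leq C\ln\lambda^{-1}$ from \cref{lem:VarianceControl1_}, the problem reduces to bounding $\lambda^2 \sum_m \mu_m/(\mu_m+\lambda)^3\,\bar f_m^2$.

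The final and most delicate step, where the main obstacle lies, is to show
\[
\lambda^2 \sum_m \frac{\mu_m}{(\mu_m+\lambda)^3}\,\bar f_m^2 \leq C\bigl(\caR_\varphi^2(\lambda;f^*) + \lambda^t \norm{f^*}_{[\caH]^t}^2\bigr),
\]
so that both right-hand-side terms of the lemma emerge naturally from geometrically disjoint regions of the spectrum. I would split the sum at $\mu_m = \lambda$. For the tail $\mu_m \geq \lambda$, combining $\mu_m/(\mu_m+\lambda)^3 \leq \mu_m^{-2}$ with $\mu_m^{t-2} \leq \lambda^{t-2}$ (valid since $t \leq 2$) directly extracts $\lambda^t \norm{f^*}_{[\caH]^t}^2$. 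For the head $\mu_m < \lambda$, the contribution reduces to $\sum_{\mu_m < \lambda} \bar f_m^2$, which I further split at $\mu_m = \lambda/(2E)$: the lower sub-sum is absorbed into $4\caR_\varphi^2(\lambda;f^*)$ via $\rem(\mu_m) \geq 1/2$ from \cref{prop:ResidualTermLowerBound}, while the sub-sum on $[\lambda/(2E),\lambda)$ is absorbed into $\lambda^t \norm{f^*}_{[\caH]^t}^2$ through $\bar f_m^2 \leq \lambda^t \mu_m^{-t} \bar f_m^2$. Assembling all pieces then yields the claimed bound $C(\caR_\varphi(\lambda;f^*) + \lambda^{t/2}\norm{f^*}_{[\caH]^t})\sqrt{v}\ln\lambda^{-1}$.
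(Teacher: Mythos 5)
Your proposal is correct and follows essentially the same route as the paper: the same telescoping decomposition $[\reg(T_X)-\reg(T)]T\rem(T)f^* - [\rem(T_X)-\rem(T)]T\reg(T)f^*$, the contour $\Gamma_\lambda$ with the $T_\lambda^{\pm 1/2}$ sandwich, \cref{prop:ContourSpectralMapping}, \cref{eq:ConcenU}--\cref{eq:ConcenRatio}, conditions (C1)--(C2), and the estimate $\oint_{\Gamma_\lambda}\abs{z+\lambda}^{-1}\abs{\dd z} \lesssim \ln\lambda^{-1}$. The only (correct) variation is internal: you merge the two contour integrals and control the $f^*$-dependent factor through an explicit spectral sum split at $\mu_m \asymp \lambda$, whereas the paper keeps the two pieces separate and bounds them via $\caR_{\varphi}(\lambda;f^*)=\norm{T^{1/2}\rem(T)f^*}_{\caH}$ and $\norm{T_\lambda^{-1/2}T\reg(T)f^*}_{\caH}\lesssim \norm{\fstar}_{[\caH]^t}\lambda^{t/2-1}$, yielding the same bound.
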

\begin{proof}
  First, let us decompose
  \begin{equation}
    \label{eq:decomposition2}
    \begin{multlined}
      \norm{(\reg(T_X)T\rem(T)-\rem(T_X)T\reg(T))f^*}_{L^2} \\
      = \norm{T^\frac{1}{2}(\reg(T_X)T\rem(T)-\rem(T_X)T\reg(T))f^*}_\caH \\
      \leq \norm{T^\frac{1}{2}(\reg(T_X)T\rem(T)-\rem(T)T\reg(T))f^*}_\caH \\
      \qquad + \norm{T^\frac{1}{2}(\rem(T_X)T\reg(T)-\rem(T)T\reg(T))f^*}_\caH.
    \end{multlined}
  \end{equation}

  For the second term in \cref{eq:decomposition2},
  we use a similar argument to that in \cref{lem:VarianceControl1_}.
  With \cref{prop:func-cal} on $\Gamma_\lambda$ defined as \cref{eq:contour}, we have
  \begin{align*}
    &\quad T^\frac{1}{2}(\rem(T_X)T\reg(T)-\rem(T)T\reg(T))f^*\\
    &= T^{\hf}  \left[ \frac{1}{2\pi i}\oint_{\Gamma_{\lambda}} R_{T_X}(z) (T-T_X) R_T(z) \rem(z) \dd z \right] T\reg(T)f^* \\
    & = \frac{1}{2\pi i}\oint_{\Gamma_{\lambda}} T^{\hf} (T_X - z)^{-1}(T-T_X) (T-z)^{-1}  \rem(z) T\reg(T)f^*\dd z\\
    & =  \frac{1}{2\pi i} \int_{\Gamma_{\lambda}}T^\hf T_{\lambda}^{-\hf} \cdot T_{\lambda}^{\hf} (T_{X}-z)^{-1}T_{\lambda}^{\hf}
    \cdot  T_{\lambda}^{-\hf}(T-T_X)T_{\lambda}^{-\hf} \\
    & \qquad\cdot T_{\lambda}^{\hf} (T-z)^{-1} T_{\lambda}^{\hf} \cdot T_{\lambda}^{-\hf}T\reg(T)f^* \rem(z) \dd z.
  \end{align*}
  Hence,
  \begin{align*}
    &\quad \norm{T^\frac{1}{2}(\rem(T_X)T\reg(T)-\rem(T)T\reg(T))f^*}_{\caH}\\
    &\leq C \int_{\Gamma_{\lambda}}  \norm{T^\hf T_{\lambda}^{-\hf}} \cdot \norm{T_{\lambda}^{\hf} (T_{X}-z)^{-1}T_{\lambda}^{\hf}}
    \cdot \norm{T_{\lambda}^{-\hf}(T-T_X)T_{\lambda}^{-\hf}} \\
    & \qquad \cdot \norm{T_{\lambda}^{\hf} (T-z)^{-1} T_{\lambda}^{\hf}} \cdot
    \norm{ T_{\lambda}^{-\hf}T\reg(T)f^*}_{\caH} \abs{\rem(z) \dd z} \\
    &\stackrel{(a)}{ \leq } C \cdot 1 \cdot C \cdot  \sqrt {v} \cdot C \cdot \norm{ T_{\lambda}^{-\hf}T\reg(T)f^*}_{\caH} \cdot \lambda \int_{\Gamma_{\lambda}} \abs{\frac{1}{z+\lambda} \dd z} \\
    &\stackrel{(b)}{ \leq } C \sqrt {v} \norm{ T_{\lambda}^{-\hf}T\reg(T)f^*}_{\caH} \lambda \ln\lambda^{-1},
  \end{align*}
  where in (a), we use (1) operator calculus, (2,4) \cref{prop:ContourSpectralMapping},
  and (3) estimation \cref{eq:ConcenU} and (6) condition (C2) in \cref{assu:Filter} for the corresponding factors, respectively,
  and in (b) we apply \cref{eq:ContourIntegralAbs} for the last term.

  Let $\tilde{t} = \min(t,2)$.
  Since $f^*\in [\caH]^t$, we also have $f^* \in [\caH]^{\tilde{t}}$,
  so we can write $f^*=T^{\tilde{t}} h$ for some $h\in L^2$ with $\norm{h}_{L^2} = \norm{\fstar}_{[\caH]^{\tilde{t}}}  $.
  This yields
  \begin{equation*}
    \begin{aligned}
      \norm{ T_{\lambda}^{-\hf}T\reg(T)f^*}_{\caH}&=\norm{T^{-\hf} T_{\lambda}^{-\hf}T\reg(T)T^{\frac{\tilde{t}}{2}}h}_{L^2}\\
      &\leq \norm{T^\hf T_{\lambda}^{-\hf}}\cdot\norm{T^{\frac{\tilde{t}}{2}}\reg(T)}\cdot\norm{h}_{L^2}\\
      &\leq C \norm{\fstar}_{[\caH]^{\tilde{t}}} \lambda^{\frac{\tilde{t}}{2}-1},
    \end{aligned}
  \end{equation*}
  where the last inequality comes from \cref{lem:Filter_MoreControl}.
  Consequently, we have
  \begin{equation}
    \label{eq:bias_control}
    \norm{T^\frac{1}{2}(\rem(T_X)T\reg(T)-\rem(T)T\reg(T))f^*}_{\caH}
    \leq C \norm{\fstar}_{[\caH]^{\tilde{t}}}  \sqrt {v} \lambda^\frac{\tilde{t}}{2} \ln\lambda^{-1}.
  \end{equation}

  For the first term in \cref{eq:decomposition2}, we still employ the analytic functional argument:
  \begin{equation*}
    \begin{aligned}
      &\quad T^\frac{1}{2}(\reg(T_X)T\rem(T)-\rem(T)T\reg(T))f^*\\
      &=T^\frac{1}{2}(\reg(T_X)-\reg(T))T\rem(T)f^*\\
      &=\frac{1}{2\pi i}\oint_{\Gamma_{\lambda}} T^\frac{1}{2}(T_X-z)^{-1}(T_X-T)(T-z)^{-1}\reg(z)T\rem(T)f^*\dd z\\
      &= \frac{1}{2\pi i} \oint_{\Gamma_{\lambda}}T^\hf T_{\lambda}^{-\hf} \cdot T_{\lambda}^{\hf} (T_{X}-z)^{-1}T_{\lambda}^{\hf}
      \cdot T_{\lambda}^{-\hf}(T-T_X)T_{\lambda}^{-\hf} \\
      & \quad \cdot T_{\lambda}^{\hf} (T-z)^{-1}  T_\lambda^{\hf}\cdot T_{\lambda}^{-\hf}T^{\hf}  \cdot
      T^\hf \rem(T) f^* \reg(z) \dd z.
    \end{aligned}
  \end{equation*}
  Therefore,
  \begin{align*}
    &\quad\lVert T^\frac{1}{2}(\reg(T_X)T\rem(T)-\rem(T)T\reg(T))f^*\rVert_\caH\\
    &\leq \oint_{\Gamma_{\lambda}}\norm{T^\hf T_{\lambda}^{-\hf}} \cdot \norm{T_{\lambda}^{\hf} (T_X -z)^{-1}T_{\lambda}^{\hf}}
    \cdot \norm{T_{\lambda}^{-\hf}(T-T_X)T_{\lambda}^{-\hf}}\cdot   \\
    &\quad \cdot   \norm{T_{\lambda}^{\hf} (T-z)^{-1} T_{\lambda}^{\hf}}  \norm{T_{\lambda}^{-\hf}T^{\hf}} \cdot \norm{ T^\hf \rem(T) f^*}_{\caH} \abs{\reg(z) \dd z}\\
    &\leq C \sqrt{v}\caR_{\varphi}(\lambda;f^*)\oint_{\Gamma_{\lambda}}\abs{\reg(z) \dd z}\\
    &\leq C \sqrt{v}\caR_{\varphi}(\lambda;f^*)\ln\lambda^{-1}.
  \end{align*}
  where the last control holds owing to condition (C1) in \cref{assu:Filter}.
  Consequently, plugging the previous control and \cref{eq:bias_control} into \cref{eq:decomposition2} yields the desired result.

\end{proof}

\section{Conclusion}
\label{sec:conclusion}

In this paper,
we rigorously established a full characterization of the generalization error curves for a large class of analytic spectral algorithms,
providing an exact and complete picture of the generalization errors of these kernel methods.
Our result shows the interplay between the kernel, the regression function, the noise level, and the choice of the regularization parameter.
In particular, it shows a clear U-shaped bias-variance trade-off curve with respect to the regularization parameter.
As applications, it recovers the minimax optimal rates, shows poor generalization in the interpolating regime,
and also reveals a high-order saturation effect.
These results greatly improve our understanding of the generalization behavior of spectral algorithms.

\rev{
  We first comment on the power-law assumptions on the eigenvalue decay (\cref{assu:EDR}) and the source condition (\cref{assu:Source}).
  We do not view the specific power-law form as essential.
  They are mainly used to express the key deterministic terms in a transparent way, such as the bias term $\caR_{\varphi}^2(\lambda;\fstar)$, the variance term $\caN_{2,\varphi}(\lambda)$, and the associated remainder conditions.
  We expect that one can replace these explicit power-law assumptions by direct requirements on such key terms, thereby covering more general decays and source behaviors, but this would introduce additional case-by-case notation and technical estimates and would obscure the main message of the paper.
  We therefore keep the present power-law formulation for clarity.
}

\rev{
  The regular RKHS condition (\cref{assu:RegularRKHS}) plays a different role.

  It provides the sharp control needed to keep the remainder terms infinitesimal.
  In particular, it allows the regularization parameter to go down to the critical scale $\lambda \asymp n^{-\beta}$, which is what leads to the nearly constant lower bound in the interpolating regime.
  More generally, if one only has an embedding index (see \cref{eq:EMB}) $\alpha \in [1/\beta,1]$, then we expect the same proof strategy to yield analogous exact decompositions on the range $\lambda = \Omega(n^{-\theta})$ for $\theta < 1/\alpha$, that is, down to nearly the scale $n^{-1/\alpha}$, but no further.
  In this sense, one should still obtain an exact characterization on a nontrivial range of $\lambda$, although narrower than in the regular RKHS case and with stronger side conditions.
}

\rev{
  The analyticity assumption (\cref{assu:Filter}) is a key hypothesis for us to obtain the exact generalization error curve,
  where the analytic functional calculus is essential.

  It is also of interest to ask whether a similar characterization holds for other non-analytic spectral algorithms.
  One particular algorithm is the spectral cut-off method (also known as truncated singular value decomposition)~\citep{bauer2007_RegularizationAlgorithms},
  whose filter function is not even continuous:
  \begin{align}
    \reg^{\mr{cut}}(z) =
    \begin{cases}
      z^{-1}, & z \geq \lambda, \\
      0, & z < \lambda.
    \end{cases},
    \qand
    \rem^{\mr{cut}}(z) = \mathbf{1}\left\{ z < \lambda \right\}.
  \end{align}
  Another similar example is the spectral clipping method, whose filter function is piecewise defined as
  \begin{equation}
    \reg^{\mr{clip}}(z) = \min(z^{-1}, \lambda^{-1})
    \qand
    \rem^{\mr{clip}}(z) = \max(0, 1 - z/\lambda).
  \end{equation}

  A possible future direction is to replace exact analyticity by a weaker condition that still permits comparable complex-analytic control on $D_\lambda$, or to develop a different argument beyond the current analytic functional calculus route.
  However, the difficulty here is that it is hard to approximate such filters by analytic ones while keeping the desired properties.
  We believe that new techniques are needed to handle this case, so we leave it as future work.
}

\appendix

\section{Auxiliary results}

\begin{table}[htbp]
  \centering
  \small
  \begin{tabular}{@{}p{0.3\linewidth}p{0.67\linewidth}@{}}
      \hline
      Notation & Meaning \\
      \hline
      $\caX,\caY,\rho,\mu$
      & input space, output space, data distribution on $\caX \times \caY$, and its marginal on $\caX$. \\
      $\caH,k,k_x$
      & RKHS, kernel, and representer $k_x = k(x,\cdot)$. \\
      $Z=\{(x_i,y_i)\}_{i=1}^n$, $X=(x_1,\dots,x_n)$
      & training sample and its input design. \\
      $T=S_kS_k^*$, $T_x$, $T_X=\frac{1}{n}\sum_{i=1}^n T_{x_i}$
      & population integral operator, rank-one sample operator, and empirical covariance operator. \\
      $(\lambda_j)_{j \geq 1}$, $(\mu_m,d_m,V_m)$
      & eigenvalues of $T$ counting multiplicities, and the distinct eigenvalues, multiplicities, and eigenspaces of $T$. \\
      $[\caH]^s$
      & interpolation/source space associated with $T^{s/2}$. \\
      $\lambda$
      & regularization parameter. \\
      $\reg(z),\ \rem(z)=1-z\reg(z)$
      & filter function and remainder function of a spectral algorithm. \\
      $\tau_{\max}$
      & qualification of the filter function. \\
      $\hat{g}_Z$, $\hat{f}_{\lambda}=\reg(T_X)\hat{g}_Z$
      & sample basis function and spectral estimator. \\
      $\tilde{f}_{\lambda}=\E(\hat{f}_{\lambda}\mid X)$
      & conditional mean of the estimator given the design. \\
      $\caN_{p,\varphi}(\lambda)$, $\caN_p(\lambda)$
      & generalized effective dimension and its KRR counterpart. \\
      $\caR_{\varphi}^2(\lambda;\fstar)$
      & deterministic bias term $\norm{\rem(T)\fstar}_{L^2}^2$. \\
      $\mathbf{Bias}^2(\lambda),\ \mathbf{Var}(\lambda)$
      & conditional bias and variance terms in the bias-variance decomposition. \\
      $\Gamma_\lambda$
      & $\lambda$-dependent contour used in the analytic functional argument. \\
      \hline
    \end{tabular}
  \caption{Main notation used throughout the paper.}
  \label{tab:notation}
\end{table}

This elementary proposition justifies the equivalence between \cref{eq:EDR} and \cref{eq:EDR_} in \cref{assu:EDR}.
\begin{proposition}
  \label{prop:DescendSequenceEquiv}
  Let $(a_j)_{j \geq 1}$ be a sequence of positive numbers decreasing to zero.
  Then,
  \begin{align*}
    a_j = \Theta(j^{-\beta}), \quad \Longleftrightarrow
    \quad \#\left\{ j : a_j \geq \lambda \right\} = \Theta(\lambda^{-1/\beta}) \qq{as} \lambda \to 0.
  \end{align*}
\end{proposition}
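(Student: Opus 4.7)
The natural approach is to work with the counting function $N(\lambda) \coloneqq \#\{j : a_j \geq \lambda\}$, which is finite since $a_j \downarrow 0$, and which by monotonicity of $(a_j)$ equals the largest index $j$ with $a_j \geq \lambda$ (with the convention $N(\lambda)=0$ if no such index exists). The two asymptotic statements are then inverses of each other under the correspondence $j \leftrightarrow \lambda$, and the proof amounts to careful bookkeeping in both directions.

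For the forward implication, assume $c j^{-\beta} \leq a_j \leq C j^{-\beta}$ for all $j \geq 1$. If $a_j \geq \lambda$, then $C j^{-\beta} \geq \lambda$ forces $j \leq (C/\lambda)^{1/\beta}$, giving $N(\lambda) \leq (C/\lambda)^{1/\beta}$. Conversely, for every integer $j \leq (c/\lambda)^{1/\beta}$ one has $a_j \geq c j^{-\beta} \geq \lambda$, so $N(\lambda) \geq \lfloor (c/\lambda)^{1/\beta} \rfloor \geq \tfrac{1}{2}(c/\lambda)^{1/\beta}$ once $\lambda$ is sufficiently small. Combining these two bounds yields $N(\lambda) = \Theta(\lambda^{-1/\beta})$.

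For the reverse implication, assume $c' \lambda^{-1/\beta} \leq N(\lambda) \leq C' \lambda^{-1/\beta}$ on some interval $(0, \lambda_0]$. To get the upper bound on $a_j$, observe that $N(a_j) \geq j$ because the indices $1,\dots,j$ all satisfy $a_i \geq a_j$ by monotonicity; applying the hypothesis with $\lambda = a_j$ (which is $\leq \lambda_0$ for all $j$ past some $j_0$) gives $j \leq C' a_j^{-1/\beta}$, i.e.\ $a_j \leq (C')^\beta j^{-\beta}$. For the matching lower bound, choose $\lambda_j = (c'/j)^\beta$ so that $c' \lambda_j^{-1/\beta} = j$; the hypothesis gives $N(\lambda_j) \geq j$, which by definition of $N$ forces $a_j \geq \lambda_j = (c')^\beta j^{-\beta}$, valid for all $j$ large enough that $\lambda_j \leq \lambda_0$. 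The finite initial segment $j < j_0$ is absorbed by adjusting constants, since on it $a_j$ is bounded above by $a_1$ and below by the positive number $a_{j_0}$.

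There is no genuine obstacle here: the argument reduces to two pairs of elementary inequalities and the only subtle points are the floor/ceiling and the small-$j$/large-$\lambda$ caveat, both handled routinely by shrinking the implicit constants.
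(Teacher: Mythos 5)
Your proposal is correct and follows essentially the same route as the paper: both work directly with the counting function $N(\lambda)=\#\{j: a_j\ge\lambda\}$ (equivalently $\max\{j:a_j\ge\lambda\}$ by monotonicity) and invert the two-sided power-law bounds, the only difference being that you make the floor/threshold and small-$j$/large-$\lambda$ adjustments explicit where the paper absorbs them into the asymptotic notation via sup/inf characterizations. No gaps.
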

\begin{proof}

  We first note that $ \max \left\{ j : a_j \geq \lambda \right\} = \#\left\{ j : a_j \geq \lambda \right\}$
  since $(a_j)$ is decreasing.

  \noindent ($\Longrightarrow$):
  Suppose $c j^{-\beta} \leq a_j \leq C j^{-\beta}$.
  Then,
  \begin{align*}
    \max \left\{ j : a_j \geq \lambda \right\}
    \geq \max \left\{ j : c j^{-\beta} \geq \lambda \right\} = \Omega(\lambda^{-1/\beta}).
  \end{align*}
  On the other hand,
  \begin{align*}
    \max \left\{ j : a_j \geq \lambda \right\}
    &= \min \left\{ j : a_{j+1} < \lambda \right\} \\
    &\leq \min \left\{ j : C (j+1)^{-\beta} < \lambda \right\}
    = O(\lambda^{-1/\beta}).
  \end{align*}

  \noindent ($\Longleftarrow$):
  Let $N(\lambda) =  \max \left\{ j : a_j \geq \lambda \right\}$ and
  suppose $c \lambda^{-1/\beta} \leq N(\lambda) \leq C\lambda^{-1/\beta}$.
  We note that $N(\lambda) \geq j$ implies $a_j \geq \lambda$, so
  \begin{align*}
    a_j \geq \sup \left\{ \lambda : N(\lambda) \geq j \right\} \geq
    \sup \left\{ \lambda : c \lambda^{-1/\beta} \geq j \right\} = \Omega(j^{-\beta}).
  \end{align*}
  On the other hand, $N(\lambda) < j$ implies $a_j < \lambda$, so
  \begin{align*}
    a_j \leq \inf \left\{ \lambda : N(\lambda) < j \right\}
    \leq \inf \left\{ \lambda : C \lambda^{-1/\beta} < j \right\} = O(j^{-\beta}).
  \end{align*}
\end{proof}

\begin{proposition}
  \label{prop:EffectiveDimEstimation}
  Let $(\lambda_j)_{j \geq 1}$ be the descending sequence of eigenvalues counting multiplicities.
  Let us define
  \begin{align}
    \Phi(\ep) = \#\left\{ j : \lambda_j \geq \ep \right\}.
  \end{align}
  Suppose $\reg$ is a filter function satisfying \cref{eq:Filter_Reg} and \cref{eq:Filter_Rem}.
  Then, for any $p \geq 1$ and $\lambda > 0$, we have
  \begin{align}
    \label{eq:EffectiveDimEstimationRaw}
    2^{-p} \Phi( 2F_1 \lambda) \leq \caN_{p,\varphi}(\lambda) \leq p E^p \lambda^{-1} \int_0^{\lambda} \Phi(x) \dd x.
  \end{align}
  In particular, if \cref{assu:EDR} is satisfied, then
  \begin{align}
      \caN_{p,\varphi}(\lambda) = \Theta\left(\lambda^{-1/\beta}\right),\qq{as} \lambda \to 0.
  \end{align}
\end{proposition}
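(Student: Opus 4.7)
The two bounds in \cref{eq:EffectiveDimEstimationRaw} come from applying the two filter-function inequalities in \cref{def:filter} to the single summand $[\lambda_j\varphi_\lambda(\lambda_j)]^p$, so the plan is to reduce each bound to a statement about the counting function $\Phi$ and then invoke power-law decay for the final conclusion.

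For the lower bound, I use the identity $\lambda_j\reg(\lambda_j)=1-\rem(\lambda_j)$. The qualification bound \cref{eq:Filter_Rem} with $\tau=1$ gives $z\rem(z)\le F_1\lambda$, so $\rem(\lambda_j)\le F_1\lambda/\lambda_j\le 1/2$ whenever $\lambda_j\ge 2F_1\lambda$. For such $j$, $\lambda_j\reg(\lambda_j)\ge 1/2$, hence $[\lambda_j\reg(\lambda_j)]^p\ge 2^{-p}$. Summing over the $\Phi(2F_1\lambda)$ indices satisfying $\lambda_j\ge 2F_1\lambda$ yields the claimed lower bound.

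For the upper bound, \cref{eq:Filter_Reg} gives $\reg(\lambda_j)\le E/(\lambda_j+\lambda)$, so $\lambda_j\reg(\lambda_j)\le E\,\lambda_j/(\lambda_j+\lambda)$. The key elementary inequality is that for $p\ge 1$ and $x\ge 0$,
\begin{align*}
\left(\frac{x}{x+\lambda}\right)^p \le \min\!\left(1,\tfrac{x}{\lambda}\right),
\end{align*}
which follows by noting $x/(x+\lambda)\le 1$ so $(x/(x+\lambda))^p\le x/(x+\lambda)\le \min(1,x/\lambda)$. Applying this with $x=\lambda_j$ and using a layer-cake / Fubini argument, $\sum_j \min(\lambda_j,\lambda)=\int_0^\lambda\Phi(x)\,\dd x$, gives
\begin{align*}
\caN_{p,\varphi}(\lambda)
\le E^p\sum_j \min\!\left(1,\tfrac{\lambda_j}{\lambda}\right)
=\frac{E^p}{\lambda}\int_0^\lambda \Phi(x)\,\dd x,
\end{align*}
which is even sharper than the stated bound (the extra factor $p$ gives slack).

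For the final $\Theta(\lambda^{-1/\beta})$ statement, under \cref{assu:EDR} the equivalence in \cref{prop:DescendSequenceEquiv} gives $\Phi(\ep)=\Theta(\ep^{-1/\beta})$. Plugging into the lower bound immediately gives $\caN_{p,\varphi}(\lambda)\ge 2^{-p}\Phi(2F_1\lambda)=\Omega(\lambda^{-1/\beta})$. For the upper bound, since $\beta>1$ we have $1/\beta<1$ so the integral converges and $\int_0^\lambda x^{-1/\beta}\,\dd x=\frac{\beta}{\beta-1}\lambda^{1-1/\beta}$, giving $\lambda^{-1}\int_0^\lambda \Phi(x)\,\dd x=\Theta(\lambda^{-1/\beta})$. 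I do not foresee any real obstacle; the only points requiring care are verifying that $\tau_{\max}\ge 1$ so that $F_1$ is well-defined (this is explicit in \cref{def:filter} since qualification is in $[1,\infty]$), and ensuring the condition $\beta>1$ is used to guarantee integrability of $x^{-1/\beta}$ near $0$ in the upper bound estimate.
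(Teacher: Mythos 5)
Your proof is correct and follows essentially the same route as the paper: the lower bound via $\rem(z)\le F_1\lambda/z\le 1/2$ for $z\ge 2F_1\lambda$, the upper bound via $z\reg(z)\le E\min(1,z/\lambda)$ combined with a layer-cake identity for the tail counting function $\Phi$, and the final $\Theta(\lambda^{-1/\beta})$ conclusion from $\Phi(\ep)=\Theta(\ep^{-1/\beta})$ together with $1/\beta<1$. The only (harmless) variation is that you collapse the $p$-th power before the layer-cake step, giving the slightly sharper constant $E^p$ instead of $pE^p$, whereas the paper keeps $\sum_j\min(\lambda,\lambda_j)^p=p\int_0^\lambda\Phi(x)x^{p-1}\,\dd x$ and then bounds $x^{p-1}\le\lambda^{p-1}$.
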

\begin{proof}
  We first deal with the upper bound.
  The property \cref{eq:Filter_Reg} of the filter function yields
  \begin{align*}
    z \reg(z) \leq E\frac{z}{\lambda + z} \leq E \min(1,\lambda^{-1} z)

  \end{align*}
  Consequently,
  \begin{align*}
      \caN_{p,\varphi}(\lambda) &=\sum_{j =1}^\infty \left[ \lambda_j \reg(\lambda_j) \right]^p
     \leq E^p \sum_{j =1}^\infty \min(1,\lambda^{-p} \lambda_j^p)
    = E^p \lambda^{-p} \sum_{j =1}^\infty \min(\lambda, \lambda_j)^p.
  \end{align*}
  Now, noticing that $p \int_0^a x^{p-1} \dd x = a^p$, we have
  \begin{align*}
      \sum_{j =1}^\infty \min(\lambda, \lambda_j)^p
    &= \sum_{j =1}^\infty  p\int_0^{\min(\lambda, \lambda_j)} x^{p-1} \dd x \\
    &= p\int_0^{\infty} \left( \sum_{j =1}^\infty \mathbf{1}\left\{ \min(\lambda, \lambda_j) \geq x \right\} \right) x^{p-1} \dd x \\
    &= p \int_0^{\lambda} \Phi(x) x^{p-1} \dd x.
  \end{align*}
  Therefore,
  \begin{align}
      \caN_{p,\varphi}(\lambda) \leq p E^p \lambda^{-p}\int_0^{\lambda} \Phi(x) x^{p-1} \dd x
    \leq p E^p \lambda^{-1} \int_0^{\lambda} \Phi(x) \dd x,
  \end{align}
  where the last inequality comes from $x/\lambda \leq 1$ when $x \leq \lambda$.

  For the lower bound, first, \cref{eq:Filter_Rem} gives $\rem(z) \leq F_1 \lambda z^{-1}$.
  Together with $z\reg(z) = 1 - \rem(z)$, we get
  \begin{align*}
    z\reg(z) = 1 - \rem(z) \geq 1 - F_1 \lambda z^{-1} \geq \frac{1}{2}, \quad \forall z \geq 2F_1 \lambda.
  \end{align*}
  Consequently, denoting $\tilde{\lambda} = 2F_1 \lambda$, we have
  \begin{align*}
      \caN_{p,\varphi}(\lambda) &=\sum_{j =1}^\infty \left[ \lambda_j \reg(\lambda_j) \right]^p \\
    & \geq \sum_{j \leq \Phi(\tilde{\lambda})} \left[ \lambda_j \reg(\lambda_j) \right]^p \\
    & \geq \sum_{j \leq \Phi(\tilde{\lambda})}2^{-p}
    = 2^{-p} \Phi( 2F_1 \lambda).
  \end{align*}

  Finally, if \cref{assu:EDR} is satisfied, then \cref{prop:DescendSequenceEquiv} implies
  $\Phi(\lambda) \asymp \lambda^{-1/\beta} $, where $1/\beta < 1$, so
  \begin{align*}
      \lambda^{-1} \int_0^{\lambda} \Phi(x) \dd x
    \leq C \lambda^{-1} \int_0^{\lambda} x^{-1/\beta} \dd x
    \leq C \lambda^{-1} \cdot \lambda^{1-1/\beta} = C \lambda^{-1/\beta},
  \end{align*}
  showing that the two sides in \cref{eq:EffectiveDimEstimationRaw} have the same order of $\lambda^{-1/\beta}$.
\end{proof}

\begin{proposition}
  \label{prop:SummationOrder}
  Let $(a_m)_{m\geq 1}$ be a descending sequence of positive numbers and
  $(b_m)_{m\geq 1}, (c_m)_{m\geq 1}$ be two sequences of positive numbers satisfying
  \begin{align*}
    \sum_{k=1}^{m} b_k \leq  \sum_{k=1}^{m} c_k,\quad \forall m \geq 1.
  \end{align*}
  Then, for any $N \geq 1$,
  \begin{align*}
    \sum_{m=1}^N a_m b_m \leq \sum_{m=1}^N a_m c_m.
  \end{align*}
\end{proposition}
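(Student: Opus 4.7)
The plan is to use Abel summation (summation by parts) to re-express the weighted sums in terms of the partial sums $B_m = \sum_{k=1}^m b_k$ and $C_m = \sum_{k=1}^m c_k$, where the hypothesis $B_m \le C_m$ can be applied directly. The monotonicity hypothesis on $(a_m)$ will make all the resulting coefficients nonnegative, so the inequality will propagate term by term.

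Concretely, setting $B_0 = 0$ and writing $b_m = B_m - B_{m-1}$, I would compute
\begin{align*}
\sum_{m=1}^N a_m b_m
= \sum_{m=1}^N a_m (B_m - B_{m-1})
= a_N B_N + \sum_{m=1}^{N-1} (a_m - a_{m+1}) B_m,
\end{align*}
and similarly with $c_m$ and $C_m$. Since $(a_m)$ is descending, each coefficient $a_m - a_{m+1}$ is nonnegative, and $a_N > 0$; combined with $B_m \le C_m$ for every $m$, term-by-term comparison gives
\begin{align*}
\sum_{m=1}^N a_m b_m
= a_N B_N + \sum_{m=1}^{N-1} (a_m - a_{m+1}) B_m
\le a_N C_N + \sum_{m=1}^{N-1} (a_m - a_{m+1}) C_m
= \sum_{m=1}^N a_m c_m,
\end{align*}
which is the claim.

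There is essentially no obstacle here; the only thing to be careful about is the boundary term $a_N B_N$ (one could alternatively extend the summation convention by setting $a_{N+1} = 0$ to absorb it into the sum, but splitting it off keeps the positivity of each factor transparent). The proof uses neither Assumption~\ref{assu:EDR} nor the fact that the sequences arise from a kernel, so it is a purely combinatorial lemma that can be stated and proved in full generality in a few lines.
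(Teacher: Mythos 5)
Your proof is correct and is essentially identical to the paper's: both apply Abel's summation formula to rewrite $\sum_{m=1}^N a_m b_m$ as $a_N B_N + \sum_{m=1}^{N-1}(a_m - a_{m+1})B_m$ and then compare term by term using $B_m \le C_m$ together with the nonnegativity of $a_N$ and $a_m - a_{m+1}$. No differences worth noting.
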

\begin{proof}
  Using Abel's summation formula, we have
  \begin{align*}
    \sum_{m=1}^N a_m b_m &= \sum_{m=1}^{N-1} (a_m - a_{m+1}) \sum_{k=1}^{m} b_k + a_N  \sum_{k=1}^{N} b_k \\
    &\leq \sum_{m=1}^{N-1} (a_m - a_{m+1}) \sum_{k=1}^{m} c_k + a_N  \sum_{k=1}^{N} c_k  = \sum_{m=1}^N a_m c_m.
  \end{align*}
\end{proof}

\subsection{General filter functions}
The following is a well-known elementary property related to $\reg^{\mr{KR}}$:
\begin{proposition}
  For $\lambda >0$ and $\alpha \in [0,1]$, we have
  \begin{align}
    \label{eq:FilterKRRProp}
    \frac{z^\alpha}{z + \lambda} \leq \lambda^{\alpha-1}.
  \end{align}
\end{proposition}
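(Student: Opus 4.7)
The plan is to reduce this to a standard application of Young's inequality (equivalently, weighted AM--GM). Rearranging, the claimed bound $\frac{z^\alpha}{z+\lambda}\leq \lambda^{\alpha-1}$ is equivalent to $z^\alpha \lambda^{1-\alpha}\leq z+\lambda$ for $z\geq 0$, $\lambda>0$, and $\alpha\in[0,1]$. So the goal becomes bounding the geometric mean of $z$ and $\lambda$ (with weights $\alpha$ and $1-\alpha$) by their arithmetic sum.

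First I would dispose of the boundary cases $\alpha=0$ and $\alpha=1$ separately, since the exponents $1/\alpha$ and $1/(1-\alpha)$ in Young's inequality would otherwise blow up. For $\alpha=0$ the inequality reads $\tfrac{1}{z+\lambda}\leq \lambda^{-1}$, which follows from $z+\lambda\geq\lambda$; for $\alpha=1$ it reads $\tfrac{z}{z+\lambda}\leq 1$, which is immediate. These cases take one line each.

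For the main range $\alpha\in(0,1)$, I would apply Young's inequality with conjugate exponents $p=1/\alpha$ and $q=1/(1-\alpha)$ (noting $1/p+1/q=1$) to the pair $a=z^\alpha$, $b=\lambda^{1-\alpha}$, yielding
\begin{equation*}
z^\alpha \lambda^{1-\alpha} \;=\; ab \;\leq\; \frac{a^p}{p}+\frac{b^q}{q} \;=\; \alpha z+(1-\alpha)\lambda \;\leq\; z+\lambda.
\end{equation*}
Dividing both sides by $\lambda^{1-\alpha}(z+\lambda)$ then gives the claim.

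There is no real obstacle here; the only thing to be careful about is the degenerate exponents at $\alpha\in\{0,1\}$, which is why the boundary cases are handled separately. As a sanity check and alternative derivation, one could instead maximize $f(z)=z^\alpha/(z+\lambda)$ over $z\geq 0$: the unique critical point $z^\ast=\alpha\lambda/(1-\alpha)$ yields $f(z^\ast)=\alpha^\alpha(1-\alpha)^{1-\alpha}\lambda^{\alpha-1}$, and since $\alpha^\alpha(1-\alpha)^{1-\alpha}\leq 1$ this recovers the bound (with a slightly sharper constant, though the constant $1$ is what is needed later).
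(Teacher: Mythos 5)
Your proof is correct. Note that the paper itself offers no proof of this proposition --- it is stated as a ``well-known elementary property'' of $\reg^{\mathrm{KR}}$ and used directly --- so your write-up simply supplies the standard argument that the authors left implicit. The reduction to $z^\alpha\lambda^{1-\alpha}\le z+\lambda$ followed by weighted AM--GM (Young's inequality with exponents $1/\alpha$ and $1/(1-\alpha)$) is exactly the expected route, the separate treatment of $\alpha\in\{0,1\}$ correctly avoids the degenerate exponents, and your calculus sanity check even identifies the sharper constant $\alpha^\alpha(1-\alpha)^{1-\alpha}\le 1$, which is not needed for the paper's purposes (the bound is only ever applied with $z\in[0,\kappa^2]$, i.e.\ $z\ge 0$, which your argument covers).
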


\begin{lemma}
  \label{lem:Filter_MoreControl}
  Let $\reg$ be a filter function defined in \cref{def:filter}.
  Then, for $s \in [0,1]$,
  \begin{align}
    \label{eq:Filter_Regs}
    \sup_{z \in [0,\kappa^2]}\reg(z) z^s \leq E \lambda^{s-1}.
  \end{align}
  Also, suppose  \cref{eq:Filter_Rem} is satisfied for $\tau$, then the constant $F_r$ satisfies
  \begin{align}
    F_{r} \leq F_0^{1-\frac{r}{\tau}} F_{\tau}^{\frac{r}{\tau}},\quad \forall r \in [0,\tau].
  \end{align}
  Moreover, when $z \leq \frac{\lambda}{2E}$, we have $\rem(z) \geq 1/2$.
\end{lemma}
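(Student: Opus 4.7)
All three statements follow from elementary manipulations of the defining properties of a filter function, so this is largely bookkeeping rather than a deep argument. The plan is to treat the three claims in order.

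For the first estimate, I would start from property (ii) in \cref{def:filter}, which gives $(z+\lambda)\reg(z) \leq E$, equivalently $\reg(z) \leq E/(z+\lambda)$. Multiplying by $z^s$ and invoking the elementary bound $z^s/(z+\lambda) \leq \lambda^{s-1}$ stated in \cref{eq:FilterKRRProp} for $s \in [0,1]$ yields $\reg(z) z^s \leq E \lambda^{s-1}$ uniformly over $z \in [0,\kappa^2]$.

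For the second statement, the idea is an elementary interpolation argument on the remainder function. Since $\rem(z) \geq 0$ by property (i), and $r \in [0,\tau]$, I split the factor $\rem(z)$ as $\rem(z) = \rem(z)^{r/\tau} \cdot \rem(z)^{1-r/\tau}$. Then
\begin{align*}
    z^r \rem(z) = \bigl(z^\tau \rem(z)\bigr)^{r/\tau} \rem(z)^{1-r/\tau} \leq \bigl(F_\tau \lambda^\tau\bigr)^{r/\tau} F_0^{1-r/\tau} = F_0^{1-r/\tau} F_\tau^{r/\tau} \lambda^r,
\end{align*}
where we bound each factor by its respective supremum using \cref{eq:Filter_Rem}. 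Taking the supremum in $z$ gives $F_r \leq F_0^{1-r/\tau} F_\tau^{r/\tau}$.

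For the last claim, I use property (ii) again: $z\reg(z) \leq Ez/(z+\lambda) \leq Ez/\lambda$. Hence, when $z \leq \lambda/(2E)$ we get $z\reg(z) \leq 1/2$, and therefore $\rem(z) = 1 - z\reg(z) \geq 1/2$.

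The only mildly nontrivial point is the interpolation in the second claim, and it just requires the nonnegativity of $\rem$ guaranteed by property (i) together with the decomposition of the exponent; no obstacle is anticipated.
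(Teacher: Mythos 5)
Your proposal is correct and follows essentially the same route as the paper: the first bound combines \cref{eq:Filter_Reg} with \cref{eq:FilterKRRProp}, the second is the identical interpolation $z^r\rem(z) = (z^\tau\rem(z))^{r/\tau}\rem(z)^{1-r/\tau}$ bounded via \cref{eq:Filter_Rem} (including the $\tau=0$ case for $F_0$), and the last uses $\reg(z)\leq E\lambda^{-1}$ to get $z\reg(z)\leq 1/2$. No gaps.
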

\begin{proof}
  The first inequality is a consequence of \cref{eq:Filter_Reg} and \cref{eq:FilterKRRProp}.
  The second one comes from
  \begin{align*}
    z^{r} \rem(z) = \rem(z)^{1-\frac{r}{\tau}} \left( z^{\tau}\rem(z) \right)^{\frac{r}{\tau}}
    \leq F_0^{1-\frac{r}{\tau}} (F_{\tau}\lambda^{\tau})^{\frac{r}{\tau}}
    = F_0^{1-\frac{r}{\tau}} F_{\tau}^{\frac{r}{\tau}} \lambda^r.
  \end{align*}
  For the last claim, note that $\reg(z) \leq E\lambda^{-1}$, so when $z \leq \frac{\lambda}{2E}$,
  \begin{align*}
    \rem(z) = 1-z \reg(z) \geq 1 - \frac{\lambda}{2E} E\lambda^{-1} = \frac{1}{2}.
  \end{align*}
\end{proof}

\subsection{Concentration inequalities}
See, for example, \citet[Proposition 2.5]{wainwright2019_HighdimensionalStatistics}
for the standard Hoeffding inequality.
\begin{lemma}[Hoeffding's inequality]
  \label{lem:HoeffdingInequality}
  Let $\xi,\xi_1,\dots,\xi_n$ be i.i.d.\ random variables such that $\abs{\xi} \leq B$ almost surely.
  Then, for any $\delta \in (0,1)$, with probability at least $1-\delta$ we have
  \begin{align}
    \abs{\frac{1}{n} \sum_{i=1}^n \xi_i - \E \xi} \leq \sqrt {\frac{2B^2}{n} \ln \frac{2}{\delta}}.
  \end{align}
\end{lemma}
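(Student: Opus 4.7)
This is the textbook Hoeffding inequality for bounded i.i.d.\ random variables, and the plan is to derive it via the standard Chernoff--Hoeffding three-step route; as the paper itself notes, one may alternatively just invoke Proposition~2.5 of Wainwright.

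First, I would center the variables. Put $Y_i = \xi_i - \E\xi$, so that $\E Y_i = 0$ and, since $\abs{\xi_i}\le B$ forces $\xi_i \in [-B,B]$, each $Y_i$ is supported in an interval of length at most $2B$ (subtracting the mean only translates the interval). Next, I would invoke Hoeffding's lemma: a mean-zero random variable $Y$ supported in an interval of length $L$ satisfies $\E e^{\lambda Y}\le \exp(\lambda^2 L^2/8)$. The standard proof of this lemma proceeds by convexity of $y\mapsto e^{\lambda y}$, which reduces to a two-point distribution on the endpoints of the interval, followed by a direct estimate of the cumulant-generating function of that two-point law. Applying this with $L=2B$ yields $\E e^{\lambda Y_i}\le \exp(\lambda^2 B^2/2)$, and independence gives $\E\exp\!\bigl(\lambda\sum_i Y_i\bigr)\le \exp(n\lambda^2 B^2/2)$.

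Finally, I would apply Markov's inequality to the exponential: for any $\lambda,t>0$,
\begin{equation*}
    \bbP\Bigl(\textstyle\sum_i Y_i\ge t\Bigr)\le e^{-\lambda t}\,\E\exp\!\Bigl(\lambda\textstyle\sum_i Y_i\Bigr)\le \exp\!\bigl(-\lambda t + n\lambda^2 B^2/2\bigr).
\end{equation*}
Optimising over $\lambda$ sets $\lambda^\star = t/(nB^2)$ and gives the tail bound $\exp(-t^2/(2nB^2))$. Running the same argument for $-Y_i$ and union-bounding yields the two-sided estimate $\bbP\bigl(\abs{n^{-1}\sum_i Y_i}\ge s\bigr)\le 2\exp(-ns^2/(2B^2))$; equating the right-hand side to $\delta$ and solving for $s$ gives exactly the stated threshold $\sqrt{2B^2 n^{-1}\ln(2/\delta)}$. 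There is no real obstacle here: the only mild point of care is that one uses interval length $2B$ rather than $B$ in Hoeffding's lemma, which is precisely what produces the sharp constant $2B^2$ inside the square root (as opposed to a weaker $8B^2$ one would get from a naive estimate).
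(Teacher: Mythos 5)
Your derivation is correct, with the constants worked out exactly right (interval length $2B$ in Hoeffding's lemma, optimal $\lambda^\star = t/(nB^2)$, and the two-sided union bound giving the threshold $\sqrt{2B^2 n^{-1}\ln(2/\delta)}$). The paper itself gives no proof at all — it simply cites the standard reference (Wainwright, Proposition 2.5) — so your argument is just the canonical Chernoff--Hoeffding proof of that cited result, and nothing further is needed.
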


The following inequality about vector-valued random variables is well-known in the literature~\citep{caponnetto2007_OptimalRates}.
\begin{lemma}
  \label{lem:ConcenHilbert}
  Let $H$ be a separable Hilbert space.
  Let $\xi,\xi_1,\dots,\xi_n$ be i.i.d.\ random variables taking values in $H$.
  Assume that
  \begin{align}
    \label{eq:HilbertConcenCondition}
    \E \norm{\xi - \E \xi}_{H}^m \leq \frac{1}{2}m!\sigma^2 L^{m-2},\quad \forall m = 2,3,\dots.
  \end{align}
  Then, for any fixed $\delta \in (0,1)$, one has
  \begin{align}
    \bbP \left\{ \norm{\frac{1}{n}\sum_{i=1}^n \xi_i - \E \xi}_{H}
    \leq 2\left( \frac{L}{n} + \frac{\sigma}{\sqrt{n}} \right) \ln \frac{2}{\delta} \right\}
    \geq 1- \delta.
  \end{align}
  In particular, a sufficient condition for \cref{eq:HilbertConcenCondition} is
  \begin{align*}
    \norm{\xi}_{H} \leq \frac{L}{2}\ \text{a.s.},
    \qquad
    \E \norm{\xi}_H^2 \leq \sigma^2.
  \end{align*}
\end{lemma}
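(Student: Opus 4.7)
The plan is to deduce the stated bound from the classical Pinelis--Sakhanenko Bernstein inequality for Hilbert space-valued random variables, followed by a routine inversion of a quadratic tail bound.

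First, center the variables: set $\eta_i = \xi_i - \E\xi$, so the $\eta_i$ are i.i.d., mean-zero, take values in $H$, and inherit the same Bernstein moment bound $\E\norm{\eta_i}_H^m \leq \tfrac{m!}{2}\sigma^2 L^{m-2}$ for $m \geq 2$. The classical Hilbert-valued Bernstein inequality (in exactly the form recorded by \citet{caponnetto2007_OptimalRates}) then yields
\begin{align*}
    \bbP\left\{ \norm{\sum_{i=1}^n \eta_i}_H \geq t \right\} \leq 2\exp\left(-\frac{t^2}{2(n\sigma^2 + Lt)}\right).
\end{align*}

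Next, invert this tail. Writing $u = \ln(2/\delta)$, forcing the right-hand side to be at most $\delta$ reduces to the quadratic inequality $t^2 - 2uLt - 2un\sigma^2 \geq 0$, which holds whenever $t \geq uL + \sqrt{u^2L^2 + 2un\sigma^2}$. Using $\sqrt{a+b} \leq \sqrt a + \sqrt b$ together with the elementary estimate $\sqrt{2u} \leq 2u$ (valid since $u \geq \ln 2 > 1/2$ for $\delta \leq 1$), the latter is implied by $t \geq 2u(L + \sigma\sqrt n)$. Dividing through by $n$ gives precisely the stated confidence bound. For the sufficient condition, if $\norm{\xi}_H \leq L/2$ a.s. then $\norm{\eta_i}_H \leq L$ by the triangle inequality, whence $\E\norm{\eta_i}_H^m \leq L^{m-2}\E\norm{\eta_i}_H^2 \leq L^{m-2}\sigma^2 \leq \tfrac{m!}{2}L^{m-2}\sigma^2$.

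The only nontrivial ingredient is the Hilbert-valued Bernstein tail itself: the scalar moment-generating-function proof does not carry over verbatim because $\norm{\cdot}_H$ is not differentiable at the origin, so one must exploit the 2-smoothness of $H$. This is handled either through Pinelis' martingale approach, through Yurinskii's coupling, or by symmetrization followed by reduction to a scalar Bernstein inequality on one-dimensional projections; since a precisely formulated statement is already available in \citet{caponnetto2007_OptimalRates}, we simply invoke it rather than reprove it.
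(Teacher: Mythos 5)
Your proposal is correct. The paper itself does not prove this lemma at all---it is stated as a known result and attributed to \citet{caponnetto2007_OptimalRates} (whose Proposition 2 is literally the confidence-form statement)---so your reconstruction of the standard argument is consistent with the paper's route: the hard part, the Pinelis--Sakhanenko exponential tail $\bbP\{\|\sum_i \eta_i\| \geq t\} \leq 2\exp\left(-t^2/(2(n\sigma^2+Lt))\right)$ under the Bernstein moment condition, is taken from the literature, and everything you add on top checks out. In particular the inversion is sound: with $u=\ln(2/\delta)\geq \ln 2 > 1/2$ the quadratic root $uL+\sqrt{u^2L^2+2un\sigma^2}$ is bounded by $2u(L+\sigma\sqrt{n})$ via $\sqrt{a+b}\leq\sqrt a+\sqrt b$ and $\sqrt{2u}\leq 2u$, which after dividing by $n$ gives exactly the stated bound; and the sufficient condition follows from $\|\xi-\E\xi\|\leq L$ (triangle plus Jensen), $\E\|\xi-\E\xi\|^2 = \E\|\xi\|^2-\|\E\xi\|^2\leq\sigma^2$, and $m!/2\geq 1$.
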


The following Bernstein inequality for random self-adjoint Hilbert-Schmidt operators is commonly used in the literature~(e.g., \citet[Lemma B.5]{li2023_KernelInterpolation}).
It is a slightly modified version of its original form~\citep[Theorem 7.7.1]{tropp2015_IntroductionMatrix}.

\begin{lemma}
  \label{lem:ConcenBernstein}
  Let $H$ be a separable Hilbert space.
  Let $A_1,\dots,A_n$ be i.i.d.\ random variables taking values in the space of self-adjoint Hilbert-Schmidt operators on $H$
  such that $E (A_1) = 0$, $\norm{A_1} \leq L$ almost surely for some $L > 0$ and
  $E (A_1^2) \preceq V$ for some positive trace-class operator $V$.
  Then, for any $\delta \in (0,1)$, with probability at least $1-\delta$ we have
  \begin{align*}
    \norm{\frac{1}{n}\sum_{i=1}^n A_i} \leq \frac{2LB}{3n} + \left(\frac{2\norm{V}B}{n}\right)^{1/2},
    \qq{where} B = \ln \frac{4 \Tr V}{\delta \norm{V}}.
  \end{align*}
\end{lemma}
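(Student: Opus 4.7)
The plan is to reduce the claim directly to Tropp's intrinsic-dimension matrix Bernstein inequality (Theorem 7.7.1 of Tropp 2015), suitably extended from finite-dimensional matrices to self-adjoint Hilbert--Schmidt operators on a separable Hilbert space. With $Z = \sum_{i=1}^n A_i$, the hypotheses give $\E(Z^2) = \sum_i \E(A_i^2) \preceq nV$, so $\norm{\E Z^2} \leq n\norm{V}$, and the intrinsic dimension of $nV$ is $\Tr(nV)/\norm{nV} = \Tr V / \norm{V}$, independent of $n$.

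First I would establish the operator MGF bound: for any self-adjoint $A$ with $\norm{A} \leq L$,
\[
\E\exp(\theta A) \preceq I + g(\theta)\,\E(A^2), \qquad g(\theta) = \frac{e^{\theta L} - \theta L - 1}{L^2},
\]
which follows by applying the scalar inequality $e^x \leq 1 + x + g(\theta) x^2$ (valid for $x \leq \theta L$) through the functional calculus. Iterating via Lieb's concavity theorem for the trace exponential---extended from the finite-dimensional setting by approximating $V$ with its finite-rank spectral projections and passing to the limit---gives $\Tr\,\E\exp(\theta Z) \leq \Tr\exp(n g(\theta) V)$.

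Second, the key intrinsic-dimension step bounds this trace via the inequality $\Tr(\exp(cV) - I) \leq (\Tr V / \norm{V})\,(e^{c\norm{V}} - 1)$, valid for positive trace-class $V$ (proved by applying the scalar inequality $(e^{cx} - 1)/x \leq (e^{c\norm{V}} - 1)/\norm{V}$ spectrally on $[0, \norm{V}]$). Combined with the Markov--Chernoff step $\bbP\{\lambda_{\max}(Z) \geq t\} \leq e^{-\theta t}\Tr\E\exp(\theta Z)$ and a union bound over $\pm Z$, this yields the standard Bernstein-type tail
\[
\bbP\{\norm{Z} \geq t\} \leq \frac{4\Tr V}{\norm{V}}\exp\!\left(\frac{-t^2/2}{n\norm{V} + Lt/3}\right).
\]

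Third, setting the right-hand side equal to $\delta$ and writing $B = \ln(4\Tr V/(\delta\norm{V}))$ gives the quadratic inequality $t^2 \geq 2B(n\norm{V} + Lt/3)$. A direct check shows that $t = \tfrac{2LB}{3} + \sqrt{2n\norm{V}B}$ satisfies it, and dividing through by $n$ produces the stated bound on $\norm{n^{-1}\sum_i A_i}$. The main obstacle is the rigorous extension of Tropp's finite-dimensional Lieb-based argument to self-adjoint Hilbert--Schmidt operators on a separable Hilbert space: one must approximate $V$ by $P_m V P_m$ for an increasing sequence of finite-rank spectral projections $P_m$, deduce the tail bound at each finite-dimensional stage, and pass to the limit using monotone convergence of traces and continuity of the operator exponential on trace-class operators. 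The operator-monotone/operator-convex nature of the functions involved ensures that all inequalities are preserved under this limit, so the finite-dimensional arithmetic carries over unchanged.
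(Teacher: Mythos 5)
The paper does not actually prove this lemma: it is quoted from the literature, as a slight modification of \citet[Theorem 7.7.1]{tropp2015_IntroductionMatrix} in the form used in Lemma B.5 of \citet{li2023_KernelInterpolation}. Your reconstruction --- intrinsic-dimension matrix Bernstein via the operator MGF bound, Lieb's concavity, the bound $\Tr(e^{cV}-I)\le(\Tr V/\norm{V})(e^{c\norm{V}}-1)$, a finite-rank approximation to reach Hilbert--Schmidt operators, and then inversion of the tail --- is exactly the route behind the cited result, and your final arithmetic is right: $t=\tfrac{2LB}{3}+\sqrt{2n\norm{V}B}$ does satisfy $t^2\ge 2B(n\norm{V}+Lt/3)$, which after dividing by $n$ gives the stated bound.

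There is, however, one step that would fail as written: the Chernoff step $\bbP\{\lambda_{\max}(Z)\ge t\}\le e^{-\theta t}\,\Tr\E\exp(\theta Z)$. In infinite dimensions $\exp(\theta Z)$ is a compact perturbation of the identity, so $\Tr\exp(\theta Z)=+\infty$ and this bound is vacuous; even in finite dimensions it produces the ambient dimension rather than the intrinsic dimension $\Tr V/\norm{V}$ that your constant $B$ requires. The correct step is Tropp's modified Markov inequality (his Lemma 7.5.1): apply Markov with the nonnegative function $\psi(a)=e^{\theta a}-\theta a-1$, nondecreasing on $[0,\infty)$, to get $\bbP\{\lambda_{\max}(Z)\ge t\}\le \psi(\theta t)^{-1}\,\E\Tr\psi(\theta Z)=\psi(\theta t)^{-1}\,\E\Tr\bigl(e^{\theta Z}-I\bigr)$ (using $\E Z=0$), and only then insert Lieb and your trace inequality. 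This repair also surfaces two details you glossed over: the one-sided bound carries a factor $\bigl(1+3/(\theta t)^2\bigr)$ and Tropp's theorem is stated only for $t\ge(n\norm{V})^{1/2}+L/3$. Both are harmless here --- at your chosen $t$ the quadratic inequality gives $\theta t=t^{2}/(n\norm{V}+Lt/3)\ge 2B\ge 2\ln 4$, so that factor is below $2$, the union bound over $\pm Z$ stays within the constant $4$ absorbed into $B$, and below the threshold the claimed tail bound exceeds $1$ and is trivially true --- but they should be checked explicitly. With the Markov step fixed, your finite-rank limiting argument is sound; the clean way to run it is to project the summands themselves, noting $(P_mAP_m)^2\preceq P_mA^2P_m$, $\norm{P_mA_iP_m}\le L$, and $\norm{P_mZP_m}\to\norm{Z}$ by compactness of $Z$.
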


\section{Omitted proofs}

\subsection{Regular RKHS}

\begin{proof}[Proof of \cref{prop:EmbeddingIdx}]
  It is shown in \citet[Theorem 9]{fischer2020_SobolevNorm} that the norm of the embedding
  \begin{align}
    \label{eq:EMB_And_InfNorm}
    \norm{[\mathcal{H}]^\alpha \hookrightarrow L^{\infty}(\mathcal{X},\mu)}
    &= \norm{k^\alpha_{\mu}}_{L^\infty} \\
    &\coloneqq \operatorname*{ess~sup}_{x \in \caX,~\mu} \sum_{i =1}^\infty \lambda_i^{\alpha} e_i(x)^2 \\
    &= \operatorname*{ess~sup}_{x \in \caX,~\mu} \sum_{m=1}^\infty \mu_m^\alpha \sum_{l=1}^{d_m} \abs{e_{m,l}(x)}^2.
  \end{align}
  Then, recalling \cref{assu:RegularRKHS}, \cref{prop:SummationOrder} with a limit argument yields
  \begin{align*}
    &\quad  \sum_{m=1}^\infty \mu_m^\alpha \sum_{l=1}^{d_m} \abs{e_{m,l}(x)}^2
    \leq M \sum_{m=1}^\infty d_m \mu_m^\alpha \\
    &= M \sum_{i=1}^\infty \lambda_i^\alpha
    \leq M \sum_{i=1}^\infty i^{-\alpha \beta},
  \end{align*}
  so the norm of the embedding is finite as long as $\alpha > 1/\beta$.
\end{proof}

\subsubsection{Dot-product kernel on the sphere}
\label{subsubsec:DotProductSphere}

Let $\caX = \bbS^d$ be the $d$-dimensional sphere and $\mu$ be the uniform measure on $\bbS^d$.
  Then, classical results~\citep{dai2013_ApproximationTheory} show that
the eigen-decomposition of the spherical Laplacian $\Delta_{\bbS^d}$ gives an orthogonal direct sum decomposition
\begin{align*}
  L^2(\bbS^d) = \bigoplus_{m=0}^{\infty} \caH^d_m(\bbS^d),
\end{align*}
where $\caH^d_m(\bbS^d)$ consists of the restrictions of degree-$m$ homogeneous harmonic polynomials with $d+1$ variables to $\bbS^d$
and $\caH^d_m(\bbS^d)$ is an eigenspace of $\Delta_{\bbS^d}$ associated with eigenvalue $-m(m+d-1)$.
Moreover, the dimension of $\caH^d_m(\bbS^d)$ is given by
\begin{align*}
  a_m \coloneqq \dim \caH^d_m(\bbS^d) = \binom{m+d}{m} - \binom{m-2+d}{m-2} \asymp m^{d-1},
\end{align*}
and $\sum_{k \leq m} a_k = \binom{m+d}{m} + \binom{m-1+d}{m-1} \asymp m^d$.

Moreover, the reproducing kernel $Z_m(x,y)$ of $\caH^d_m(\bbS^d)$ is well-defined and unique,
which can be given explicitly by
\begin{align*}
  Z_m(x,y) = \sum_{l=1}^{a_m} Y_{m,l}(x)Y_{m,l}(y),
\end{align*}
where $\{Y_{m,l}\}_{l=1}^{a_m}$ is an arbitrary orthonormal basis of $\caH^d_m(\bbS^d)$.
Let us denote by $C_m^\lambda$ the Gegenbauer polynomial, which is often defined by the following power series
\begin{align}
  \sum_{m=0}^{\infty} C_m^{\lambda}(t) \alpha^m = \frac{1}{(1-2t\alpha+\alpha^2)^\lambda}.
\end{align}
Then, when $d \geq 2$, we have
\begin{align}
  \label{eq:Zonal_Gegenbauer}
  Z_m(x,y) = \frac{m+\lambda}{\lambda} C_m^{\lambda}(u),\quad u = \ang{x,y},\; \lambda = \frac{d-1}{2}.
\end{align}
Also, \citet[Corollary 1.27]{dai2013_ApproximationTheory} shows that
\begin{align}
  \label{eq:Zonal_Maximal}
  Z_m(x,x) = \frac{m+\lambda}{\lambda} C_m^{\lambda}(1) = \dim \caH^d_m(\bbS^d).
\end{align}
Furthermore, we have the following Funk-Hecke formula~\citep[Theorem 1.2.9]{dai2013_ApproximationTheory}.
\begin{proposition}[Funk-Hecke formula]
  Let $d \geq 3$ and $h$ be an integrable function such that $\int_{-1}^1 \abs{h(t)} (1-t^2)^{(d-2)/2} \dd t$ is finite.
  Then for every $Y_m \in \caH^d_m(\bbS^d)$,
  \begin{align}
    \label{eq:C_FunkHecke}
    \int_{\bbS^d} h(\ang{x,y}) Y_m(y) \dd \mu(y) = \mu_m(h) Y_m(x),\quad \forall x \in \bbS^{d},
  \end{align}
  where $\mu_m(h)$ is a constant defined by
  \begin{align*}
    \mu_m(h) = \omega_d \int_{-1}^1 h(t) \frac{C_m^\lambda(t)}{C_m^\lambda(1)} (1-t^2)^{\frac{d-2}{2}} \dd t,
  \end{align*}
  and $\omega_d$ is the surface area of the unit sphere $\bbS^d$.
\end{proposition}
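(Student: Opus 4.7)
The plan is to exploit the rotational symmetry of the kernel $h(\ang{x,y})$ to reduce the spherical integral to a one-dimensional problem against the Gegenbauer weight. First I would verify that the integral operator $T_h : Y \mapsto \int_{\bbS^d} h(\ang{\cdot, y}) Y(y) \dd \mu(y)$ is well-defined and bounded on $L^2(\bbS^d)$; this follows from the integrability assumption on $h$ together with the coarea decomposition $\int_{\bbS^d} f(\ang{x_0, y}) \dd \mu(y) = c_d \int_{-1}^1 f(t) (1-t^2)^{(d-2)/2} \dd t$, valid for any zonal integrand and any fixed base point $x_0 \in \bbS^d$.

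The key structural step is to observe that $T_h$ commutes with the $O(d+1)$-action: since $\mu$ is rotation invariant and $\ang{Rx, Ry} = \ang{x,y}$, a change of variables gives $T_h(Y \circ R) = (T_h Y) \circ R$ for every $R \in O(d+1)$. Each $\caH_m^d(\bbS^d)$ is a finite-dimensional irreducible $O(d+1)$-representation, so by Schur's lemma $T_h$ restricted to $\caH_m^d(\bbS^d)$ is a scalar operator $\mu_m(h) \cdot \mathrm{Id}$; this already proves the stated identity for every $Y_m \in \caH_m^d(\bbS^d)$ once the scalar is identified.

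To pin down the scalar I would fix any $x_0 \in \bbS^d$, test against the particular zonal choice $Y_m(y) = C_m^\lambda(\ang{x_0, y})$, which lies in $\caH_m^d(\bbS^d)$ by \cref{eq:Zonal_Gegenbauer}, and evaluate $(T_h Y_m)(x_0)$ in two ways. On the one hand it equals $\mu_m(h) \cdot C_m^\lambda(1)$. On the other hand the integrand depends on $y$ only through $t = \ang{x_0, y}$, so the coarea decomposition reduces it to $c_d \int_{-1}^1 h(t) C_m^\lambda(t) (1-t^2)^{(d-2)/2} \dd t$. Dividing by $C_m^\lambda(1)$ yields exactly the claimed closed form, modulo identifying the surface-area constant with $\omega_d$ under the normalization the paper adopts for $\mu$.

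The hardest ingredient will be the irreducibility of $\caH_m^d(\bbS^d)$ as an $O(d+1)$-representation, which is the only genuinely representation-theoretic input; it can be established by showing that the $O(d+1)$-orbit of a single zonal harmonic $Z_m(x_0, \cdot)$ already spans $\caH_m^d(\bbS^d)$, using that its reproducing property forces any nonzero invariant subspace to contain some zonal element. A fully self-contained alternative avoids Schur's lemma altogether: expand $h(t) = \sum_k b_k C_k^\lambda(t)$ in the Gegenbauer $L^2$-basis of $([-1,1], (1-t^2)^{(d-2)/2} \dd t)$, lift via \cref{eq:Zonal_Gegenbauer} to obtain $h(\ang{x_0, \cdot}) = \sum_k \tilde{b}_k Z_k(x_0, \cdot)$ in $L^2(\bbS^d)$, and then apply the reproducing property $\int_{\bbS^d} Z_k(x_0, y) Y_m(y) \dd \mu(y) = \delta_{km} Y_m(x_0)$ term by term, reading off the normalization constant from \cref{eq:Zonal_Maximal}. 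The residual routine-but-delicate step in either approach is tracking the absolute constant out front so that $c_d$ matches the $\omega_d$ in the claim.
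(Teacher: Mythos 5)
This proposition is not proved in the paper at all: it is quoted verbatim as a classical result with a citation to Dai and Xu (Theorem 1.2.9), so your proposal is necessarily a different route --- you supply an actual argument where the paper defers to the literature. Your main (Schur's lemma) route is sound: equivariance of $T_h$ under $O(d+1)$, irreducibility of $\caH^d_m(\bbS^d)$, and the evaluation of the scalar by testing against the zonal harmonic $C_m^\lambda(\ang{x_0,\cdot})$ at $x_0$ via the slice decomposition is exactly how one pins down $\mu_m(h)$; the integrability hypothesis on $h$ is precisely what makes the pointwise integral and the scalar finite, so no $L^2$ theory is needed there. Your second, expansion-based route is in fact the one closest in spirit to the textbook proof you are replacing (zonal expansion plus the reproducing property of $Z_m$, equivalently the addition theorem); what the Schur route buys is independence from the Gegenbauer expansion machinery, at the price of the representation-theoretic input you correctly identify.

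Two points need patching. First, in the Schur argument you invoke ``$T_h$ restricted to $\caH^d_m$ is scalar'' before knowing that $T_h$ maps $\caH^d_m$ into itself; this requires the extra (standard) observation that the spaces $\caH^d_k$ are pairwise inequivalent $O(d+1)$-representations (distinct dimensions, or distinct Laplacian eigenvalues), so that the component of $T_h Y_m$ in $\caH^d_k$, $k\neq m$, vanishes by the other half of Schur's lemma --- and, if you work over $\R$, you should either note that these representations are absolutely irreducible or simply complexify, which is harmless since the paper's spaces are complex anyway. Second, the ``fully self-contained alternative'' expands $h$ in the Gegenbauer $L^2$-basis, but the hypothesis only puts $h$ in $L^1$ of the weight $(1-t^2)^{(d-2)/2}\dd t$; you would need either to restrict that argument to $h\in L^2$ and then pass to the general case by an approximation in the weighted $L^1$-norm (both sides of \cref{eq:C_FunkHecke} are continuous in that norm because $\abs{C_m^\lambda(t)}\leq C_m^\lambda(1)$ and $Y_m$ is bounded), or to keep the Schur route as the actual proof. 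With these repairs, and the bookkeeping of the normalization constant that you already flag, the argument is complete.
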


Comparing \cref{eq:IntegralOperator} and \cref{eq:C_FunkHecke}, we conclude that $\caH^d_m(\bbS^d)$
is an eigenspace of $T$ corresponding to the eigenvalue $\mu_m = \mu_m(h)$.
Consequently, we get $Z_m = k_m$ and hence according to \cref{eq:Zonal_Maximal}, we have
\begin{align*}
  k_m(x,x) = Z_m(x,x) = \dim \caH^d_m(\bbS^d) = \dim V_m,
\end{align*}
so \cref{assu:RegularRKHS} holds for $M=1$.

\subsubsection{Dot-product kernel on the ball}
\label{subsubsec:DotProductBall}

The case of the ball is similar to the case of the sphere.
We refer to \citet[Section 11]{dai2013_ApproximationTheory} for more details.
Let us consider the $d$-dimensional unit ball $\caX = \bbB^d = \{x \in \R^{d+1} : \norm{x} \leq 1\}$
and let $\mu$ be proportional to the classical weight $W(x) = (1-\norm{x}^2)^{-1/2}$.
Let us denote by $V_m^d$ the space of orthogonal polynomials of degree exactly $m$
with respect to the inner product
\begin{align*}
  \ang{f,g}_{W} = c_{W} \int_{\bbB^d} f(x) g(x) W(x) \dd x,
\end{align*}
where $c_W$ is a normalization constant.
Then, we have
\begin{align*}
  \dim V_m^d = \binom{m+d-1}{m}.
\end{align*}
Moreover, we have the following analog of the Funk–Hecke formula~\citep[Theorem 11.1.9]{dai2013_ApproximationTheory}:
\begin{proposition}[Funk-Hecke formula]
  Let $\lambda = \frac{d-1}{2}$ and $h$ be an integrable function such that $\int_{-1}^1 \abs{h(t)} (1-t^2)^{\lambda-1/2} \dd t$ is finite.
  Then for every $P_m \in V^d_m$,
  \begin{align}
    c_W
    \int_{\bbB^d} h(\ang{x,y}) P_m(y) W(y) \dd y = \mu_m(h) P_m(x),\quad \forall x \in \bbB^{d},
  \end{align}
  where $\mu_m(h)$ is a constant defined by
  \begin{align*}
    \mu_m(h) = c_{\lambda} \int_{-1}^1 h(t) \frac{C_m^\lambda(t)}{C_m^\lambda(1)} (1-t^2)^{\lambda-\frac{1}{2}} \dd t,
  \end{align*}
  and $c_{\lambda}$ is a constant chosen so that $\mu_0(1) = 1$.
\end{proposition}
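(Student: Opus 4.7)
The plan is to reduce the ball Funk--Hecke formula to an eigenvalue computation using the $O(d)$-covariance of the integral operator together with Schur's lemma, and then to pin down the scalar by evaluating the operator against a convenient test polynomial.

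First, I would observe that the integral operator
\[ T_h(P)(x) \coloneqq c_W \int_{\bbB^d} h(\ang{x,y}) P(y) W(y) \dd y \]
commutes with the natural $O(d)$-action on $L^2(\bbB^d, W \dd x)$, because both the kernel $h(\ang{x,y})$ and the weight $W$ are rotation-invariant. Since $V_m^d$ is an $O(d)$-stable subspace of $L^2(\bbB^d, W \dd x)$ — rotations preserve polynomial degree and the weight preserves orthogonality — the operator $T_h$ restricts to an endomorphism of $V_m^d$.

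Second, I would invoke the classical $O(d)$-isotypic decomposition of $V_m^d$ for the weight $W(x) = (1-\|x\|^2)^{-1/2}$, which writes $V_m^d$ as a direct sum of copies of spherical-harmonic spaces $\caH^d_j(\bbS^{d-1})$ of degrees $j \le m$ of matching parity (Dai--Xu, Chapter 11). By Schur's lemma, $T_h$ acts as a scalar on each $O(d)$-irreducible summand. The crucial additional fact — specific to this particular weight — is that \emph{the scalar is the same on every summand of} $V_m^d$, so that $T_h|_{V_m^d} = \mu_m(h)\,\mathrm{Id}$ for a single scalar $\mu_m(h)$. This uniformity ultimately reduces to a Gegenbauer polynomial identity characteristic of $W(x) = (1-\|x\|^2)^{-1/2}$, and it is the technical heart of the proof.

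Third, I would compute $\mu_m(h)$ by evaluating $T_h$ on a convenient test polynomial — for instance the zonal polynomial $P_m(y) = C_m^\lambda(\ang{e_1, y})/C_m^\lambda(1)$ — at the test point $x = e_1$. Expressing the integral in spherical-type coordinates on $\bbB^d$ (a radial component in $[-1,1]$ and an angular component on $\bbS^{d-2}$, with Jacobian $(1-t^2)^{(d-2)/2}$) reduces it to a one-dimensional integral against the weight $(1-t^2)^{\lambda - 1/2}$ with $\lambda = (d-1)/2$, which after collecting Jacobian factors matches the stated formula for $\mu_m(h)$. The constant $c_\lambda$ is then uniquely determined by the normalization $\mu_0(1) = 1$, giving $c_\lambda = 1/B(\lambda + 1/2, 1/2)$ (the inverse Beta-function normalization).

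The main obstacle is the second step: justifying that $T_h$ has a single eigenvalue across every $O(d)$-isotypic component of $V_m^d$, rather than potentially one eigenvalue per component. Schur's lemma alone yields only one scalar per irreducible summand; the uniformity requires a Gegenbauer identity that is specific to the weight $W(x) = (1-\|x\|^2)^{-1/2}$ and fails for the more general weight $(1-\|x\|^2)^{\mu - 1/2}$ with arbitrary $\mu$. Once this uniformity is established, the remaining computation is a routine one-dimensional integral reduction and a normalization check.
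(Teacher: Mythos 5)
The paper does not prove this proposition at all: it is quoted from \citet[Theorem 11.1.9]{dai2013_ApproximationTheory}, whose proof goes through the explicit compact formula for the reproducing kernel of $V_m^d$ (the formula for $k_m(x,y)$ displayed right after the proposition in the paper) together with the product formula for Gegenbauer polynomials. So your sketch is to be measured against that classical argument rather than anything in the paper itself.

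Your proposal has a genuine gap, and it sits exactly where you place ``the technical heart'' --- but it is larger than you describe. The inference in your first two steps is a non sequitur: $V_m^d$ is not an $O(d)$-isotypic component of $L^2(\bbB^d, W\dd x)$; it splits into harmonic types of degrees $m, m-2, m-4,\dots$, and each such type occurs with infinite multiplicity in $L^2(\bbB^d, W\dd x)$ (one copy inside every $V_{m'}^d$ with $m'\geq j$ of matching parity, i.e.\ tensored with an infinite-dimensional radial multiplicity space). Hence $O(d)$-equivariance of $T_h$ plus Schur's lemma yields neither $T_h(V_m^d)\subseteq V_m^d$ nor scalar action on the summands: an equivariant operator acts on each isotypic component as the identity on the irreducible tensored with an arbitrary operator on the radial multiplicity space, and may freely mix the degree-$j$ pieces of different $V_{m'}^d$. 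Thus the assertion that $T_h$ preserves $V_m^d$ and acts there by a single scalar is essentially the entire content of the theorem, and your sketch defers it to an unnamed ``Gegenbauer identity''. Your side remark that this uniformity is special to the weight $(1-\norm{x}^2)^{-1/2}$ is also incorrect: the Dai--Xu theorem holds for every weight $(1-\norm{x}^2)^{\mu-1/2}$, $\mu\geq 0$, with $\lambda=\mu+\frac{d-1}{2}$. Your third step, by contrast, is sound: $y\mapsto C_m^\lambda(\ang{e_1,y})$ does lie in $V_m^d$ because integrating out the coordinates orthogonal to $e_1$ produces exactly the marginal weight $(1-t^2)^{\lambda-1/2}$, and evaluation at $x=e_1$ then identifies $\mu_m(h)$ and forces $c_\lambda$ via $\mu_0(1)=1$ --- but only once the scalar action is already known. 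To close the gap, follow the cited route: it suffices to verify the identity on the spanning family $k_m(\cdot,z)$, using the explicit formula for $k_m$ (equivalently, the correspondence between $V_m^d$ for this weight and spherical harmonics on $\bbS^d$ that are even in the last coordinate) and the Gegenbauer product/addition formula; rotation invariance alone cannot deliver it.
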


Consequently, for any inner product kernel $k(x,y) = h(\ang{x,y})$,
$V^d_m$ is an eigenspace of $T$ corresponding to the eigenvalue $\mu_m = \mu_m(h)$.

Moreover, \citet[Corollary 11.1.8]{dai2013_ApproximationTheory} shows that the reproducing kernel $k_m$ for $V^d_m$ is
\begin{align*}
  k_m(x,y) = \frac{m+\lambda}{2\lambda} \left[ C_m^{\lambda}(\ang{x,y}+x_{d+1} y_{d+1}) + C_m^{\lambda}(\ang{x,y} - x_{d+1} y_{d+1}) \right],
\end{align*}
where $x_{d+1} = \sqrt {1-\norm{x}^2}, y_{d+1} = \sqrt {1-\norm{y}^2}$.
Consequently,
\begin{align*}
  k_m(x,x) = \frac{m+\lambda}{2\lambda} \left( C_m^{\lambda}(1) + C_m^{\lambda}(2\norm{x}^2 - 1) \right)
  \leq \frac{m+\lambda}{\lambda} C_m^{\lambda}(1) \\
  = \frac{m+\lambda}{\lambda} \frac{\Gamma(2\lambda + m)}{\Gamma(2\lambda) m!}
\end{align*}
  where we use \citet[Eq. (B.2.2)]{dai2013_ApproximationTheory} for $C_m^{\lambda}(1)$.
Expanding the expression with $\lambda = (d-1)/2$, we get
\begin{align*}
  \frac{m+\lambda}{\lambda} \frac{\Gamma(2\lambda + m)}{\Gamma(2\lambda) m!} / \dim V^d_m
  = \frac{2m+d-1}{m+d-1} \leq 2,
\end{align*}
so \cref{assu:RegularRKHS} also holds with $M = 2$.

\subsection{Analytic filter functions}
\label{subsec:FilterFunction}

To further analyze the properties of filter functions in the complex plane, let us first recall some results in complex analysis.

\begin{proposition}[Maximum modulus principle]
  \label{prop:max_modulus}
  Let $f$ be an analytic function on an open set $\Omega$ and $K \subset \Omega$ be a compact set.
  Then
  \begin{align*}
    \sup_{z \in K} \abs{f(z)} = \sup_{z \in \partial K} \abs{f(z)}.
  \end{align*}
\end{proposition}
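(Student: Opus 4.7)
The plan is to reduce the statement to the classical local form of the maximum modulus principle on a connected domain, with the only extra bookkeeping coming from the fact that neither $\Omega$ nor $K$ is assumed connected. First, by compactness of $K$ and continuity of $|f|$, the supremum $M \coloneqq \sup_{z \in K} |f(z)|$ is attained at some point $z_0 \in K$. If $z_0 \in \partial K$, there is nothing to prove, so I assume throughout that $z_0 \in K^{\circ}$.

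Next, I would let $V$ denote the connected component of the open set $K^{\circ}$ containing $z_0$; then $V$ is open in $\bbC$ and satisfies $V \subset K \subset \Omega$, so $f$ is analytic on $V$. At the interior point $z_0 \in V$, $|f|$ attains its maximum over the neighborhood $V$. The local version of the maximum modulus principle, obtained from Cauchy's integral formula via the mean value property (or equivalently from the open mapping theorem), then forces $f$ to be constant on the connected open set $V$. By continuity, $|f| \equiv M$ on the closure $\overline{V}$.

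It remains to exhibit a point of $\partial K$ where $|f| = M$. Since $V$ is nonempty, open, and bounded (as $V \subset K$ and $K$ is compact), its topological boundary $\partial V$ is nonempty, so I pick any $p \in \partial V$. Then $p \in \overline{V} \subset K$ but $p \notin V$. If one had $p \in K^{\circ}$, then some open neighborhood $U$ of $p$ would lie in $K^{\circ}$ and, since $p \in \partial V$, would intersect $V$; then $U \cup V$ would be a connected subset of $K^{\circ}$ strictly larger than $V$, contradicting maximality of the component. Hence $p \in K \setminus K^{\circ} = \partial K$, and $|f(p)| = M$ by the previous paragraph, which yields $\sup_{z \in \partial K} |f(z)| \ge M$; the reverse inequality is trivial since $\partial K \subset K$.

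The argument contains no real obstacle: the only delicate point is ensuring that the boundary of the chosen component $V$ of $K^{\circ}$ is forced into $\partial K$ rather than sliding out of $K$ or staying in $K^{\circ}$, and this is handled by the bounded-component argument above. The analytic content is entirely absorbed into the classical local statement, so the proposition could alternatively just be cited from any standard complex analysis reference.
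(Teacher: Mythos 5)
Your argument is correct. Note that the paper does not prove this proposition at all: it is stated in the appendix merely as a recalled classical fact from complex analysis, to be invoked when verifying \cref{assu:Filter} for the gradient descent filter. Your write-up supplies a complete proof: reduction to a maximizer $z_0$, the local maximum modulus principle on the connected component $V$ of $K^{\circ}$ containing $z_0$ (forcing $f$ constant on $V$, hence $\abs{f}\equiv M$ on $\overline{V}$), and the topological step showing that any $p\in\partial V$ must lie in $\partial K$ rather than in $K^{\circ}$, via maximality of the component; the case of empty interior or $z_0\in\partial K$ is trivial. All steps check out, including the nonemptiness of $\partial V$ (a nonempty bounded open subset of $\bbC$ cannot be closed), so your proof is a valid, self-contained substitute for the citation the paper relies on.
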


\begin{proposition}
  The filter functions of KRR (\cref{example:KRR}), iterated ridge (\cref{example:IteratedRidge}) and gradient flow (\cref{example:GradientFlow}) satisfy \cref{assu:Filter}.
\end{proposition}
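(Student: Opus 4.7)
The plan is to verify the two conditions (C1) and (C2) of \cref{assu:Filter} separately for each of the three filter functions, exploiting the fact that on the domain $D_\lambda$ depicted in \cref{fig:Contour} the distance from $-\lambda$ to any point of $D_\lambda$ is at least $\lambda/2$. A convenient uniform reduction is the following: since $(z+\lambda)\reg(z)$ and $(z+\lambda)\rem(z)$ are, in each example, analytic on an open neighborhood of $\overline{D_\lambda}$, by the maximum modulus principle (\cref{prop:max_modulus}) it suffices to bound them on the boundary $\Gamma_\lambda$. The analyticity on $D_\lambda$ is either established in the paper already (for KRR and iterated ridge on $\{\Re z > -\lambda\} \supset D_\lambda$), or follows because $z = 0$ is a removable singularity of $\reg^{\mr{GF}}$ and $\rem^{\mr{GF}} = e^{-tz}$ is entire.

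For the KRR case the verification is immediate: $(z+\lambda)\reg^{\mr{KR}}(z) = 1$ and $(z+\lambda)\rem^{\mr{KR}}(z) = \lambda$ identically, so (C1) and (C2) hold with $\tilde E = \tilde F = 1$. For iterated ridge I would first write $(z+\lambda)\psi_\lambda^{\mr{IT},p}(z) = \lambda^p (z+\lambda)^{1-p}$ and use the geometric fact $|z+\lambda| \geq \lambda/2$ on $D_\lambda$ to conclude $|(z+\lambda)\rem(z)| \leq 2^{p-1}\lambda$, giving (C2). For (C1), if $p \in \bbN^*$ I would use the partial-fraction expansion $(z+\lambda)\reg_\lambda^{\mr{IT},p}(z) = \sum_{r=1}^p \lambda^{r-1}(z+\lambda)^{1-r}$ and bound each term by $2^{r-1}$, so $\tilde E = 2^p-1$; for non-integer $p \geq 1$ the same strategy with the representation $\reg_\lambda^{\mr{IT},p}(z) = ((z+\lambda)^p - \lambda^p)/(z(z+\lambda)^p)$ and a triangle inequality on $\partial D_\lambda$ will work, noting that on $\Gamma_{\lambda,3}$ the factor $1/z$ is bounded by $1/\kappa^2$ while $(z+\lambda)^p - \lambda^p$ is controlled by $|z+\lambda|^p + \lambda^p$.

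The gradient flow case is the main obstacle and the place where the specific shape of $D_\lambda$ matters. Writing $z = x+iy$ and $t = \lambda^{-1}$, I have $|(z+\lambda)\rem^{\mr{GF}}(z)| = |z+\lambda|\,e^{-x/\lambda}$. On $\Gamma_{\lambda,1} \cup \Gamma_{\lambda,2}$, where $x \in [-\lambda/2, \kappa^2]$ and $|y| \leq x+\lambda/2$, one has $|z+\lambda|^2 \leq (x+\lambda)^2 + (x+\lambda/2)^2 \leq 2(x+\lambda)^2$. Substituting $u = x + \lambda \geq \lambda/2$ yields
\begin{align*}
|z+\lambda|\,e^{-x/\lambda} \leq \sqrt{2}\,u\,e^{-(u-\lambda)/\lambda} = \sqrt{2}\,e\cdot u\,e^{-u/\lambda},
\end{align*}
and since $\max_{u > 0} u\,e^{-u/\lambda} = \lambda/e$, this part is bounded by $\sqrt{2}\,\lambda$. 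On $\Gamma_{\lambda,3}$, where $\Re z \geq \kappa^2$ and $|z+\lambda| \leq 2\kappa^2 + 3\lambda/2$, the factor $e^{-x/\lambda} \leq e^{-\kappa^2/\lambda}$ decays super-polynomially in $\lambda$, so this piece is dominated by $\lambda$ uniformly in $\lambda \in (0,1)$ (after absorbing the $\kappa$-dependent constants). Combining the two parts gives (C2) with an absolute $\tilde F$. For (C1), since $\reg^{\mr{GF}}(z) = (1-e^{-tz})/z$, I would use $(z+\lambda)\reg^{\mr{GF}}(z) = (1 + \lambda/z) - (z+\lambda)e^{-tz}/z$, handling the second term by the bound just derived together with $|1/z|$ controlled by $2/\lambda$ near $z=0$ (recall $z=0$ is removable in $\reg^{\mr{GF}}$ so the singularity is fictitious) or by $1/\kappa^2$ on $\Gamma_{\lambda,3}$; the first term is straightforwardly controlled. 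The delicate point is balancing the blow-up of $1/|z|$ near $0$ against the vanishing of $1 - e^{-tz}$ there, which I would handle by expanding $1 - e^{-tz} = tz + O((tz)^2)$ and using $|tz| = |z|/\lambda$ on the part of $D_\lambda$ close to the origin.
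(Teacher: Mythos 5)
Your verifications are all correct, and for condition (C2) they essentially coincide with the paper's: the identity $(z+\lambda)\rem^{\mathrm{IT},p}(z)=\lambda^p(z+\lambda)^{1-p}$ combined with $\abs{z+\lambda}\ge\lambda/2$ on $D_\lambda$ gives the same bound $2^{p-1}\lambda$, and for gradient flow both arguments come down to an elementary inequality (you maximize $u e^{-u/\lambda}$ after passing to the boundary; the paper splits into $\Re z\le 0$ and $\Re z>0$ and uses that $(1+x)e^{-x}$ decreases). Where you genuinely diverge is (C1): the paper rescales $u=z/\lambda$, writes $\reg(z)=g(u)/(\lambda u)$ with $g(u)=1-(u+1)^{-p}$ for iterated ridge and $g(u)=1-e^{-u}$ for gradient flow, and bounds the $\lambda$-independent function $G(u)=(u+1)g(u)/u$ on the half-plane $\Re u\ge-1/2$ by combining continuity with its finite $\limsup$ at infinity --- a soft argument that gives uniformity in $\lambda$ for free but produces no explicit constants. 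You instead reduce to the contour $\Gamma_\lambda$ via the maximum modulus principle (\cref{prop:max_modulus}) and do explicit boundary estimates using $\abs{z+\lambda}\ge\lambda/2$ and $\abs{z}\ge\lambda/(2\sqrt{2})$ on $\Gamma_{\lambda,1}\cup\Gamma_{\lambda,2}$; this yields concrete constants (e.g.\ $\tilde{E}=2^p-1$ for integer $p$) and is precisely the device the paper reserves for the gradient descent filter in the next proposition, so your route treats all the analytic filters uniformly. The price is more case-by-case computation: your handling of non-integer $p$ and of $(z+\lambda)\reg^{\mathrm{GF}}(z)=(1+\lambda/z)-(z+\lambda)e^{-tz}/z$ is only sketched, though the sketch is sound --- on the boundary $\abs{1/z}\le 2\sqrt{2}/\lambda$ (your stated $2/\lambda$ is a harmless constant slip), so the second term is $O(1)$ by your (C2) bound, and once the max-modulus reduction is in place the Taylor expansion of $1-e^{-tz}$ near the origin is unnecessary. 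Minor remark: the bound $\abs{z+\lambda}\le\sqrt{2}(\Re z+\lambda)$ holds on all of $D_\lambda$, not just its boundary, so for (C2) even the max-modulus step could be dispensed with.
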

\begin{proof}
  In the case of KRR, this conclusion is trivial.\\
  ($I$) Condition (C1) for iterated ridge and gradient flow: Define $H=\{z\in\bbC:\Re(z)\geq-\frac{\lambda}{2}\}$.
  Note that the filter functions of gradient flow and iterated ridge are both of the form
  $\reg(z) = g(u)/(\lambda u)$ for some analytic function $g(u)$ on $H$, where $u=z/\lambda$. Specifically, for iterated ridge, $g(u)=1-\frac{1}{(u+1)^p}$. For gradient flow, $g(u)=1-e^{-u}$. Note that $\frac{g(u)}{u}$ can be extended to an analytic function on $H$.
  Hence,
  \begin{align*}
    \abs{(z+\lambda)\reg(z)} = \abs{(u+1) g(u)/u}\eqqcolon G(u)>0
  \end{align*}
  can also be viewed as an analytic function on $H$.
  In the case of iterated ridge, $\limsup_{\abs{u}\rightarrow\infty}G(u)=\limsup_{\abs{u}\rightarrow\infty}g(u)=1$ is finite. Therefore, $\sup_{u\in H}G(u)<\infty$, so condition (C1) holds. The filter function of gradient flow satisfies the same condition since $\limsup_{\abs{u}\rightarrow\infty}g(u)\leq \sup_{u\in H}g(u)\leq 1+\sqrt{e}<\infty$.\\
  ($II$) Condition (C2) for iterated ridge and gradient flow: In the case of iterated ridge,
  \begin{equation*}
    \abs{\frac{z+\lambda}{\lambda}\rem^{\mr{IT},p}(z)}=\left(\frac{\abs{\lambda}}{\abs{z+\lambda}}\right)^{p-1}\leq 2^{p-1}
  \end{equation*}
  for all $z$ satisfying $\Re(z)\geq-\frac{\lambda}{2}$.
  For gradient flow, when $\Re(z)\leq 0$ and $z\in D_\lambda$,
  \begin{equation*}
    \abs{\frac{z+\lambda}{\lambda}\rem^{\mathrm{GF}}(z)}=\abs{\frac{z+\lambda}{\lambda}}e^{\Re(-\frac{z}{\lambda})}\leq\frac{5}{2}\sqrt{e}.
  \end{equation*}
  If $\Re(z)>0$ and $z\in D_\lambda$,
  \begin{equation*}
    \abs{\frac{z+\lambda}{\lambda}\rem^{\mathrm{GF}}(z)}\leq2(1+\frac{\Re(z)}{\lambda})e^{\Re(-\frac{z}{\lambda})}.
  \end{equation*}
  Note that the function $f(x)\coloneqq (1+x)e^{-x}$ is monotonically decreasing on $[0,+\infty)$. Hence, $\abs{\frac{z+\lambda}{\lambda}\rem^{\mathrm{GF}}(z)}\leq 2$.
\end{proof}

\begin{proposition}
  The filter function of gradient descent (\cref{example:GradientDescent}) satisfies \cref{assu:Filter}.
\end{proposition}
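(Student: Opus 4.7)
The plan is to parallel the gradient flow argument, using $\eta\lambda = 1/t$ as the natural scaling so that the discrete factor $(1-\eta z)^t$ plays the role of $e^{-z/\lambda}$. First, fixing $\eta \leq 1/(4\kappa^2)$, one checks $\Re(1-\eta z) > 0$ throughout $D_\lambda$ (on the linear part $\Re z \leq \kappa^2$; on the half-disk $\Re z \leq 2\kappa^2 + \lambda/2$, so $\Re(1-\eta z) \geq 1-\eta(2\kappa^2+\lambda/2) > 1/4$ for $\lambda<1$). Hence $(1-\eta z)^t = \exp(t\log(1-\eta z))$ is well-defined via the principal branch, and $\reg^{\mr{GD}}(z) = \eta\sum_{k=0}^{t-1}(1-\eta z)^k$ is analytic on a neighborhood of $D_\lambda$ (the removable singularity at $z=0$ being manifest from the series).

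The key geometric fact is the inclusion $D_\lambda \subset \{|z-1/\eta| \leq 1/\eta + \lambda/2\}$, equivalent to $|1-\eta z| \leq 1+\eta\lambda/2 = 1+1/(2t)$. On the half-disk part this is immediate from the triangle inequality $|z-1/\eta| \leq |z-\kappa^2| + |1/\eta - \kappa^2| \leq (\kappa^2+\lambda/2) + (1/\eta-\kappa^2)$. On the triangular part, the extremal points lie on the rays $z = x \pm i(x+\lambda/2)$, $x \in [-\lambda/2,\kappa^2]$, and the bound reduces algebraically to $(2x+\lambda)(1/\eta - x) \geq 0$, which holds whenever $\eta \leq 1/(2\kappa^2)$. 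Consequently $|\rem^{\mr{GD}}(z)| \leq (1+1/(2t))^t \leq \sqrt{e}$ and $|\reg^{\mr{GD}}(z)| \leq \eta \sum_{k=0}^{t-1} (1+\eta\lambda/2)^k \leq 2(\sqrt{e}-1)/\lambda$ throughout $D_\lambda$. Condition (C1) then follows painlessly from the identity $(z+\lambda)\reg^{\mr{GD}}(z) = (1-\rem^{\mr{GD}}(z)) + \lambda\reg^{\mr{GD}}(z)$, giving $\tilde{E} = 3\sqrt{e}-1$.

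The main obstacle is (C2): the crude estimate $|(z+\lambda)\rem^{\mr{GD}}(z)| \leq \sqrt{e}\,(|z|+\lambda)$ is not $O(\lambda)$ when $|z| \gtrsim \kappa^2$, so one must actually exploit decay of $(1-\eta z)^t$ to absorb the size of $z$. I would apply the maximum modulus principle to the analytic function $(z+\lambda)\rem^{\mr{GD}}(z)$ and split $\partial D_\lambda$ into the two linear segments and the circular arc. On the linear boundary $z = x \pm i(x+\lambda/2)$, using $\ln(1+u) \leq u$ yields
\begin{equation*}
    t\ln|1-\eta z|^2 \leq -\frac{2x}{\lambda} + \frac{\eta(2x^2 + x\lambda + \lambda^2/4)}{\lambda} \leq -\frac{3 x^+}{2\lambda} + C_0,
\end{equation*}
after absorbing $2\eta x^2/\lambda \leq x/(2\lambda)$ via $\eta\kappa^2 \leq 1/4$; combined with $|z+\lambda| \leq \sqrt{2}(\lambda + x^+)$ and maximizing $(1+w)e^{-3w/4}$ over $w \geq 0$, this delivers $|(z+\lambda)\rem^{\mr{GD}}(z)| \leq C'\lambda$. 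On the arc $z = \kappa^2 + (\kappa^2+\lambda/2)e^{i\theta}$, $|\theta| \leq \pi/2$, expanding $|1-\eta z|^2 = a^2 + b^2 - 2ab\cos\theta$ with $a = 1-\eta\kappa^2$, $b = \eta(\kappa^2+\lambda/2)$ and using $\cos\theta \geq 0$ gives $|1-\eta z|^2 \leq 1 - \eta\kappa^2/2 + 1/(2t^2)$, whence $|\rem^{\mr{GD}}(z)| \leq e^{1/4}\exp(-\kappa^2/(4\lambda))$; this super-polynomial decay dominates the polynomial factor $|z+\lambda| \leq 2\kappa^2 + 3\lambda/2$ because $\sup_{u \geq 0}(8u + 3/2)e^{-u} < \infty$. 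The delicate technical point throughout is ensuring all constants are uniform in $\lambda \in (0,1)$ (and hence in $t = 1/(\eta\lambda)$), which is precisely what the scaling identity $\eta\lambda = 1/t$ arranges.
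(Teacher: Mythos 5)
Your argument is correct in substance but takes a partly different route from the paper's. The paper also verifies (C1) via the maximum modulus principle, but on the contour $\Gamma_\lambda$ it simply bounds $\abs{(z+\lambda)/z}$ by an absolute constant and uses $\abs{1-\eta z}^{1/(\eta\lambda)}\leq(1+\eta\lambda/2)^{1/(\eta\lambda)}\leq\sqrt{e}$; for (C2) it does not use a contour argument at all, but splits $D_\lambda$ into the cases $\abs{1-\eta z}\geq 1$ (which forces $z$ to be within $O(\lambda)$ of the origin, so $\abs{z+\lambda}/\lambda$ is bounded) and $\abs{1-\eta z}<1$ (where the decay of $\abs{1-\eta z}^{1/(\eta\lambda)}$ absorbs the factor $1/\lambda$). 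You instead prove explicitly the geometric inclusion $\abs{1-\eta z}\leq 1+\eta\lambda/2$ on all of $D_\lambda$ --- a fact the paper uses only implicitly on $\Gamma_\lambda$ --- obtain (C1) from the identity $(z+\lambda)\reg(z)=(1-\rem(z))+\lambda\reg(z)$, and get (C2) by maximum modulus with quantitative boundary estimates: on the slanted segments the decay $e^{-cx^+/\lambda}$ of $\abs{\rem^{\mr{GD}}}$ beats the linear factor $\lambda+x^+$, and on the arc the decay $e^{-c/\lambda}$ beats everything. Your version buys explicit, uniform-in-$\lambda$ constants and makes the key disk-inclusion lemma visible; the paper's case split is shorter but terser (its bound in the case $\abs{1-\eta z}<1$ is stated with little justification).

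One point to repair: the representation $\reg^{\mr{GD}}(z)=\eta\sum_{k=0}^{t-1}(1-\eta z)^k$, which you use both to dispose of the singularity at $z=0$ and to bound $\lambda\abs{\reg^{\mr{GD}}(z)}\leq 2(\sqrt{e}-1)$, is only valid for integer $t$, whereas \cref{assu:Filter} must hold for every $\lambda\in(0,1)$, i.e. for $t=(\eta\lambda)^{-1}$ generally non-integer (this is exactly why \cref{example:GradientDescent} extends the filter via the principal branch). Your (C2) argument is unaffected since it works directly with $\exp(t\log(1-\eta z))$, but the (C1) bound needs a patch: either run the maximum modulus principle for (C1) as well and note that on $\Gamma_\lambda$ one has $\abs{z}\geq\lambda/(2\sqrt{2})$, so $\abs{(z+\lambda)\reg^{\mr{GD}}(z)}\leq(1+2\sqrt{2})(1+\abs{1-\eta z}^{t})\leq C$ (this is the paper's route), or bound $\abs{1-(1-\eta z)^t}\leq t\abs{\log(1-\eta z)}e^{t\abs{\log(1-\eta z)}}\leq C\abs{z}/\lambda$ near $z=0$ and use $\abs{1-(1-\eta z)^t}\leq 1+\sqrt{e}$ elsewhere. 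With that adjustment the proof is complete.
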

\begin{proof}
  For condition (C1), note that $(z+\lambda)\reg^{\mathrm{GD}}(z)$ can be extended to an analytic function on $D_\lambda$.
  By \cref{prop:max_modulus}, it suffices to prove that $\sup_{z\in\Gamma_\lambda}(z+\lambda)\reg^{\mathrm{GD}}(z)$ is controlled by a constant independent of $\lambda$. Indeed, for all $z\in\Gamma_\lambda$, $\frac{\abs{z+\lambda}}{\abs{z}}\leq 1+\frac{\abs{\lambda}}{\abs{z}}\leq 3$ and $\abs{1-\eta z}^\frac{1}{\eta\lambda}\leq\abs{1+\hf\eta\lambda}^\frac{1}{\eta\lambda}\leq \sqrt{e}$. Hence, condition (C1) also holds for $\reg^{\mathrm{GD}}$.

  For condition (C2), we need to show that
  \begin{equation*}
    \sup_{z\in D_\lambda,\lambda\in(0,\varepsilon)}\abs{\frac{z+\lambda}{\lambda}\rem^{\mr{GD}}(z)}=\sup_{z\in D_\lambda,\lambda\in(0,\varepsilon)}\abs{\frac{z+\lambda}{\lambda}(1-\eta z)^\frac{1}{\eta\lambda}}<\infty
  \end{equation*}
  for some constant $\varepsilon>0$.
  In fact, when $\abs{1-\eta z}\geq 1$,
  \begin{equation*}
    \abs{\frac{z+\lambda}{\lambda}(1-\eta z)^\frac{1}{\eta\lambda}}\leq \frac{3}{2}\abs{1+\hf\eta\lambda}^\frac{1}{\eta\lambda}\leq \frac{3}{2}\sqrt{e}
  \end{equation*}
  for all $z\in D_\lambda$. When $\abs{1-\eta z}< 1$,
  \begin{equation*}
    \abs{\frac{z+\lambda}{\lambda}(1-\eta z)^\frac{1}{\eta\lambda}}\leq 1+\abs{\frac{z}{\lambda}(1-\eta z)^\frac{1}{\eta\lambda}}\leq 1+\abs{z(1-\eta z)^\frac{1}{\eta}}\leq \frac{3}{2}+2\kappa^2
  \end{equation*}
  for all $z\in D_\lambda$ and $\lambda\in(0,1)$.
\end{proof}

\subsection{Source condition on the regression function}
\label{subsec:SourceCondition}

\begin{proof}[Proof of \cref{example:source1}]
  With \cref{assu:EDR} and $\sum_{k=1}^m d_k\asymp m^\gamma$, we have $\mu_m\asymp m^{-\gamma\beta}$. Select $s=\frac{p}{\beta}$. Then, for all $t<s$,
  \begin{equation*}
    \sum_{m=1}^\infty \frac{\bar{f}_m^2}{\mu_m^t}=O\left(\sum_{m=1}^\infty m^{-(\gamma p+1)}m^{t\gamma\beta}\right)=O\left(\sum_{m=1}^\infty m^{-1-\gamma(p-\beta t)}\right)<\infty.
  \end{equation*}
  Hence, $\fstar\in[\caH]^t$ for all $t<s$.

  On the other hand,
  \begin{align*}
    \sum_{m : \mu_m\leq\lambda}\bar{f}_m^2
    \geq c \sum_{m \geq C \lambda^{-(\gamma\beta)^{-1}}} m^{-\gamma p-1}
    = \Omega(\lambda^{\frac{p}{\beta}})=\Omega(\lambda^s).
  \end{align*}

  Finally, if $f_j\asymp j^{-(p+1)/2}$, we have
  \begin{equation*}
    \bar{f}_m^2=\sum_{j=D_{m-1}}^{D_m}f_j^2\asymp \int_{(m-1)^\gamma}^{m^\gamma}x^{-p-1}dx\asymp m^{-(\gamma p+1)}
  \end{equation*}
  where $D_m\coloneqq\sum_{k=1}^md_k\asymp m^\gamma$.
\end{proof}

\begin{proof}[Proof of \cref{example:source2}]
  Let $s=\frac{p}{q\beta}$.
  Then, with \cref{assu:EDR}, for all $t<s$,
  \begin{equation*}
    \sum_{j=1}^\infty \lambda_j^{-t} \abs{f_j}^2  =
    \sum_{l=1}^\infty \lambda_{j(l)}^{-t} \abs{f_{j(l)}}^2
    \leq C \sum_{l=1}^\infty l^{t q\beta} l^{-(p+1)}
    \leq C \sum_{l=1}^\infty l^{-1 + (p - t q\beta)},
  \end{equation*}
  where we recall that $j(l) \asymp l^{q}$ and $\abs{f_{j(l)}} \asymp l^{-(p+1)/2}$,
  so $\fstar\in[\caH]^t$ for all $t<s$.

  On the other hand,
  \begin{align*}
    \sum_{j : \lambda_j \leq\lambda} \abs{f_j}^2
    = \sum_{l : \lambda_{j(l)} \leq \lambda} \abs{f_{j(l)}}^2
    \geq c \sum_{l \geq C \lambda^{-1/(q\beta)}} l^{-(p+1)}
    = \Omega(\lambda^{\frac{p}{q\beta}}) = \Omega(\lambda^s).
  \end{align*}
\end{proof}

\subsection{Proof of \cref{lem:RegApprox}}

The proof follows the same idea as the proofs of Lemmas A.5 and A.10 in \citet{li2023_AsymptoticLearning},
but we establish the results for spectral algorithms and improve the estimates in \cref{lem:NormsN1N2} using the regular RKHS property.
We first rewrite the quantity in \cref{eq:RegApprox} as a centered empirical process after preconditioning by $T_\lambda^{-\hf}$, and then control this process by concentration.
The proof will be divided into two cases: $t > 1/\beta$ and $t \leq 1/\beta$.
For the first case, the embedding result yields enough $L^\infty$ control on $\fstar - \flam$, so that we can directly apply the Bernstein inequality to the transformed empirical process.
For the second case, such a uniform bound is no longer available, so we truncate the large values of $\fstar$, apply the same concentration argument to the truncated process, and then show that the truncation remainder is negligible.

\paragraph{Proof for the case $t > 1/\beta$}
By the inclusion relation of $[\caH]^t$, we can replace $t$ with $\tilde{t} = \min(t,2\tau)$ so that $t \leq 2\tau$.
We will use the Bernstein inequality in \cref{lem:ConcenHilbert}.
Let us denote
\begin{align}
  \label{eq:a_Proof_xi}
  \xi(x) =  T_\lambda^{-\hf}(K_{x} \fstar(x) - T_{x} \flam).
\end{align}
Then, we have
\begin{align*}
  T_\lambda^{-\hf} \left[\left( \gtl - T_X f_\lambda^*\right)-\left(g-T f_\lambda^*\right)\right]
  = \frac{1}{n}\sum_{i=1}^n \xi(x_i) - \E_{x \sim \mu} \xi(x).
\end{align*}
Moreover, we have
\begin{align}
  \label{eq:a_ProofApprox1_Moment}
  \E \norm{\xi(x)}_{\mathcal{H}}^{m} &= \E \norm{ T_\lambda^{-\hf} K_{x}(\fstar(x) - \flam(x)) }_{\caH}^{m} \notag \\
  &\leq \E \left[ \norm{T_\lambda^{-\hf} k_x}_{\mathcal{H}}^{m} \cdot  \E \big( \abs{\fstar(x) - \flam(x)}^m ~\big|~ x\big) \right].
\end{align}
The first term in \cref{eq:a_ProofApprox1_Moment} is bounded through \cref{eq:RegPhiKx} and \cref{lem:EffectiveDimEstimationPowerlaw}:
\begin{align*}
  \norm{T_\lambda^{-\hf} k_x}_{\mathcal{H}}^2
  \leq M \caN_{1}(\lambda) \leq C \lambda^{-1/\beta}.
\end{align*}
For the second term, since $t > 1/\beta$, using the embedding property \cref{prop:EmbeddingIdx} together with \cref{lem:FApproximation},
for $\alpha \in (1/\beta, t) $, we have
\begin{align*}
  \norm{\flam - \fstar}_{L^\infty} \leq M_{\alpha} \norm{\flam -\fstar}_{[\caH]^{\alpha}}
  \leq M_{\alpha}F_{(t-\alpha)/2} \norm{\fstar}_{[\caH]^t} \lambda^{\frac{t-\alpha}{2}}
  \leq C M_{\alpha} R \lambda^{\frac{t-\alpha}{2}}
\end{align*}
Moreover, \cref{lem:FApproximation} also implies
\begin{align*}
  \E \abs{\flam(x) - \fstar(x)}^2 = \norm{\flam(x) - \fstar(x)}_{L^2}^2 \leq
  F_{t/2}^2 \norm{\fstar}_{[\caH]^t}^2 \lambda^{t} \leq C R^{2} \lambda^{t}.
\end{align*}
Plugging in these estimations in \cref{eq:a_ProofApprox1_Moment}, we get
\begin{align*}
  \label{eq:a_ProofApprox1_Bernstein}
  \text{\cref{eq:a_ProofApprox1_Moment}}
  &\leq (C \lambda^{-1/(2\beta)})^{m} \cdot  \norm{\flam - \fstar}_{L^{\infty}}^{m-2} \cdot
  \E \abs{\flam(x) - \fstar(x)}^2 \notag \\
  & \leq (C \lambda^{-1/(2\beta)})^{m} \cdot \left( C M_{\alpha} R \lambda^{\frac{t-\alpha}{2}} \right)^{m-2}
  \cdot C R^2 \lambda^{t} \\
  & \leq \frac{1}{2} m! \left( C R^2 \lambda^{t-1/\beta} \right) \left( C M_{\alpha} R \lambda^{\frac{t-\alpha-1/\beta}{2}}  \right)^{m-2}

\end{align*}
Consequently, applying \cref{lem:ConcenHilbert} with
\begin{align*}
  \sigma^2 = C R^2 \lambda^{t-1/\beta},\qquad L = C M_{\alpha} R \lambda^{\frac{t-\alpha-1/\beta}{2}}.
\end{align*}
yields
\begin{align*}
  \norm{T_\lambda^{-\hf} \left[\left(\gtl-T_X \flam \right)-\left(g^* - T \flam \right)\right]}_\caH
  \leq C R \sqrt {\frac{\lambda^{-1/\beta}}{n}} \left( 1 + M_\alpha \sqrt {\frac{\lambda^{-\alpha}}{n}}\right) \lambda^{t/2}.
\end{align*}
Since $\lambda = \Omega(n^{-\theta})$ for some $\theta < \beta$, choosing $\alpha > 1/\beta$ sufficiently close to $1/\beta$ yields
the desired result.

\paragraph{Proof for the case $t \leq 1/\beta$}
For this case, we apply a truncation technique.
To bound the extra error terms caused by truncation, we use the following proposition about the $L^q$ embedding of the RKHS
~\citep[Theorem 5]{zhang2023_OptimalityMisspecified}.
\begin{proposition}
  \label{prop:LqEMB}
  Under \cref{assu:RegularRKHS}, for any $0 < s \leq \alpha_0$ and $\alpha > \alpha_0$, we have embedding
  \begin{align}
  [\caH]
    ^s \hookrightarrow L^{q}(\caX,\dd \mu),\quad q = \frac{2\alpha}{\alpha - s}.
  \end{align}
\end{proposition}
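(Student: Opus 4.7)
The plan is to realize the claim as a case of Stein's complex interpolation applied to the analytic family of operators $\{T^{z/2}\}_{z\in\bbC}$ defined on the closed range of $T$ in $L^2$. The two endpoints I would use are, on the one hand, the trivial identity $T^0 = I : L^2 \to L^2$, and on the other hand the $L^\infty$-embedding of $[\caH]^\alpha$ for $\alpha > \alpha_0$, which by \cref{eq:EMB} together with the identification $[\caH]^\alpha = \Ran T^{\alpha/2}$ is exactly the statement that $T^{\alpha/2} : L^2 \to L^\infty$ is bounded with norm $M_\alpha < \infty$. Since $T$ is self-adjoint and positive, the purely imaginary powers $T^{iy/2}$ are unitary on $\overline{\Ran S_k} \subseteq L^2$, so the operator-valued map $z \mapsto T^{z/2}$ is uniformly bounded on the vertical lines $\Re z = 0$ and $\Re z = \alpha$ with norms $1$ and $M_\alpha$ respectively (the second line follows from $T^{(\alpha+iy)/2} = T^{\alpha/2}\, T^{iy/2}$).

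Next I would verify that $z \mapsto T^{z/2}$ is analytic and admissible in the sense required by Stein's theorem on the strip $0 \leq \Re z \leq \alpha$. The spectral decomposition \cref{eq:T_decomp} makes this formally transparent: for simple functions $f = \sum_{j\leq N} c_j e_j$ and $g = \sum_{j\leq N} d_j e_j$, the map $z \mapsto \langle g, T^{z/2} f\rangle_{L^2} = \sum_{j\leq N} \overline{d_j} c_j \lambda_j^{z/2}$ is entire and of at most polynomial growth on the strip, which furnishes the admissibility needed to apply Stein interpolation by a standard density argument.

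Applying Stein's interpolation theorem at the interior point $z = s \in (0,\alpha)$, with interpolation parameter $\theta = s/\alpha$, yields that $T^{s/2} : L^2 \to L^q$ is bounded with $1/q = (1-\theta)/2 + \theta \cdot 0 = (\alpha-s)/(2\alpha)$, i.e. exactly $q = 2\alpha/(\alpha-s)$, and with norm at most $M_\alpha^{s/\alpha}$. Unwinding the definition of $[\caH]^s$ via $f = T^{s/2} h$ with $\|h\|_{L^2} = \|f\|_{[\caH]^s}$ gives
\[
  \|f\|_{L^q} = \|T^{s/2} h\|_{L^q} \leq M_\alpha^{s/\alpha}\, \|f\|_{[\caH]^s},
\]
which is the desired continuous embedding $[\caH]^s \hookrightarrow L^q(\caX,\dd\mu)$.

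The main obstacle I expect is the boundary $L^\infty$ endpoint, since $L^\infty$ is not reflexive and the classical Riesz--Thorin statement requires either a duality argument or Stein's generalization. This is why I would prefer Stein's version of complex interpolation rather than Riesz--Thorin; compactness of $\caX$ and finiteness of $\mu$ ensure $L^\infty \hookrightarrow L^q$ for any $q$, which keeps the truncation arguments in Stein's proof harmless. A subtlety worth double-checking is that one must work on $\overline{\Ran S_k}$ rather than all of $L^2$, because $T$ may have a nontrivial kernel on $L^2$; this is immaterial for the embedding of $[\caH]^s$, since $[\caH]^s \subseteq \overline{\Ran S_k}$ by construction.
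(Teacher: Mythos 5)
Your argument is correct, but it takes a different route from the paper: the paper does not prove \cref{prop:LqEMB} at all — it imports it as Theorem 5 of \citet{zhang2023_OptimalityMisspecified}, whose proof runs through real interpolation, identifying $[\caH]^s$ with the real interpolation space $(L^2,[\caH]^\alpha)_{s/\alpha,2}$ and using $(L^2,L^\infty)_{\theta,2}=L^{q,2}\hookrightarrow L^{q}$ for $q\geq 2$. Your Stein complex-interpolation argument with the analytic family $T^{z/2}$ is a legitimate self-contained alternative: the only input is $M_\alpha<\infty$ for $\alpha>\alpha_0$ (which is how \cref{assu:RegularRKHS} enters, via the definition of the embedding index and \cref{eq:EMB}), it avoids Lorentz spaces and the identification of $[\caH]^s$ as a real interpolation space, and it yields the explicit norm bound $M_\alpha^{s/\alpha}$. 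The technical points you flag are the right ones and are routine: since both endpoint domains are $L^2$, admissibility can be checked on finite linear combinations of eigenfunctions (for which $z\mapsto\langle g,T^{z/2}f\rangle_{L^2}$ is a finite sum of terms $\lambda_j^{z/2}$, bounded and analytic on the strip), the boundary bounds are $\norm{T^{iy/2}}_{L^2\to L^2}\leq 1$ and $\norm{T^{(\alpha+iy)/2}}_{L^2\to L^\infty}\leq M_\alpha$ exactly as you state, and restricting to $\overline{\Ran S_k}$ (where $T$ is injective, so the imaginary powers are unitary) is harmless because $[\caH]^s\subseteq\overline{\Ran S_k}$. Either route proves the proposition; yours trades the black-box real-interpolation identification for Stein's theorem plus these standard verifications.
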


Now, let us consider $\Omega_{B} = \left\{ x \in \caX : \abs{\fstar(x)} \leq BR \right\}$ and
\begin{align*}
  \bar{\xi}(x) = \xi(x) \bm{1}_{\Omega_B}(x)
\end{align*}
as $\xi$ is defined in \cref{eq:a_Proof_xi}, where the choice of $B$ will be determined later.
Then,
\begin{align}
  \label{eq:a_Proof_TruncDecomp}
  \begin{aligned}
    \norm{\frac{1}{n} \sum_{i=1}^n \xi(x_i)-\mathbb{E} \xi(x)}_{\caH}
    & \leq \norm{\frac{1}{n} \sum_{i=1}^n \bar{\xi}(x_i) -  \E \bar{\xi}(x)}_{\caH}
    + \norm{\frac{1}{n} \sum_{i=1}^n \xi(x_i) \bm{1}_{\Omega_B^\complement}(x_i)}_{\caH} \\
    & \quad + \norm{\E \xi(x)\bm{1}_{\Omega_B^\complement}(x)}_{\caH}.
  \end{aligned}
\end{align}

For the first term in \cref{eq:a_Proof_TruncDecomp}, we can repeat the same argument in the first case with the extra bound
\begin{align*}
  \norm{\bm{1}\left\{ x \in \Omega_B \right\}(\flam - \fstar)}_{L^\infty} &\leq
  \norm{\flam }_{L^\infty} + \norm{\bm{1}\left\{ x \in \Omega_B \right\} \fstar}_{L^\infty} \\
  & \leq M_{\alpha} \norm{\flam }_{[\caH]^\alpha} + BR \\
  & \leq M_{\alpha} R \lambda^{\frac{t-\alpha}{2}} + BR,
\end{align*}
where we apply \cref{lem:FApproximation} in the last inequality.
Consequently, we get
\begin{align}
  \label{eq:a_Proof_TruncDecomp_A}
  \norm{\frac{1}{n} \sum_{i=1}^n \bar{\xi}(x_i) -  \E \bar{\xi}(x)}_{\caH}
  \leq C R \sqrt {\frac{\lambda^{-1/\beta}}{n}} \left( 1 +  \frac{M_\alpha\lambda^{-\alpha/2} + B \lambda^{-t/2} }{\sqrt {n}}\right) \lambda^{t/2},
\end{align}

For the second term in \cref{eq:a_Proof_TruncDecomp}, \cref{prop:LqEMB} together with Markov's inequality yields
\begin{align}
  \label{eq:a_Proof_TruncDecomp_Prob}
  \bbP_{\mu}(\Omega_B^\complement) = \bbP_{x \sim \mu}\left\{ \abs{\fstar(x)} > BR \right\}
  \leq (BR)^{-q} \norm{\fstar}_{L^q}^q,
\end{align}
where $q = \frac{2\alpha}{\alpha - s}$.
Then,
\begin{align*}
  \bbP\left\{ x_i \in \Omega_B,~i=1,\dots,n \right\}
  = \left( 1 - \bbP_{\mu}(\Omega_B^\complement) \right)^n
  \geq (1 - (BR)^{-q} \norm{\fstar}_{L^q}^q )^n.
\end{align*}
Consequently, as long as
\begin{align}
  \label{eq:a_Proof_TruncDecomp_B}
  B^{-q} = o\left( \frac{1}{n} \right), \qq{or equivalently,} B = \omega\left( n^{1/q} \right),
\end{align}
we have $\bbP\left\{ x_i \in \Omega_B,~i=1,\dots,n \right\} \to 1$ and thus
the second term in \cref{eq:a_Proof_TruncDecomp} vanishes with high probability when $n$ is large enough.

For the third term in \cref{eq:a_Proof_TruncDecomp},
\begin{align}
  \notag
  \norm{\E \xi(x)\bm{1}_{\Omega_B^\complement}(x)}_{\caH}
  & \leq \E \norm{\xi(x)\bm{1}_{\Omega_B^\complement}(x) }_{\caH} \\
  \notag
  & = \E \left[ \bm{1}_{\Omega_B^\complement}(x) (\fstar(x) - \flam(x)) \norm{\reg^{1/2}(T)k_x}_{\caH} \right] \\
  \notag
  & \stackrel{(a)}{\leq} C \lambda^{-1/(2\beta)} \E \left[ \bm{1}_{\Omega_B^\complement}(x)(\fstar(x) - \flam(x)) \right] \\
  \notag
  & \stackrel{(b)}{\leq} C \lambda^{-1/(2\beta)} \norm{\fstar-\flam}_{L^2} \bbP(\Omega_B^\complement)^{\frac{1}{2}} \\
  \label{eq:a_Proof_TruncDecomp_C}
  & \stackrel{(c)}{\leq} C \lambda^{-1/(2\beta)} \lambda^{t/2} B^{-q/2} \norm{\fstar}_{L^q}^{q/2}
  = C \norm{\fstar}_{L^q}^{q/2}\lambda^{t/2} \sqrt {\lambda^{-1/\beta} B^{-q}},
\end{align}
where in $(a)$ we apply \cref{eq:RegPhiKx}, in $(b)$ we apply Cauchy-Schwarz inequality
and in $(c)$ we apply \cref{lem:FApproximation} for the $L^2$ norm and \cref{eq:a_Proof_TruncDecomp_Prob} for the probability.

Finally, since $\theta < \beta$, we can choose $\alpha \in (\beta^{-1},\theta^{-1})$.
Then we have
\begin{align*}
  1-\theta/\beta > 1-\alpha \theta > 0,
\end{align*}
and we can choose some $l$ such that
\begin{align*}
  \frac{1}{q} = \frac{1}{2} (1 - \frac{t}{\alpha}) < l < \frac{1-\theta t}{2}.
\end{align*}
Now we set $B = n^l$.
Then \cref{eq:a_Proof_TruncDecomp_B} immediately holds.
For the term \cref{eq:a_Proof_TruncDecomp_A}, we have
\begin{align*}
  \sqrt {\frac{\lambda^{-1/\beta}}{n}}
  &= O\left( n^{-(1-\theta/\beta)/2} \right), \\
  \frac{\lambda^{-\alpha/2}}{\sqrt{n}}
  &= O\left( n^{-(1-\alpha \theta )/2} \right), \\
  \frac{B\lambda^{-t/2}}{\sqrt{n}}
  &= O\left( n^{-(\frac{1-\theta t}{2}-l)} \right);
\end{align*}
For the term \cref{eq:a_Proof_TruncDecomp_C}, we have
\begin{align*}
  \sqrt {\lambda^{-1/\beta} B^{-q}} = O\left( n^{-(ql - \theta/\beta)/2} \right) = O\left( n^{-(1-\theta/\beta)/2} \right).
\end{align*}
Therefore, the desired result follows since all the exponents are negative.
 
\section{Simulation}
\label{sec:appendix-simulation}
\label{sec:simulation}

We give a synthetic Fourier-model experiment illustrating the exact error curve in \cref{thm:Main}.
The model is fully explicit, so the deterministic bias-plus-variance prediction can be compared directly with the conditional generalization error.

We take \(\caX = [0,1)\) with the uniform measure and use the real Fourier basis
\begin{align*}
  e_0(x) &= 1, \quad
  e_{2m-1}(x) = \sqrt{2}\cos(2\pi m x), \quad
  e_{2m}(x) = \sqrt{2}\sin(2\pi m x), \quad m \geq 1.
\end{align*}
with paired power-law eigenvalues
\begin{align*}
  \lambda_0 &= 1, \quad
  \lambda_{2m-1} = \lambda_{2m} = (m+1)^{-\beta}, \quad \beta = 2,
\end{align*}
and coefficients
\begin{align*}
  \theta_j = c \lambda_j^{s/2} j^{-1/2},
  \qquad j \geq 1,
  \qquad s = 1.5,
  \qquad c = 0.7,
\end{align*}
with alternating signs.
The noise has mean zero and variance \(\sigma^2\) with \(\sigma = 0.15\).
We use \(257\) basis functions and \(24\) independent designs for each \(n\).
For each design, we evaluate \( \E\left[\norm{\hat{f}_{\lambda} - \fstar}_{L^2}^2 \middle| X \right] \) exactly by averaging over the noise analytically, and compare it with the theoretical value in \cref{eq:MainResult}.
We report kernel ridge regression and iterated ridge with \(p=2\).

\cref{fig:simulation-curves} shows the full error curve for \(n=128\) and \(n=256\).
For both filters, the empirical conditional error is U-shaped and closely tracks the deterministic bias-plus-variance curve.
The agreement is visibly tighter at \(n=256\), consistent with the \(1+o_{\bbP}(1)\) characterization in \cref{thm:Main}.

\begin{figure}[htbp]
  \centering
  \includegraphics[width=0.95\linewidth]{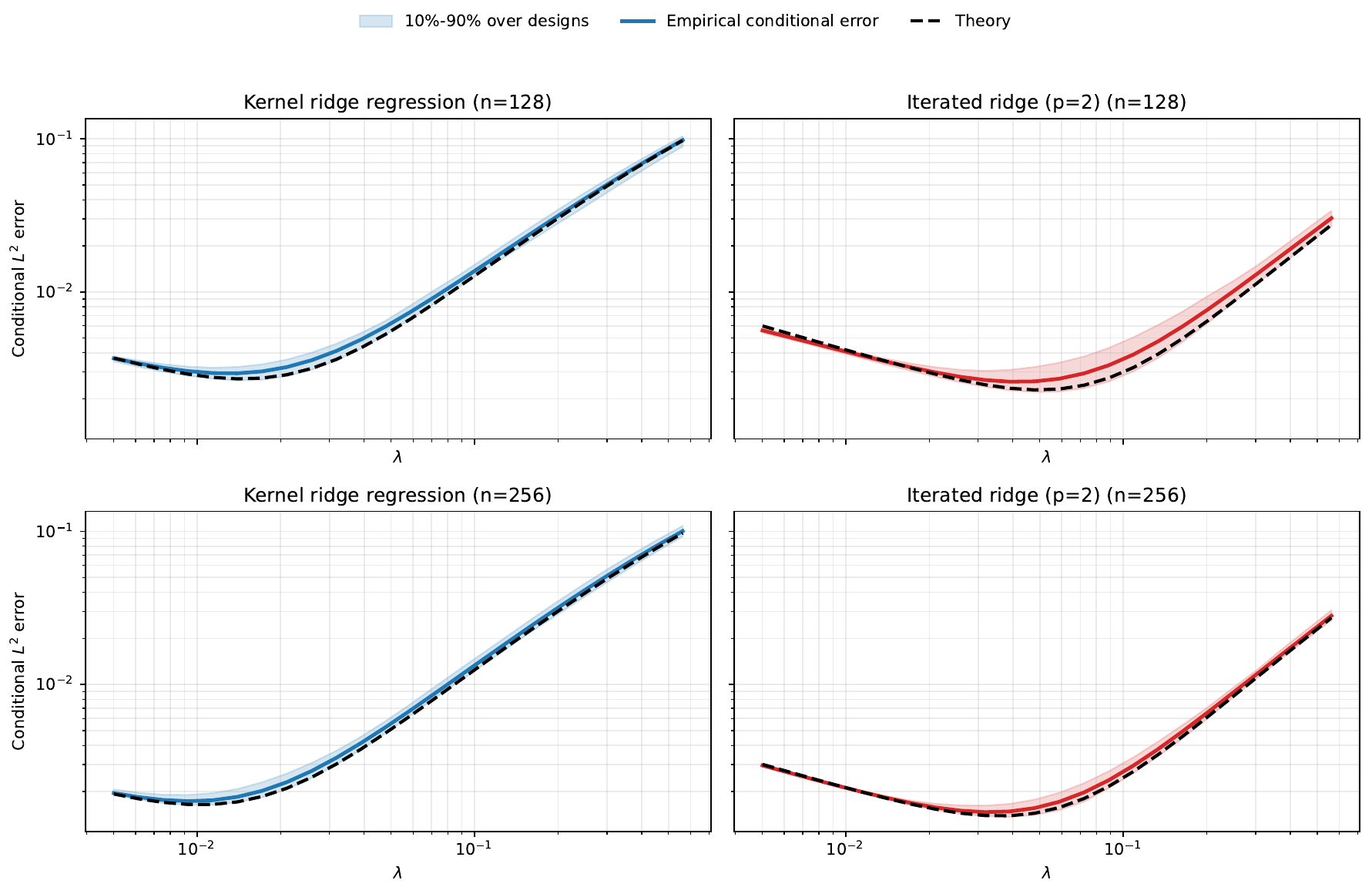}
  \caption{Synthetic error curves on the periodic Fourier model.
  The top row uses \(n=128\) and the bottom row \(n=256\); the left column is kernel ridge regression and the right column is iterated ridge with \(p=2\).
  The solid colored curve is the design-averaged conditional error, with the noise averaged analytically; the shaded region is the \(10\%\) to \(90\%\) band over designs; the dashed black curve represents the theoretical value.}
  \label{fig:simulation-curves}
\end{figure}

We also carried out an exploratory comparison for the non-analytic filters \(\reg^{\mr{cut}}\) and \(\reg^{\mr{clip}}\) introduced in \cref{sec:conclusion}, still using the same deterministic benchmark.
Nevertheless, \cref{fig:nonanalytic-exploration} shows that spectral clipping remains quite close to the same bias-plus-variance curve, while spectral cut-off still shows a visible approximation but in a rougher and less uniform way, especially because of its discontinuous thresholding.

\begin{figure}[htbp]
  \centering
  \includegraphics[width=0.95\linewidth]{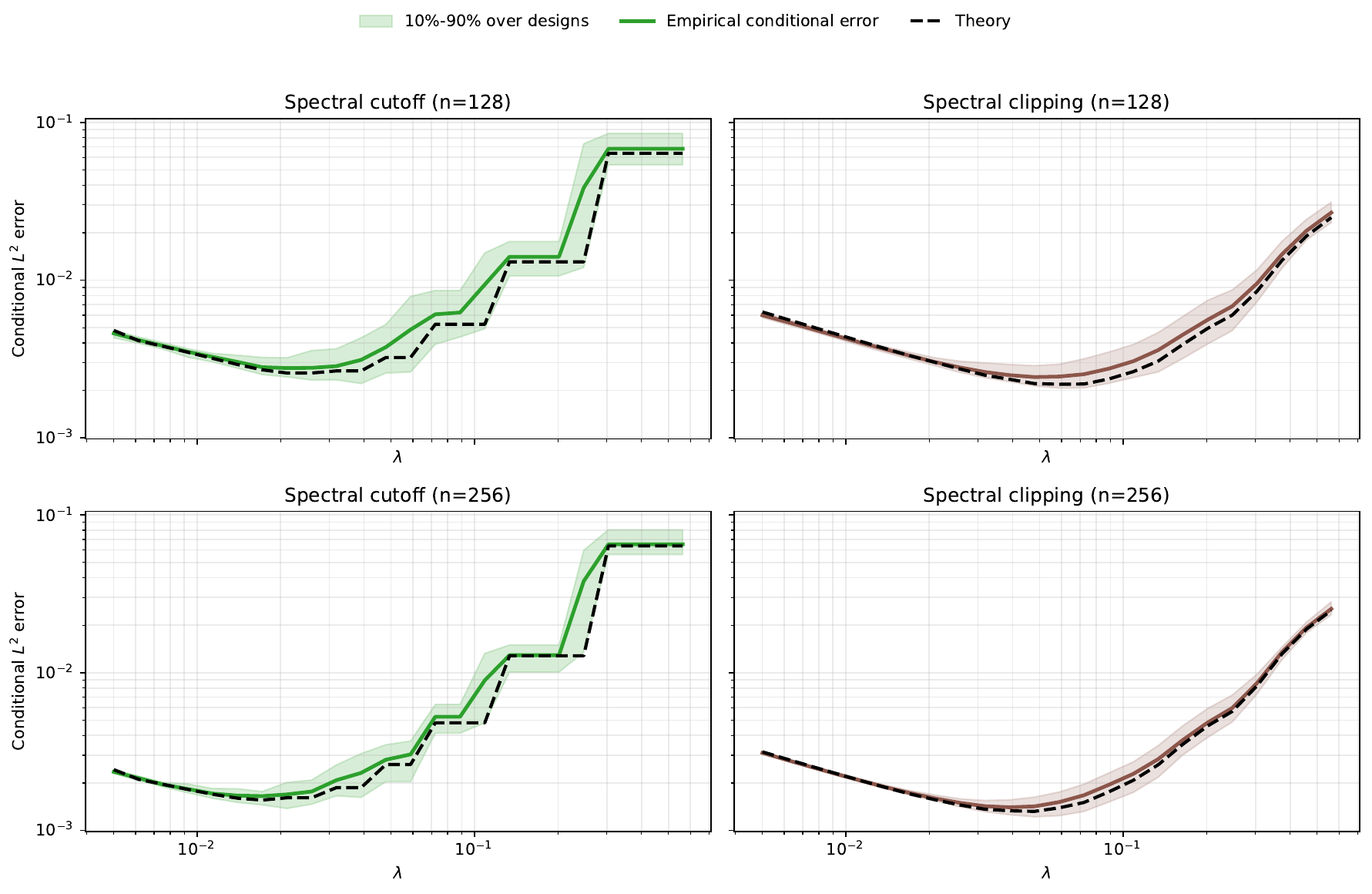}
  \caption{Exploratory error curves for the non-analytic filters \(\reg^{\mr{cut}}\) and \(\reg^{\mr{clip}}\) on the same periodic Fourier model.}
  \label{fig:nonanalytic-exploration}
\end{figure}
 
\section*{Acknowledgement}
This work is supported in part by the National Natural Science Foundation of China (Grant 92370122, Grant 11971257) and the Beijing Natural Science Foundation (Grant Z190001).

\end{document}